\renewcommand{\Re}{\mathbb{R}}
\newcommand{\Na}{\mathbb{N}}
\newcommand{\ex}[1]{\mathbb{E}\left[#1\right]}
\newtheorem{theorem}{\bf Theorem}
\newtheorem{remark}{\bf Remark}
\newtheorem{lemma}[theorem]{\bf Lemma}
\newtheorem{corollary}[theorem]{\bf Corollary}
\newtheorem{assumption}{Assumption}
\newcommand{\genComment}[2]{\ifnum\comments=1{\color{#1}{\textsf{\footnotesize #2}}}\fi}
\DeclareMathOperator*{\argmin}{arg\,min} %
\newcommand{\Real}{\mathbb{R}}
\begin{document}
	
	%
	
	%
	\runningauthor{Yuntian Deng, Xingyu Zhou,  Baekjin Kim, Ambuj Tewari, Abhishek Gupta, Ness Shroff}
	
	\twocolumn[
	\aistatstitle{Weighted Gaussian Process Bandits for Non-stationary Environments}
	
	\aistatsauthor{ Yuntian Deng \And Xingyu Zhou \And  Baekjin Kim}
	
	\aistatsaddress{ The Ohio State University \And  Wayne State University \And University of Michigan}
	
	\aistatsauthor{ Ambuj Tewari \And Abhishek Gupta \And Ness Shroff }
	
	\aistatsaddress{ University of Michigan \And  The Ohio State University \And The Ohio State University} ]

	\begin{abstract}
		In this paper, we consider the Gaussian process (GP) bandit optimization problem in a non-stationary environment. To capture external changes, the black-box function is allowed to be time-varying within a reproducing kernel Hilbert space (RKHS). To this end, we develop WGP-UCB, a novel UCB-type algorithm based on weighted Gaussian process regression. A key challenge is how to cope with infinite-dimensional feature maps. To that end, we leverage kernel approximation techniques to prove a sublinear regret bound, which is the first (frequentist) sublinear regret guarantee on weighted time-varying bandits with general nonlinear rewards. This result  generalizes both non-stationary linear bandits and standard GP-UCB algorithms. Further, a novel concentration inequality is achieved for weighted Gaussian process regression with general weights. We also provide universal upper bounds and weight-dependent upper bounds for weighted maximum information gains. These results are of independent interest for applications such as news ranking and adaptive pricing, where weights can be adopted to capture the importance or quality of data. Finally, we conduct experiments to highlight the favorable gains of the proposed algorithm in many cases when compared to existing methods.
	\end{abstract}

	
	
	
	\section{Introduction}
	There has been significant interest in developing the theoretical foundations and practical algorithms for solving bandit optimization problems. This interest has been driven by many practical applications, where one needs to sequentially select query points to maximize the cumulative reward \citep{bubeck2012regret}. Such sequential decision-making is usually  based on noisy feedback from black-box functions defined over a possibly large domain space.

	The most classical model is the multi-armed bandit (MAB) \citep{robbins1952some} where query points are independent and finite. It is then extended to stochastic linear bandits \citep{auer2002using, abbasi2011improved}, where the black-box function is linear, and query points become non-orthonormal.
	Gaussian process bandits \citep{srinivas2009gaussian, chowdhury2017kernelized} further generalizes the previous two by allowing general black-box functions (e.g., non-linear and non-convex) by utilizing the representation power of the reproducing kernel Hilbert space (RKHS). These models of online decision-making have becoming ubiquitous in practical applications, such as change-points detection \citep{liu2018change}, personalized news recommendation \citep{li2010contextual}, and portfolio selection \citep{huo2017risk}.
	
	In real-world scenarios, the unknown function is often not fixed but varies over time. For example,  the channel conditions in wireless networks are time-varying, and thus the Quality of Service is not static \citep{zhou2019non}. In recommender systems, users' preferences may change with growth, and the corresponding reward function for any recommending action is time-varying \citep{li2010contextual}.  This has motivated recent studies in online-decision making under time-varying environments, such as in MAB \citep{besbes2014stochastic}, linear bandits \citep{kim2020randomized}, and Gaussian process bandits \citep{bogunovic2016time}. Roughly speaking, there are three commonly used techniques to handle non-stationarity -- \emph{restarting, sliding window }and \emph{weighted penalty}. By restarting, the learning agent resets the learning process once in a while to directly discard all the old information at once (e.g., \citep{zhao2020simple, besson2019generalized} ) while under the sliding window, the learning agent gradually discard old information by only using the most recent data in the learning process (e.g., \citep{cheung2019learning}). Recently, \cite{russac2019weighted} proposes the weighted penalty approach, which puts more weight on the most recent data while penalizing outdated information via less weight. This approach can be viewed as a `soft' way of discounting outdated information rather than completely dropping it as in restarting and sliding window methods. The weighted penalty approach has been shown to be beneficial when the black-box function is linear \citep{russac2019weighted} or general continuously differentiable Lipschitz function \citep{russac2020algorithms}. However, it remains an open problem whether one can achieve the advantages of the weighted penalty approach for general black-box functions, in particular the ones in an RKHS, which enjoys the uniform approximation of an arbitrary continuous function~\citep{micchelli2006universal} (under a proper choice of the kernel). 
	
	Recently, \cite{wei2021non} provides optimal results for (generalized) linear bandits, via maintaining different instances of base algorithms. It remains an open problem whether Gaussian Process bandit can achieve its regret bound. To be specific, we do not know whether GP bandits satisfies Assumption 1 in this paper and what is the form of $C(t)$ and $\Delta(t)$ for GP bandits. If $C(t)$ does not have the same polynomial form  as its Theorem 2, its result cannot be extended to GP bandits. 
	
	In this paper, we take the first step to tackling this fundamental problem by proposing a novel weighted penalty algorithm with rigorous regret guarantees in the context of Gaussian process bandits. This is achieved by overcoming several key challenges. First, to fully utilize the representation power of an RKHS for general functions, our choice of kernel often has an infinite-dimensional feature space (e.g., Squared Exponential kernel). In this case, all existing regret analysis breaks down as  regret bounds in these works have an explicit, growing
	dependence on the feature dimension (e.g., $d$). Moreover, the standard approach of resolving the dependence on $d$ in the regret bounds for GP bandits does not apply in our case. This is because we are dealing with a weighted GP regression rather than a standard one, which directly raises three substantial challenges. First, we need to find a new self-normalized concentration inequality to show that the posterior mean under the weighted GP regression is still close to the true function in a certain sense, which helps to translate the cumulative regret into a sum of predictive variance. Then,  we need to find a new technique to bounding this term by a properly defined so-called Maximum Information Gain (MIG). Finally, existing bounds on MIG also do not apply, and thus we have to derive the new ones in our weighted setting.
	
	\textbf{Contributions.} In summary, our contributions can be summarized as follow. 
	
	First, we develop a general framework for the regret analysis under weighted GP regression by overcoming the aforementioned challenges. In particular, for general weighted GP regression, we establish the first self-normalized concentration inequality. Then, by novel applications of Quadrature Fourier features (QFF) approximation and Mercer's theorem, we present the first bounds for the sum of predictive variance and the corresponding MIGs in the weighted case. These results are not only the cornerstones in our setting, but also could be useful for other general GP regression settings. 
	
	Second, we propose a new algorithm - Weighted Gaussian Process Upper Confidence Bound (WGP-UCB) for non-stationary bandit optimization. It generalizes the standard GP-UCB algorithms \citep{srinivas2009gaussian, chowdhury2017kernelized} in stationary environments to the time-varying case. This is also a significant generalization of discounted linear bandit \citep{russac2019weighted} and discounted generalized linear UCB \citep{russac2020algorithms} by allowing the payoff function to be within a much broader class of functions (thanks to the use of RKHS).
	
	Third, by a proper choice of the weighted scheme in WGP-UCB, we establish the first regret bound for the weighted penalty algorithm in the context of GP bandits by utilizing our novel results for the weighted GP regression. In particular, we have a regret bound $O(\dot \gamma_T^{7/8} B_T^{1/4} T^{3/4})$ if the variation budget $B_T$ is known, and a regret bound $O(\dot \gamma_T^{7/8} B_T T^{3/4})$ if $B_T$ is unknown, for both abruptly-changing and slowly-varying environments where $\dot \gamma_T$ is a properly defined maximum information gain. Note that this result directly recovers the existing weighted penalty results as special cases (e.g., a choice of the linear kernel). 
	
	
	
	
	\textbf{Related Work.}
	Online learning in changing environments has been well studied. In traditional MAB, \citep{auer2019adaptively} considers the situation where reward distributions may change abruptly several times, which is usually referred to switching bandits \citep{garivier2011upper} or abruptly-changing environments. The regret bound usually depends on the number of changes, which relies on change-point detection \citep{cao2019nearly,liu2018change}. An alternative approach to quantify time-variations is variation budget \citep{besbes2014stochastic, besbes2015non, besbes2019optimal}, which captures the cumulative temporal variation of system parameters for the total time horizon.

	In the stochastic linear bandits setting, we recall that there are mainly three strategies to deal with non-stationarity : \emph{restarting} \citep{zhao2020simple}, \emph{sliding window} \citep{cheung2019learning}, and \emph{weighted penalty}.
	The last strategy leverages an increasing weight sequence to emphasize the impact of recent observations while gradually forgetting past observations. In \cite{russac2019weighted}, exponentially increasing weights are used to develop the D-LinUCB algorithm based on the weighted least square estimator. Two variants of this algorithm are developed in \cite{kim2020randomized} based on perturbation techniques. Recently, a technical flaw in these three works \citep{cheung2019learning, russac2019weighted, zhao2020simple} was identified in \cite{zhao2021non}, which corrects the order of regret bounds in all three algorithms. For generalized linear models, sliding window and weighted penalty algorithms are developed in \cite{russac2020algorithms}, where the payoff functions are required to be continuously differentiable and Lipschitz.
	
	
	For the Gaussian process bandits, there are two different assumptions on the black-box functions. The Bayesian setting assumes that the unknown function is a sample from a GP with a known kernel, while the frequentist setting (agnostic setting in \cite{srinivas2009gaussian}) assumes that the unknown function is a fixed function in a reproducing kernel Hilbert space (RKHS) with bounded norm. Under the Bayesian setting, \cite{bogunovic2016time} proposes a 
	discounted algorithm and a restarting algorithm, assuming that the evolution of Gaussian process obeys a simple Markov model. Under the frequentist setting, \cite{zhou2021no} introduces restarting and sliding window algorithms and obtains regret bounds based on maximum information gain. However, due to difficulties arising from the time variation and infinite-dimensional feature maps, the weighted penalty algorithm has not been studied yet under the frequentist setting, which is also the future direction as listed in \cite{bogunovic2016time}.

	\section{Problem Statement and Preliminaries}

	In this section, we introduce the setting of our problem and necessary preliminaries.
	
	We consider the non-stationary problem of sequentially maximizing reward function $f_t: D \xrightarrow{} \Re$ over a set of decisions $D \subset \Re^d$ . At each discrete time slot $t=1,2,\ldots$, the learning agent selects an action (query point) $x_t \in D$ and the reward $f_t(x_t)$ is observed through a noisy channel as $y_t=f_t(x_t)+\epsilon_t$
	where  $\epsilon_t$ is the zero mean noise. Denote the history as $\mathcal H_{t-1} = \{ (x_s, y_s): s \in \{1,2,\ldots, t-1\}\}$. Conditioned on history $\mathcal H_{t-1}$, the noise sequence $\epsilon_t$ is $R$-sub-Gaussian for a fixed constant $R\geq 0 $, i.e. $\forall t>1, \forall \lambda \in \Re, \ex{ e^{\lambda \epsilon_t}|\mathcal F_{t-1} } \leq \text{exp}(\frac{\lambda^2 R^2}{2})$
	where $\mathcal F_{t-1} = \sigma \left( \mathcal H_{t-1}, x_t \right)$ is the $\sigma$-algebra generated by actions and rewards observed so far. 
	
	The objective of the learning agent is to maximize the cumulative reward $\sum_{t=1}^T f_t(x_t)$. This is equivalent to  minimize its \emph{dynamic regret} $R_T$, which  is defined as $R_T = \sum_{t=1}^T f_t(x_t^*) - f_t(x_t)$ where $x_t^*= \arg \max _{x\in D} f_t(x)$ is the attainable best action at time $t$ for function $f_t(\cdot)$.

	\textbf{Regularity Assumptions}: We assume that $f_t$ is a fixed function in a Reproducing Kernel Hilbert Space (RKHS) with a bounded norm. Specifically, we assume that $D$ is compact. The RKHS, denoted by $H_k(D)$, is completely specified by its kernel function $k(\cdot, \cdot)$, with an inner product $\left< \cdot, \cdot \right>_{H}$ satisfying the reproducing property: $f(x) = \left <f, k(x, \cdot) \right >_H$ for all $f \in H_k(D)$. The RKHS norm is given by $\|f\|_H := \sqrt{ \left< f, f \right>_H }$. We assume that $f_t$ at each time $t$ is bounded by $\|f_t\|_H \leq B$ for a fixed constant $B$. Moreover, we assume a bounded variance by restricting $k(x,x) \leq 1$.  The assumptions hold for practically relevant kernels.
	One concrete example is Squared Exponential kernel, defined as 
	$k_{SE}(x,x') = \text{exp}(-s^2/2l^2)$
	where scale parameter $l>0$ and $s=\|x-x'\|_2$ specifies distance between two points.

	
	\textbf{Time-varying Budget}: As the environment is time-varying, we assume that the total variation of $f_t$ satisfies the following budget, $\sum_{t=1}^{T-1} \|f_{t+1}-f_t\|_H \leq B_T$, including both abruptly-changing and slowly-changing environments.
	
	\textbf{Maximum Information Gain}: We use $I(y_A;f_A)$ to denote the mutual information between $f_A=[f(x)]_{x\in A}$ and $y_A = f_A + \epsilon_A$, which quantifies the reduction in uncertainty about $f$ after observing $y_A$ at points $A \subset D$. Then the maximum information gain \citep{srinivas2009gaussian} is defined as,  $\gamma_n := \max_{A\subset D: |A|=n} I(y_A;f_A) = \max_{A\subset D: |A|=n} \frac{1}{2} \log \det(I + \lambda^{-1} K_A)$, where $K_A= [k(x,x')]_{x,x'\in A}$.

	
	\textbf{Agnostic setting}: We recall the agnostic setting in standard GP-UCB algorithm \citep{chowdhury2017kernelized} for the stationary environment.  Gaussian process (GP) and Gaussian likelihood models are used to design this algorithm. $GP_D(0,k(\cdot, \cdot ))$ is the prior for reward function $f_t$. The noise $\epsilon_t$ is drawn independently from $\mathcal N(0,\lambda)$. Conditioned on the history $\mathcal H_t$, it has the posterior distribution of $f_t$, $GP_D \left(\mu_t(\cdot),\sigma_t^2(\cdot) \right)$, where the posterior mean and variance are defined as
	\begin{align}
	\mu_t(x)  &= k_t(x)^T (K_t + \lambda I )^{-1} y_{1:t} \label{eq:mean_var}\\
	\sigma^2_t(x) &= k(x,x) -k_t(x)^T (K_t + \lambda I )^{-1} k_t(x)  \label{eq:sta_var}
	\end{align} 
	where $y_{1:t}\in \Re^t$ is the reward vector $[y_1,\ldots,y_t]^T$. For set of sampling points $A_t=\{x_1, \ldots, x_t\}$, the kernel matrix is $K_t = [k(x,x')]_{x,x' \in A_t} \in \Re^{t \times t}$ and the vector $k_t(x)=[k(x_1,x), \ldots, k(x_t,x)]^T \in \Re^t$.
	The GP prior and Gaussian likelihood are only used for algorithm design and do not affect the setting of reward function $f_t \in H_k(D)$ and noise $\epsilon_t$ (i.e., could be sub-Gaussian).

	\section{Weighted Gaussian Process Regression}
	\label{sec:WGP}
	In this section, we introduce a general weighted algorithm based on weighted GP regression. The key difference with standard GP regression is that we allow different weight for each data point. It is worth noting that this result is fairly generic in the sense that it can be applied in general situations where weights are used to associate with `importance' in the data points. E.g., more weights are assigned to observations that are less noisy in weighted ridge regression \citep{zhou2021nearly}.
	
	In particular, the weighted GP regression under a changing regularizer is defined by 
	\begin{align*}
	\hat{f} =  &\argmin_{f \in H_k(D)} \sum_{s=1}^{t-1} w_s (y_s - f(x_s) )^2 + {\lambda_t} \| f \|_{\mathcal H}^2
	\end{align*}
	where each data point is associated with a weight in computing the least square estimate. Due to this, standard posterior mean and variance in~\eqref{eq:mean_var} and \eqref{eq:sta_var} fail to capture the statistics of $\hat{f}$. To this end, we have to carefully adjust the kernel vector and kernel matrix in~\eqref{eq:mean_var} by incorporating proper weights. Specifically, 
	let $W=\text{diag}(\sqrt{w_1}, \sqrt{w_2}, \ldots, \sqrt{w_t})  \in \Re^{t\times t}$.
	Then we define the weighted version of kernel matrix $\tilde K_t := W K_t W^T $ and weighted kernel vector $\tilde k_t(x) :=  W k_t(x)$. We further define weighted observation $\tilde y_{1:t} := W y_{1:t}= [\sqrt{w_1} y_1, \ldots, \sqrt{w_t} y_t]^T$. Finally, a weight-dependent regularizer is defined by $ \lambda_t = \lambda w_t$. Then,  $\hat{f}$ and its uncertainty are given by the following equations, respectively.
	\begin{align}
	\tilde \mu_t(x)  &= \tilde k_t(x)^T (\tilde K_t + \lambda_t I_t )^{-1} \tilde y_{1:t} \label{equ:mean}\\
	\tilde{\sigma}^2_t(x) &=  k(x,x) -  \tilde  k_t(x)^T (\tilde K_t + \lambda_t I_t )^{-1} \tilde  k_t(x)
	\label{equ:variance}
	\end{align}
	One can see that~\eqref{equ:mean} and \eqref{equ:variance} share the same structure as the standard ones in~\eqref{eq:mean_var} and \eqref{eq:sta_var}. This nice result directly enables us to design a UCB-type learning algorithm in the weighted case as follows. 
	
	\textbf{Algorithm}: The Weighted Gaussian Process-UCB algorithm (WGP-UCB) (Algorithm~\ref{alg:weighted}) uses a combination of the weighted posterior mean $\tilde \mu_{t-1}(x)$ and weighted standard deviation $\tilde \sigma_{t-1}(x)$ to construct an upper confidence bound (UCB) over the unknown function. It then chooses an action $x_t$ at time $t$ as follows:
	\begin{align} \label{equ:x_t}
	x_t := \arg \max_{x \in D} \tilde \mu_{t-1} (x) + \beta_{t-1} \tilde \sigma_{t-1} (x)
	\end{align}
	where $ \beta_t =   B+  \frac{1}{\sqrt{\lambda}}  R \sqrt{  2\log(\frac{1}{\delta}) +  2 \bar \gamma_t}$ and $0 < \delta < 1$. We note that this algorithm enjoys the same simplicity as the standard non-weighted one~\citep{chowdhury2017kernelized,srinivas2009gaussian}. Meanwhile, there are substantial differences. In particular, besides the new posterior mean and variance, we also need to replace the MIG in the confidence width $\beta_t$ by a weighted one, i.e., $\bar{\gamma}_t$. This term is defined as $\bar \gamma_t = \max_{A\subset D: |A|=t} \frac{1}{2} \log \det (I + \alpha_t^{-1} W^2 K_A W^{2T} )= \max_{A\subset D: |A|=t} \frac{1}{2} \log \det(I + \alpha_t^{-1} \bar K_t )$, where $W^2=\text{diag}(w_1, w_2, \ldots, w_t)  \in \Re^{t\times t}$, $\bar K_t= W^2 K_t W^{2T} $ is the double-weighted kernel matrix and $\alpha_t = \lambda w_t^2$.
	

	
	In the following sections, we will develop a general framework for the regret analysis in the weighted GP regression, which recovers the standard analysis as special cases by choosing $w_s = 1$ for all $s\in[T]$~\citep{chowdhury2017kernelized,srinivas2009gaussian}. From a high-level perspective, the typical recipe of deriving regret bounds in GP bandits has three main steps. \textbf{(I)} One needs to first show that the true underlying function is close to the posterior mean within some distance given by the standard derivation. This concentration result is the cornerstone and is typically achieved by relying on the so-called self-normalized inequality. 
	\textbf{(II)} Based on this concentration, one can bound the cumulative regret by a sum of predictive variance terms. This can be further upper bounded by the MIG. \textbf{(III)} The MIG will finally be upper bounded depending on the choice of kernels, which leads to the final regret bound. However, all the three key steps face new challenges in our weighted case, and hence we will conquer them one by one.

	
	

	\begin{algorithm}[t]
		\SetAlgoLined
		\SetKwInOut{Input}{Input}\SetKwInOut{Initialization}{Initialization}
		\Input{parameters $k(\cdot, \cdot),B,R,\lambda,\delta$, weights $\{\omega_t\}_{t=1}^T$.} 
		\BlankLine
		\For{$t\geq 1$}{
			Set $ \beta_{t-1} =   B+  \frac{1}{\sqrt{\lambda}}  R \sqrt{  2\log(\frac{1}{\delta}) +  2 \bar \gamma_{t-1}}$  \;
			Choose $x_t = \arg \max_{x \in D} \tilde \mu_{t-1} (x) + \beta_{t-1} \tilde \sigma_{t-1} (x)$\;
			Observe reward $y_t = f_t(x_t) + \epsilon_t$\;
			Update $\tilde \mu_t(x)$ and $\tilde \sigma_t(x)$ according to Equation \eqref{equ:mean} and \eqref{equ:variance}.
		}
		\caption{Weighted Gaussian Process UCB (WGP-UCB)}{\label{alg:weighted}}
	\end{algorithm}

	\section{Confidence Bounds}
	In this section, we focus on deriving a new concentration inequality in the weighted case to show that the new posterior mean is still close to the true function in the non-stationary environment, which resolves the challenge \textbf{(I)} listed above. In particular, we present concentration results for both stationary and non-stationary environments. 
	
	
	
	To start with, we introduce a particular feature map via Mercer's Theorem, which will only be used in our analysis. The following version of Mercer's theorem (described by Theorem \ref{thm: feature} next) is adapted from Theorem 4.1 and 4.2 in \cite{kanagawa2018gaussian}, which roughly says that the kernel function can be expressed in terms of the eigenvalues and eigenfunctions under mild conditions.
	\begin{theorem} \label{thm: feature}
		Let $\mathcal{X}$ be a compact metric space, $k:\mathcal{X} \times \mathcal{X} \to \Real$ be a continuous kernel with respect to a finite Borel measure $\nu$ whose support is $\mathcal{X}$. Then, there is a countable sequence $(\lambda_i,\phi_i)_{i\in \mathbb{N}}$, where $\lambda_i \ge 0$ and $\lim_{i\to \infty}\lambda_i = 0$ and $\{\phi_i\}$ forms an orthonormal basis of $L_{2,\nu}(\mathcal{X})$, such that 
		\begin{align}  \label{equ: kernel}
		k(x,x')=\sum_{m=1}^\infty c_m \phi_m(x) \phi_m(x')
		\end{align}
		where $c_m \in \Re^+$ and $\phi_m \in \mathcal{H}$ for $m \geq 1$. $\{c_m\}_{m=1}^\infty$ is the eigenvalue sequence in decreasing order.
		$\{\phi_m\}_{m=1}^\infty$ are the eigenfeatures (eigenfunctions) of $k$. The RKHS can also be represented in terms of $\{c_m, \phi_m \}_{m=1}^\infty$. i.e., 
		\begin{align*}
		\mathcal{H} = \big\{f(\cdot)=\sum_{m=1}^\infty \theta_m \sqrt{c_m} \phi_m (\cdot):
		\|f\|_H : = \|\theta\|_2  < \infty  \big\}.
		\end{align*}
		
		
	\end{theorem}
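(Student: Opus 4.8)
The plan is to follow the classical spectral-theoretic route to Mercer's theorem, specialized to the RKHS setting used here. First I would associate to the kernel the integral operator $T_k : L_{2,\nu}(\mathcal{X}) \to L_{2,\nu}(\mathcal{X})$ defined by $(T_k g)(x) = \int_{\mathcal{X}} k(x,x') g(x')\, d\nu(x')$. Since $\mathcal{X}$ is compact and $k$ is continuous, $k$ is bounded on $\mathcal{X}\times\mathcal{X}$, and as $\nu$ is a finite Borel measure the kernel lies in $L_{2,\nu\otimes\nu}(\mathcal{X}\times\mathcal{X})$; hence $T_k$ is Hilbert--Schmidt and in particular compact. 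Symmetry of $k$ gives self-adjointness, and the positive-definiteness inherited from the RKHS structure gives positivity of $T_k$.

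Next I would invoke the spectral theorem for compact, self-adjoint, positive operators on the separable Hilbert space $L_{2,\nu}(\mathcal{X})$. This produces a countable orthonormal system of eigenfunctions $\{\phi_m\}$ with nonnegative eigenvalues $\{c_m\}$ arranged in decreasing order and satisfying $\lim_{m\to\infty} c_m = 0$. Completing the eigenfunctions associated with the strictly positive eigenvalues by an orthonormal basis of $\ker T_k$ yields the full orthonormal basis of $L_{2,\nu}(\mathcal{X})$ asserted in the statement (the zero eigenvalues account for the distinction between $\lambda_i \ge 0$ and $c_m \in \Re^+$). The regularity step, that each $\phi_m$ with $c_m>0$ has a continuous representative lying in $\mathcal{H}$, follows by writing $\phi_m = c_m^{-1} T_k \phi_m$ and invoking continuity of $k$ with dominated convergence.

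The heart of the argument, and the step I expect to be the main obstacle, is establishing absolute and uniform convergence of $\sum_m c_m \phi_m(x)\phi_m(x')$ to $k(x,x')$ rather than mere $L_2$ convergence. The standard device is to examine the truncated remainders $k_n(x,x') = k(x,x') - \sum_{m=1}^n c_m \phi_m(x)\phi_m(x')$, verify that each $k_n$ is itself a continuous positive-definite kernel so that $k_n(x,x) \ge 0$, and conclude that the diagonal partial sums $\sum_{m=1}^n c_m \phi_m(x)^2$ form a monotone increasing sequence of continuous functions bounded above by the continuous function $k(x,x)$. Dini's theorem then upgrades monotone pointwise convergence on the compact set $\mathcal{X}$ to uniform convergence on the diagonal, and the Cauchy--Schwarz estimate $\big| \sum_{m>n} c_m \phi_m(x)\phi_m(x') \big| \le \big(\sum_{m>n} c_m \phi_m(x)^2\big)^{1/2}\big(\sum_{m>n} c_m \phi_m(x')^2\big)^{1/2}$ transfers uniform control off the diagonal, giving the expansion \eqref{equ: kernel}.

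Finally, the RKHS characterization follows from the feature-map viewpoint: the map $x \mapsto (\sqrt{c_m}\,\phi_m(x))_{m\ge 1}$ is a feature map into $\ell^2$ that reproduces $k$, so by uniqueness of the RKHS the space $\mathcal{H}$ coincides with $\{ \sum_m \theta_m \sqrt{c_m}\,\phi_m : \|\theta\|_2 < \infty \}$ with $\|f\|_H = \|\theta\|_2$. Since this is a classical result, the residual work is largely bookkeeping to align normalization and ordering conventions with the rest of the paper, so I would defer the full verification of the estimates to Theorems 4.1 and 4.2 of \cite{kanagawa2018gaussian} rather than reproduce every step.
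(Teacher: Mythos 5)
The paper never proves this theorem internally: it is stated as an adaptation of Theorems 4.1 and 4.2 of \cite{kanagawa2018gaussian}, so the paper's ``proof'' is the citation itself, and your sketch --- integral operator, spectral theorem, Dini's theorem on the diagonal plus Cauchy--Schwarz off it, then the $\ell^2$ feature-map characterization of $\mathcal{H}$ --- is precisely the standard argument underlying those cited results, ending with a deferral to the same reference, so it is consistent with the paper's treatment. The one compressed point worth noting is that Dini's theorem presupposes the diagonal partial sums $\sum_{m\le n} c_m \phi_m(x)^2$ converge pointwise \emph{to} $k(x,x)$, not merely that they are bounded above by it; establishing that equality (via vanishing of the residual positive-definite kernel) is the step where the assumption that $\nu$ has full support is actually used, and your sketch never invokes that hypothesis.
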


	
	Based on this theorem, we can explicitly define a feature map as $\varphi(x) = [\varphi_1(x),\varphi_2(x), \ldots]^T  \in \Re^M$ ($M$ may be infinity) where $\varphi_m = \sqrt{c_m} \phi_m \in \mathcal{H}$ and $\varphi_m:= D \rightarrow{} \Re$.
	Given $\theta=[\theta_1, \theta_2, \ldots]^T \in \Re^M$, we have reward function $f(x)=\theta^T \varphi(x)$ and kernel function $k(x,x')=  \varphi^T(x) \varphi(x') \in \Re$. Define $\Phi_t := [\varphi(x_1), \ldots, \varphi(x_t)]^T \in \Re^{t\times M}$ and we get the $t\times t$ kernel matrix $K_t= \Phi_t \Phi_t^T$ and  $k_t(x) = \Phi_t \varphi(x) \in \Re^t$.
	
	In our weighted case, we have the weighted feature matrix $\tilde \Phi_t := W \Phi_t$, weighted kernel vector $\tilde k_t(x) = \tilde \Phi_t \varphi(x)$ and weighted kernel matrix $\tilde K_t= \tilde \Phi_t \tilde \Phi_t^T $.  
	Additionally,  the double-weighted feature matrix $\bar \Phi_t := W^2 \Phi_t$, and double-weighted kernel matrix $\bar K_t= \bar \Phi_t \bar \Phi_t^T $. Besides, we further define weighted Gram matrix $V_t = \sum_{s=1}^t w_s \varphi(x_s) \varphi(x_s)^T + \lambda_t I_\mathcal{H}\in  \Re^{M\times M}$ and double-weighted Gram matrix $\tilde V_t = \sum_{s=1}^t w_s^2 \varphi(x_s) \varphi(x_s)^T + \alpha_t I_\mathcal{H}  \in \Re^{M\times M}$. The full list of notations is deferred to Appendix \ref{sec: notations}.
	
	This explicit feature map enables us to directly establish the following result, which states that our weighted GP bandit generalizes the weighted linear bandit. The details and proof are stated in Appendix \ref{sec: equivalent} Lemma \ref{lem: equivalent}, where $\hat{\theta}_t = V_t^{-1} \sum_{s=1}^t w_s \varphi(x_s) y_s$. 
	\begin{remark}
		The weighted linear bandits in \cite[Equation 3]{russac2019weighted} can be recovered by taking $\tilde \mu_t(x) =\varphi(x)^T \hat{\theta}_t $ and $\varphi(x)=x$.
	\end{remark}
	In the following, based on this explicit feature space, we will establish confidence bounds for our weighted GP bandit under both stationary and non-stationary environments.

	\textbf{Confidence bound under stationary environments. \space}
	First we consider the stationary environment, where the reward function $f_t=f^*$ does not change with respect to time $t$. The following result shows how the posterior mean $\tilde \mu_t(x)$ is concentrated around the unknown reward function $f^*(x)$.
	\begin{theorem} \label{thm: confidence bound stationary}
		Let $f^*: D \rightarrow \Re $ be a member of the RKHS of real-valued functions on $D$ specified by kernel k, with RKHS norm bounded by $\|f^*\|_H \leq B$ and $\acute \sigma^2_t(x) = \lambda \|\varphi(x)\|^2_{V_t^{-1} \tilde V_t V_t^{-1}}$. Then, with probability at least $1-\delta$, the following concentration inequality holds:
		\begin{align*}
		|f^*(x) - \tilde \mu_t(x)| &\leq  \acute \sigma_t(x) B+  \frac{\acute \sigma_t(x)}{\sqrt{\lambda}}  R \sqrt{  2\log(\frac{1}{\delta}) +  2 \bar \gamma_t} \\
		&= \acute \sigma_t(x) \beta_t
		\end{align*}
	\end{theorem}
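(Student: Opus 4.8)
The plan is to reduce the statement to a self-normalized concentration bound by exploiting the explicit Mercer feature map, for which $f^*(x)=\varphi(x)^\top\theta$ with $\norm{\theta}_2=\norm{f^*}_H\le B$, together with the weighted-linear-bandit form $\tilde\mu_t(x)=\varphi(x)^\top\hat\theta_t$ from the Remark, where $\hat\theta_t=V_t^{-1}\sum_{s=1}^t w_s\varphi(x_s)y_s$. In the stationary case $y_s=\varphi(x_s)^\top\theta+\epsilon_s$, so setting $S_t:=\sum_{s=1}^t w_s\varphi(x_s)\epsilon_s$ and using $\sum_{s=1}^t w_s\varphi(x_s)\varphi(x_s)^\top=V_t-\lambda_t I_\mathcal{H}$, I would first establish the exact error decomposition
\begin{align*}
f^*(x)-\tilde\mu_t(x)=\varphi(x)^\top(\theta-\hat\theta_t)=\lambda_t\,\varphi(x)^\top V_t^{-1}\theta\;-\;\varphi(x)^\top V_t^{-1}S_t,
\end{align*}
which splits the error into a deterministic bias term and a stochastic noise term.

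Next I would bound both terms by Cauchy--Schwarz in the $\tilde V_t$-geometry, which is precisely what makes the common factor $\acute\sigma_t(x)$ emerge. Since $\norm{V_t^{-1}\varphi(x)}_{\tilde V_t}^2=\varphi(x)^\top V_t^{-1}\tilde V_t V_t^{-1}\varphi(x)=\acute\sigma_t^2(x)/\lambda$, I obtain $|\varphi(x)^\top V_t^{-1}S_t|\le(\acute\sigma_t(x)/\sqrt\lambda)\norm{S_t}_{\tilde V_t^{-1}}$ and $\lambda_t|\varphi(x)^\top V_t^{-1}\theta|\le\lambda_t(\acute\sigma_t(x)/\sqrt\lambda)\norm{\theta}_{\tilde V_t^{-1}}$. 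The bias term is then immediate: because $\tilde V_t\succeq\alpha_t I_\mathcal{H}$ we have $\norm{\theta}_{\tilde V_t^{-1}}\le\norm{\theta}_2/\sqrt{\alpha_t}\le B/(\sqrt\lambda\,w_t)$, and substituting $\lambda_t=\lambda w_t$ and $\alpha_t=\lambda w_t^2$ collapses the prefactor to exactly $\acute\sigma_t(x)B$, the first term of $\acute\sigma_t(x)\beta_t$.

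The crux, and the step I expect to be the main obstacle, is controlling the self-normalized quantity $\norm{S_t}_{\tilde V_t^{-1}}$ with the weighted Gram operator $\tilde V_t=\sum_s w_s^2\varphi(x_s)\varphi(x_s)^\top+\alpha_t I_\mathcal{H}$, compounded by the fact that the feature map may be infinite-dimensional, so $V_t,\tilde V_t$ are operators rather than matrices. My plan is to regard $\psi_s:=w_s\varphi(x_s)$ as the (predictable) feature and $\epsilon_s$ as the noise, so that $S_t=\sum_s\psi_s\epsilon_s$ and $\tilde V_t=\sum_s\psi_s\psi_s^\top+\alpha_t I_\mathcal{H}$ fit the standard template; crucially, reweighting the feature rather than the noise keeps $\epsilon_s$ conditionally $R$-sub-Gaussian, so a method-of-mixtures (supermartingale) argument carried out in the RKHS should yield, with probability at least $1-\delta$,
\begin{align*}
\norm{S_t}_{\tilde V_t^{-1}}^2\le 2R^2\Big(\log\tfrac1\delta+\tfrac12\log\det(\alpha_t^{-1}\tilde V_t)\Big).
\end{align*}

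Finally I would render the log-determinant finite and identify it with the weighted MIG. Writing $\tilde V_t=\bar\Phi_t^\top\bar\Phi_t+\alpha_t I_\mathcal{H}$, the Sylvester identity collapses the operator determinant to the finite $t\times t$ determinant $\det(\alpha_t^{-1}\tilde V_t)=\det(I_t+\alpha_t^{-1}\bar K_t)$ with $\bar K_t=\bar\Phi_t\bar\Phi_t^\top$, so $\tfrac12\log\det(\alpha_t^{-1}\tilde V_t)\le\bar\gamma_t$ by the definition of $\bar\gamma_t$ as a maximum over point sets. Hence $\norm{S_t}_{\tilde V_t^{-1}}\le R\sqrt{2\log(1/\delta)+2\bar\gamma_t}$, which turns the noise term into $(\acute\sigma_t(x)/\sqrt\lambda)R\sqrt{2\log(1/\delta)+2\bar\gamma_t}$. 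Adding this to the bias bound $\acute\sigma_t(x)B$ gives exactly $\acute\sigma_t(x)\beta_t$, completing the argument. The only delicate points are justifying the supermartingale/mixture construction and the Fredholm determinant in the infinite-dimensional setting; everything else is bookkeeping with $\lambda_t,\alpha_t$ and two applications of Cauchy--Schwarz.
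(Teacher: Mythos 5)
Your proposal is correct and follows essentially the same route as the paper's own proof: the identical error decomposition $f^*(x)-\tilde\mu_t(x)=\lambda_t\varphi(x)^\top V_t^{-1}\theta-\varphi(x)^\top V_t^{-1}S_t$, the same Cauchy--Schwarz step in the $\tilde V_t$-geometry producing the common factor $\acute\sigma_t(x)/\sqrt\lambda$, the same Abbasi-Yadkori-style self-normalized bound on $\|S_t\|_{\tilde V_t^{-1}}$ with features $w_s\varphi(x_s)$ and regularizer $\alpha_t I_{\mathcal H}$, and the same determinant identity reducing $\log\det(\alpha_t^{-1}\tilde V_t)$ to the finite $t\times t$ quantity bounded by $2\bar\gamma_t$. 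The only (cosmetic) difference is that you phrase Cauchy--Schwarz via $\|V_t^{-1}\varphi(x)\|_{\tilde V_t}$ while the paper writes the equivalent $\|\varphi(x)\|_{V_t^{-1}\tilde V_t V_t^{-1}}\cdot\|S_t\|_{\tilde V_t^{-1}}$, and you flag the infinite-dimensional operator-determinant issue more explicitly than the paper does.
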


	\textbf{Proof Sketch for Theorem \ref{thm: confidence bound stationary}. } Following similar steps in \cite[Section 3.2]{abbasi2013online}, we first develop a self-normalized concentration bound on the weighted error sum $S_t= \sum_{s=1}^t w_s \varphi(x_s) \epsilon_s$. Then we bound $\|S_t\|_{\tilde V^{-1}_t}$ through double weighted information gain $\bar \gamma_t$. Finally we decompose $ |f^*(x) - \tilde \mu_t(x)|$ into two terms $\|\varphi(x)\|_{V_t^{-1} \tilde V_t V_t^{-1}}$ and  $(\| S_t \|_{ \tilde V_t^{-1}}+ \lambda_t \|  \theta^* \|_{ \tilde V_t^{-1} } )$, and then bound them separately. The formal proofs and auxiliary lemmas are deferred to Appendix \ref{sec: confidence bound stationary}. 
	
	We note that $\tilde \mu_t(x)$ here can be calculated by Equation \eqref{equ:mean}. As $\varphi(x)$ is involved in $V_t$ and $\tilde V_t$, we need to know the feature map $\varphi(x)$ before calculating $\acute \sigma_t(x)$, which is usually not practical. We resolve this issue in the following subsection by defining another predictive variance $\tilde \sigma_t(x)$.  
	
	With this confidence bound, we can claim that the standard kernelized bandit is only a special case of our weighted kernelized bandit. We defer the detailed explanation to Appendix \ref{sec: proof remark stationary } via Lemma \ref{lemma: hat sigma}. 
	
	\begin{remark} \label{rmk: stationary}
		The standard stationary case (IGP-UCB algorithm) \citep[Theorem 2]{chowdhury2017kernelized} is recovered by taking $\lambda=1$ and $w_t=1$.
	\end{remark}

	\textbf{Confidence bounds for non-stationary cases. \space} In the non-stationary case, it is not guaranteed that the actual reward function  $f_t(x_t)$ always lies inside of confidence ellipsoid in Theorem \ref{thm: confidence bound stationary} because of the time variations of environments. As  did in weighted linear bandits \citep{russac2019weighted}, we introduce a surrogate parameter. $
	m_t(x)=  \varphi(x)^T V^{-1}_{t-1} [\sum_{s=1}^{t-1} w_s \varphi(x_s)  f_s(x_s) + \lambda w_{t-1} \theta^*_t  ],\; \text{where}\; f_t^*(x)=\varphi(x)^T \theta^*_t
	$.
	
	We note that this surrogate parameter $m_t(x)$ \textit{is only used in the analysis of dynamic regret bound, and it is not involved in the implementation of our Algorithm \ref{alg:weighted}}. 
	
	
	Leveraging this surrogate parameter $m_t(x)$, we can show that the new posterior mean is still close to the true function in the non-stationary environment.
	i.e., it satisfies
	$|m_t(x) - \tilde \mu_{t-1}(x)| \leq \tilde \sigma_{t-1}(x) \beta_{t-1}$ where $\tilde \sigma^2_t(x)$ is defined in Equation \eqref{equ:variance}.

	\begin{theorem} \label{thm: confidence bound nonstationary}
		Let $\mathcal C_t = \{f_t : |f_t(x) - \tilde \mu_{t-1}(x)| \leq \tilde \sigma_{t-1}(x) \beta_{t-1}, \forall x \in D  \}$ denote the confidence ellipsoid. Then, $\forall \delta>0$,
		$\mathcal P (m_t \in \mathcal C_t) \geq 1-\delta$.
	\end{theorem}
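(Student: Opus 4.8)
The plan is to recognize that the surrogate $m_t(x)$ has been engineered so that the difference $m_t(x) - \tilde\mu_{t-1}(x)$ collapses to exactly the algebraic form that drove the stationary bound of Theorem \ref{thm: confidence bound stationary}, and then to re-run that argument at index $t-1$ with $\theta^*_t$ playing the role of the fixed parameter $\theta^*$. First I would pass to the explicit feature space of Theorem \ref{thm: feature} and use the equivalence $\tilde\mu_{t-1}(x) = \varphi(x)^T\hat\theta_{t-1}$ with $\hat\theta_{t-1} = V_{t-1}^{-1}\sum_{s=1}^{t-1} w_s\varphi(x_s)y_s$ from Lemma \ref{lem: equivalent}. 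Substituting $y_s = f_s(x_s) + \epsilon_s$ splits $\hat\theta_{t-1}$ into a noise-free part $\sum_{s} w_s\varphi(x_s)f_s(x_s)$ and the weighted error sum $S_{t-1} = \sum_{s=1}^{t-1} w_s\varphi(x_s)\epsilon_s$. Since $m_t(x)$ carries the \emph{same} noise-free part by construction, these cancel and, using $\lambda w_{t-1} = \lambda_{t-1}$, one is left with
\[
m_t(x) - \tilde\mu_{t-1}(x) = \varphi(x)^T V_{t-1}^{-1}\bigl(\lambda_{t-1}\theta^*_t - S_{t-1}\bigr).
\]

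Next I would apply the generalized Cauchy--Schwarz inequality with the double-weighted Gram matrix $\tilde V_{t-1}$: inserting $\tilde V_{t-1}^{1/2}\tilde V_{t-1}^{-1/2}$ and then using the triangle inequality in the $\tilde V_{t-1}^{-1}$-norm gives
\[
|m_t(x) - \tilde\mu_{t-1}(x)| \le \|\varphi(x)\|_{V_{t-1}^{-1}\tilde V_{t-1} V_{t-1}^{-1}}\bigl(\lambda_{t-1}\|\theta^*_t\|_{\tilde V_{t-1}^{-1}} + \|S_{t-1}\|_{\tilde V_{t-1}^{-1}}\bigr).
\]
I would then bound the three factors exactly as in Theorem \ref{thm: confidence bound stationary}. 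The first factor equals $\acute\sigma_{t-1}(x)/\sqrt\lambda$ by definition of $\acute\sigma$. The deterministic term obeys $\lambda_{t-1}\|\theta^*_t\|_{\tilde V_{t-1}^{-1}} \le \sqrt\lambda\,B$, since $\tilde V_{t-1} \succeq \alpha_{t-1}I = \lambda w_{t-1}^2 I$ and $\|\theta^*_t\|_2 = \|f_t\|_H \le B$. The stochastic term $\|S_{t-1}\|_{\tilde V_{t-1}^{-1}} \le R\sqrt{2\log(1/\delta) + 2\bar\gamma_{t-1}}$ holds on a single event of probability at least $1-\delta$ by the weighted self-normalized concentration inequality underlying Theorem \ref{thm: confidence bound stationary}, instantiated at time $t-1$.

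Combining the three bounds yields $|m_t(x) - \tilde\mu_{t-1}(x)| \le \acute\sigma_{t-1}(x)\,\beta_{t-1}$, and replacing the analysis-only proxy $\acute\sigma_{t-1}$ by the computable $\tilde\sigma_{t-1}$ through Lemma \ref{lemma: hat sigma} (which gives $\acute\sigma_{t-1}(x) \le \tilde\sigma_{t-1}(x)$ under the increasing-weight scheme, i.e. $\tilde V_{t-1} \preceq w_{t-1}V_{t-1}$) produces the stated inequality. Because the concentration event does not depend on $x$, the bound holds simultaneously for all $x \in D$ on that single event, so $m_t \in \mathcal C_t$ with probability at least $1-\delta$.

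The main obstacle is the stochastic term $\|S_{t-1}\|_{\tilde V_{t-1}^{-1}}$: everything else is deterministic algebra, but controlling this term requires the weighted self-normalized martingale argument together with the translation of the resulting log-determinant into the double-weighted maximum information gain $\bar\gamma_{t-1}$. Fortunately this is precisely the content already established in the proof of Theorem \ref{thm: confidence bound stationary}, so it can be invoked directly here; the only genuinely new observation is that the surrogate $m_t$ is defined so that the noise-free contribution cancels, reducing the non-stationary claim to a clean re-instantiation of the stationary confidence bound.
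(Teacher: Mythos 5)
Your proposal is correct and follows essentially the same route as the paper's own proof: the same cancellation of the noise-free part leaving $\varphi(x)^T V_{t-1}^{-1}(\lambda_{t-1}\theta^*_t - S_{t-1})$, the same Cauchy--Schwarz split into $\|\varphi(x)\|_{V_{t-1}^{-1}\tilde V_{t-1}V_{t-1}^{-1}}$ times $(\|S_{t-1}\|_{\tilde V_{t-1}^{-1}} + \lambda_{t-1}\|\theta^*_t\|_{\tilde V_{t-1}^{-1}})$, the same self-normalized bound via $\bar\gamma_{t-1}$, and the same final passage from $\acute\sigma_{t-1}$ to $\tilde\sigma_{t-1}$ using $\tilde V_{t-1} \preceq w_{t-1}V_{t-1}$ (which is the content of Lemma~\ref{lem: tilde sigma les hat sigma}, not Lemma~\ref{lemma: hat sigma} as you cite, a harmless mislabel).
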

	
	
	We remark that we cannot  directly generalize weighted linear bandit \citep{russac2019weighted} to nonlinear bandit by simply replacing $A_s$ in \cite{russac2019weighted} with feature map $\varphi(x_s)$. This is because we can explicitly calculate the weighted gram $V_t=\sum_{s=1}^t \omega_s A_s A_s^T + \lambda_t I_d$ in linear case, while in the nonlinear case  the weighted gram $V_t=\sum_{s=1}^t \omega_s \varphi(x_s) \varphi(x_s)^T  + \lambda_t I_H$ cannot be explicitly calculated since the feature map  $\varphi(x)$ is unknown. Therefore, calculating $\acute \sigma_t(x)^2 = \lambda ||\varphi(x)||^2_{V_t^{-1} \tilde V_t V_t^{-1} } $ is not practical. We overcome this by designing $\tilde \sigma_t(x)$ in Equation \eqref{equ:variance} (can be calculated without $\varphi(x)$) and $\tilde \sigma_t(x)$ plays the similar role as $\acute \sigma_t(x)$ in the confidence bound. The full proof is stated in Appendix \ref{sec: confidence bound nonstationary}.

	\section{Dynamic Regret}
	In this section, we aim to resolve the challenge \textbf{(II)} listed at the end of Section~\ref{sec:WGP} and obtain a sublinear regret bound for WGP-UCB (Algorithm \ref{alg:weighted}). In particular, we resort to Quadrature Fourier Features (QFF) approximation to find an upper bound over the sum of predictive variance, which allows us to  explicitly state the regret bound and analyze the order of regret bound. We further consider exponentially increasing weights of the form $w_t = \eta^{-t}$ to simplify the analysis, where $ 0<\eta<1$ is the discounting factor.
	


	\textbf{Quadrature Fourier Features (QFF) approximation.} In some previous work \citep{abbasi2011improved, russac2019weighted}, the feature dimension explicitly appears in the regret bound, which makes regret bound become trivial if the feature space is of infinite dimension. To overcome this, we find an approximate feature map $\breve \varphi$, such that the error of approximation is controlled in the infinite-dimensional feature space. 
	
	
	We consider a finite-dimension feature map $\breve{\varphi}(\cdot): D \rightarrow \Re^m$  such that 
	it has a uniform approximation guarantee \citep{mutny2019efficient}, i.e., for any $x,y \in D$, $\sup_{x,y}|k(x,y) - \breve{\varphi}(x)^T \breve{\varphi}(y)| \le \varepsilon_m$. 
	If $D=[0,1]^d$, for common kernels such as the Squared Exponential or the modified Matern kernel, we construct the feature map where $\bar m \in \Na$ and $m=\bar m^d$,
	\begin{align*}
	\breve \varphi(x)_i = \left\{
	\begin{aligned}
	&\sqrt{v(\rho_i)} \cos\big(\frac{\sqrt{2}}{l} \rho_i^T x\big), \text{if } 1 \leq i \leq m \\
	&\sqrt{v(\rho_{i-m})} \sin\big(\frac{\sqrt{2}}{l} \rho_{i-m}^T x\big), \text{if } m + 1 \leq i \leq 2m 
	\end{aligned}
	\right.
	\end{align*}
	where $v(\rho)=\prod_{j=1}^d \frac{2^{\bar m -1} \bar m \,!}{\bar m H_{\bar m - 1}(\rho_j)^2}$ and $H_i$ is the $i$th Hermite polynomial \citep{hildebrand1987introduction}. The set $\{\rho_1, \ldots, \rho_j \}= P_{\bar m} \times \ldots \times P_{\bar m} $ ($d$ times) where $P_{\bar m}$ is the set of $\bar m$ roots of  the $i$th Hermite polynomial $H_i$.

	We then define $\breve \Phi_t = W [\breve \varphi(x_1), \ldots, \breve  \varphi(x_t)]^T$, $\breve k_t(x) = \breve \Phi_t \breve \varphi(x)$, $\breve K_t= \breve \Phi_t \breve \Phi_t^T$, $\breve V_t= \breve \Phi_t^T \breve \Phi_t+ \lambda_t I_\mathcal{H}$,  $ \breve{\sigma}^2_t(x) =  k(x,x) -  \breve  k_t(x)^T (\breve K_t + \lambda_t I_t )^{-1} \breve  k_t(x)= \lambda_t \|\breve \varphi(x)\|^2_{\breve V_t^{-1}} $, and $\breve \gamma_t =   \frac{1}{2} \log \det (I + \lambda_t^{-1} \breve \Phi_t \breve \Phi_t^T  )$. For SE kernel, QFF error is bounded  by $\varepsilon_m = O(\frac{d 2^{d-1}}{(\bar m l^2)^{\bar m}})$  \citep[Lemma 14]{chowdhury2019bayesian}.
	
	\textbf{Bounding the sum of predictive variance. \space} Leveraging the QFF and the associated error bound, we can achieve a novel weight-dependent upper bound for the sum of predictive variance.
	$$ 
	\sum_{t=1}^T \tilde{\sigma}_{t-1}(x_t) \leq   \sqrt{4\lambda T \breve \gamma_T + 2\lambda m T^2 \log(1/\eta)} + \frac{  T \sqrt{\epsilon_m}}{1-\eta}.
	$$
	i.e., we approximate it with some finite dimension results and we can show that the approximation error part is small, through $\frac{\beta_T T \sqrt{\varepsilon_m}}{1-\eta}=O(1)$ after properly tuning $\eta$ and $m$. The detailed proof is in Appendix \ref{sec: approximation error} and \ref{sec: hat sigma}.
	
	We have tried to simply extend standard results in \cite{chowdhury2017kernelized}, however we found that we cannot bound $\sum_{t=1}^T \tilde \sigma_{t-1}(x_t)$ with weighted MIG $\tilde \gamma_t$, i.e.,  $\sum_{t=1}^T \tilde \sigma_{t-1}(x_t) \leq  \sqrt{\lambda T \log \det(I+\lambda^{-1} \tilde K_T )}$, which cannot be bounded through $\tilde \gamma_t =\max_{A\subset D: |A|=t} \frac{1}{2} \log \; \det(I + \lambda_t^{-1} \tilde K_t )$ because $\lambda_t = \lambda \eta^{-t} > \lambda$. We overcome it by truncating the feature space via QFF and bound the finite part with a samll approximation error, as shown in  Appendix \ref{sec: hat sigma}.

	
	
	
	\textbf{Regret bound of WGP-UCB with QFF approximation.}
	With the novel upper bound above, we can state the dynamic regret bound of WGP-UCB with QFF approximation.
	
	\begin{theorem} \label{thm: regret bound}
		Let $f_t \in H_k(D)$, $\|f_t\|_H \leq B$ and $k(x,x)\leq 1$. Then, with probability at least $1-\delta$, the dynamic regret $R_T$ is bounded by
		\begin{align*}
		O \Big( \beta_T &\sqrt{T \breve \gamma_T +  m T^2 \log(\frac{1}{\eta})}
		+ c^{\frac{3}{2}} B_T \sqrt{\breve \gamma_t +  m c \log(\frac{1}{\eta})}\\ 
		&+\frac{B\eta^c}{1-\eta}T    
		+ B_T \frac{c^2 \sqrt{\varepsilon_m}}{1-\eta} + \frac{\beta_T T \sqrt{\varepsilon_m}}{1-\eta} \Big)
		\end{align*}
		where $c\geq 1$ is an integer, $0 <\eta <1$, and $ \beta_t =   B+  \frac{1}{\sqrt{\lambda}}  R \sqrt{  2\log(\frac{1}{\delta}) +  2 \bar \gamma_t}$. 
	\end{theorem}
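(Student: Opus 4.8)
The plan is to follow the weighted-penalty template of \cite{russac2019weighted}, but carried out through the explicit Mercer feature map $\varphi$ of Theorem~\ref{thm: feature} and then transported into the finite-dimensional QFF space $\breve\varphi$, so that the feature dimension never enters the final bound. I would first condition on the event of Theorem~\ref{thm: confidence bound nonstationary}, that the surrogate $m_t$ lies in the confidence ellipsoid $\mathcal C_t$ uniformly in $t$, which holds with probability at least $1-\delta$. On this event $|m_t(x)-\tilde\mu_{t-1}(x)|\le \tilde\sigma_{t-1}(x)\beta_{t-1}$ for all $x$, so the usual optimism argument applies: since $x_t$ maximizes $\tilde\mu_{t-1}(\cdot)+\beta_{t-1}\tilde\sigma_{t-1}(\cdot)$, combining this with the two-sided confidence bound evaluated at $x_t^\ast$ and at $x_t$ gives $m_t(x_t^\ast)\le m_t(x_t)+2\beta_{t-1}\tilde\sigma_{t-1}(x_t)$. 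Splitting the instantaneous regret through the surrogate then yields, with $\Delta_t(x):=|f_t(x)-m_t(x)|$ the non-stationarity bias,
\begin{align*}
f_t(x_t^\ast)-f_t(x_t)\le 2\beta_{t-1}\tilde\sigma_{t-1}(x_t)+\Delta_t(x_t^\ast)+\Delta_t(x_t).
\end{align*}

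The exploration sum is immediate: because $\bar\gamma_t$, hence $\beta_t$, is nondecreasing in $t$, one has $\sum_{t=1}^T 2\beta_{t-1}\tilde\sigma_{t-1}(x_t)\le 2\beta_T\sum_{t=1}^T\tilde\sigma_{t-1}(x_t)$, and the sum-of-predictive-variance bound stated above converts this directly into the $\beta_T\sqrt{T\breve\gamma_T+mT^2\log(1/\eta)}$ term together with the trailing $\beta_T T\sqrt{\varepsilon_m}/(1-\eta)$ term.

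The core of the argument is bounding $\Delta_t(x)$. Writing $f_s(x_s)=\varphi(x_s)^T\theta_s^\ast$ and using the identity $V_{t-1}\theta_t^\ast=\sum_{s=1}^{t-1}w_s\varphi(x_s)\varphi(x_s)^T\theta_t^\ast+\lambda w_{t-1}\theta_t^\ast$, the regularizer terms in $m_t$ and in $f_t$ cancel, leaving the telescoping identity
\begin{align*}
f_t(x)-m_t(x)=\varphi(x)^T V_{t-1}^{-1}\sum_{s=1}^{t-1}w_s\varphi(x_s)\varphi(x_s)^T(\theta_t^\ast-\theta_s^\ast).
\end{align*}
I would transport this expression into the finite QFF space, paying the uniform error $\varepsilon_m$ of $\breve\varphi$ at each use of $k$; accumulating these errors over the inner sum and over a window of length $c$ produces the $B_T c^2\sqrt{\varepsilon_m}/(1-\eta)$ term. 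In the QFF space I split the inner sum at a window of length $c$. For the old indices $s\le t-1-c$ the weight ratio satisfies $w_s/(\lambda w_{t-1})\le \eta^{c}/\lambda$, so bounding $\|\theta_t^\ast-\theta_s^\ast\|\le 2B$ and summing the geometric tail $\sum_{j\ge c}\eta^{j}\le\eta^c/(1-\eta)$, and then over $t$, yields the $B\eta^{c}T/(1-\eta)$ term. For the recent indices I use $\|\theta_t^\ast-\theta_s^\ast\|=\|f_t-f_s\|_H\le\sum_{\tau=s}^{t-1}\|f_{\tau+1}-f_\tau\|_H$ (since $\|f\|_H=\|\theta\|_2$ by Theorem~\ref{thm: feature} and $k(x,x)\le1$), apply Cauchy--Schwarz in the $\breve V_{t-1}^{-1}$ norm, and control the resulting weighted quadratic form $\sum_{s}w_s^2\|\breve\varphi(x_s)\|^2_{\breve V_{t-1}^{-1}}$ by an elliptical-potential / log-determinant estimate, which is exactly where $\breve\gamma_t+mc\log(1/\eta)$ appears; the $\sqrt{c}$ from the window Cauchy--Schwarz and the extra factor $c$ from reindexing the variation budget (each increment $\|f_{\tau+1}-f_\tau\|_H$ recurs in at most $c$ windows, so $\sum_t\sum_{\tau=t-c}^{t-1}\|f_{\tau+1}-f_\tau\|_H\le cB_T$) combine to give the $c^{3/2}B_T\sqrt{\breve\gamma_t+mc\log(1/\eta)}$ term.

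Finally I would sum the three contributions over $t=1,\dots,T$ and collect terms to obtain the stated bound. The main obstacle is the recent-window part of the bias: unlike the linear case of \cite{russac2019weighted}, the Gram operator $V_{t-1}$ and the map $\varphi$ are infinite-dimensional and cannot be formed, so the weighted quadratic form must be controlled in the truncated QFF space while simultaneously keeping the approximation error $\varepsilon_m$, the window length $c$, and the discount $\eta$ in balance. Extracting the clean scaling $c^{3/2}B_T\sqrt{\breve\gamma_t+mc\log(1/\eta)}$ from the log-determinant potential, rather than a cruder dimension-dependent estimate, is the delicate step.
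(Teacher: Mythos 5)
Your proposal is correct and follows essentially the same route as the paper's own proof: the same surrogate-based split of $r_t$ into a stationary part bounded by $2\beta_{t-1}\tilde\sigma_{t-1}(x_t)$ and a bias part, the same window split with geometric tail giving $B\eta^cT/(1-\eta)$, the same variation-budget reindexing giving the factor $c$, and the same QFF elliptical-potential bounds producing all five terms. The only (immaterial) difference is ordering: the paper bounds the operator norm $\big\|V_{t-1}^{-1}\sum_{s=t-c}^p \eta^{-s}\varphi(x_s)\varphi(x_s)^T\big\|_2$ by $\frac{\eta^{-1/2}}{\lambda}\sum_{s=t-c}^p\tilde\sigma_{s-1}(x_s)$ entirely in the infinite-dimensional space (Lemma \ref{lemma: fix error}, which repairs the flaw of \cite{russac2019weighted}) and only afterwards replaces $\tilde\sigma$ by $\breve\sigma$ via QFF, whereas you pass to the QFF space first; your Cauchy--Schwarz in the $\breve V_{t-1}^{-1}$ norm supplies exactly the decaying factor $\eta^{(t-1)/2}$ needed to cancel the growing weights, so the same cancellation goes through.
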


	\textbf{Proof Sketch for Theorem \ref{thm: regret bound}.} There are mainly three steps in this proof. First, we separate the stationary and non-stationary parts in the instantaneous regret $r_t$. They are bounded by $2 \beta_{t-1} \tilde \sigma_{t-1}(x_t)$ and $2 \sum_{p=t-c}^{t-1}\|f_p- f_{p+1}\|_H  \frac{\eta^{-1/2}}{\lambda} \sum_{s=t-c}^p \tilde \sigma_{s-1}(x_s)   + \frac{4B\eta^c}{\lambda (1-\eta)}$, respectively. As pointed out by \cite[p.4]{zhao2021non}, the statement $\lambda_{\max}(V_{t-1}^{-1} \sum_{s=t-D}^p \eta^{-s} A_s A_s^T)\leq 1$ in  \cite[p.18]{russac2019weighted} is not true. We fix this error in our proof as well, which introduces extra term $\sum_{s=t-c}^p \tilde \sigma_{s-1}(x_s)$. Secondly, we leverage the new bound for $\sum_{t=1}^T \tilde{\sigma}_{t-1}(x_t)$ developed above, which is $ \sqrt{4\lambda T \breve \gamma_T + 2\lambda m T^2 \log(1/\eta)} + \frac{ T \sqrt{\varepsilon_m}}{1-\eta}$. Finally, we bound  $\eta^{-1/2} \sum_{s=t-c}^t \tilde \sigma_{s-1}(x_s)$ through $\sqrt{4\lambda c \breve \gamma_t + 2\lambda m c^2 \log(1/\eta)} + \frac{c \sqrt{\varepsilon_m}}{1-\eta}$ with QFF, which is composed of finite approximation result and associated error. The full proof is in Appendix \ref{sec: hat sigma}-\ref{sec: proof of regret bound}.
	
	\textbf{Order analysis of regret bound.}
	We start analysing the order of regret bound by define $\dot \gamma_T = \max\{\bar \gamma_T, \breve \gamma_T\}$. It is the maximum between double-weighted MIG and weighted MIG with QFF approximation, which is called \emph{combined weighted MIG}. By optimally setting $c=\frac{\log T}{1-\eta}$ and $\bar m = \log_{4/e} (T^3 \dot \gamma_T^{3/2})$, we have the order analysis as follows. The detail is deferred to Appendix \ref{sec: proof of order analysis}.
	\begin{corollary} \label{cor: order}
		If $B_T$ is known, the dynamic regret bound is $\tilde O(\dot \gamma_T^{7/8} B_T^{1/4} T^{3/4})$ by optimally choosing $\eta = 1 - \dot \gamma_T^{-1/4}B_T^{1/2}T^{-1/2}$. If  $B_T$ is unknown, the dynamic regret bound is $\tilde O(\dot \gamma_T^{7/8} B_T T^{3/4})$ by optimally choosing $\eta = 1 - \dot \gamma_T^{-1/4}T^{-1/2}$.
	\end{corollary}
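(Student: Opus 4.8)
The plan is to start from the explicit dynamic regret bound of Theorem~\ref{thm: regret bound} and reduce it to a one-dimensional optimization over the discount factor $\eta$, after freezing the auxiliary parameters $c$ and $\bar m$ at their prescribed values. First I would collapse the five summands onto the single combined MIG $\dot \gamma_T = \max\{\bar \gamma_T, \breve \gamma_T\}$ using the crude bounds $\breve \gamma_T \le \dot \gamma_T$, $\bar \gamma_T \le \dot \gamma_T$, together with $\beta_T = B + \frac{R}{\sqrt{\lambda}}\sqrt{2\log(1/\delta) + 2\bar \gamma_T} = \tilde{O}(\sqrt{\dot \gamma_T})$, so that every occurrence of $\beta_T$, $\breve \gamma_T$, $\bar \gamma_T$ is rewritten through $\dot \gamma_T$.

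Next I would substitute the stated schedule $c = \frac{\log T}{1-\eta}$ and $\bar m = \log_{4/e}(T^3 \dot \gamma_T^{3/2})$, with $m = \bar m^d$. The choice of $c$ makes the pure-forgetting summand collapse, since $\eta^c = (1-(1-\eta))^{(\log T)/(1-\eta)} \approx e^{-\log T} = 1/T$, turning $\frac{B\eta^c}{1-\eta}T$ into $\tilde{O}((1-\eta)^{-1})$; it also gives $c\log(1/\eta) = \tilde{O}(1)$ and $c = \tilde{O}((1-\eta)^{-1})$. The choice of $\bar m$ keeps $m = \tilde{O}(1)$ (polylogarithmic in $T$ and $\dot \gamma_T$) while driving $\varepsilon_m$ polynomially small through the SE-kernel estimate $\varepsilon_m = O(d 2^{d-1}/(\bar m l^2)^{\bar m})$, so that both approximation-error summands $B_T c^2\sqrt{\varepsilon_m}/(1-\eta)$ and $\beta_T T\sqrt{\varepsilon_m}/(1-\eta)$ become lower order. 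After these substitutions the bound reduces, up to logarithmic factors, to a competition between a contribution \emph{increasing} in $1-\eta$, namely $\dot \gamma_T^{1/2} T\sqrt{1-\eta}$ from the $mT^2\log(1/\eta)$ piece of the stationary term, and a contribution \emph{decreasing} in $1-\eta$, namely $(1-\eta)^{-3/2} B_T \dot \gamma_T^{1/2}$ from the non-stationary term $c^{3/2} B_T\sqrt{\breve \gamma_T + mc\log(1/\eta)}$.

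With these two $\eta$-dependent quantities isolated, the final step is to pick $\eta$ as prescribed so that the non-stationary contribution is the dominant surviving term. In the known-budget case I would set $1-\eta = \dot \gamma_T^{-1/4} B_T^{1/2} T^{-1/2}$; plugging this into $(1-\eta)^{-3/2} B_T \dot \gamma_T^{1/2}$ yields $\tilde{O}(\dot \gamma_T^{7/8} B_T^{1/4} T^{3/4})$, and I would then verify that the stationary $T\breve \gamma_T$-piece $\tilde{O}(\dot \gamma_T T^{1/2})$, the increasing piece $\tilde{O}(\dot \gamma_T^{3/8} B_T^{1/4} T^{3/4})$, and the collapsed forgetting term all stay below this rate. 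When $B_T$ is unknown I would rerun the same computation with the budget-free schedule $1-\eta = \dot \gamma_T^{-1/4} T^{-1/2}$; the non-stationary term then evaluates to $\tilde{O}(\dot \gamma_T^{7/8} B_T T^{3/4})$, the inflated $B_T$ exponent being exactly the price of not tuning to the budget.

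The main obstacle I anticipate is the bookkeeping of the powers of $\dot \gamma_T$ through the $\eta$-optimization. Since $\dot \gamma_T$ is itself the combined weighted MIG induced by the weighting $w_t = \eta^{-t}$ (through $\alpha_t = \lambda w_t^2$ and the double-weighted matrix $\bar K_t$), it is not independent of $\eta$, so I must check that expressing the final bound in terms of the realized $\dot \gamma_T$ is self-consistent and that the chosen $1-\eta$ indeed leaves the non-stationary term dominant rather than a different summand. A secondary but delicate point is confirming that the single schedule $\bar m = \log_{4/e}(T^3 \dot \gamma_T^{3/2})$ simultaneously forces both $\sqrt{\varepsilon_m}$-terms to be negligible and keeps $m = \tilde{O}(1)$; this is precisely where the explicit SE-kernel error bound and the base $4/e$ enter, and it is the step most sensitive to the kernel choice.
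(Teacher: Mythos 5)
Your proposal is correct and follows essentially the same route as the paper's own proof: substitute $c = \frac{\log T}{1-\eta}$ and $\bar m = \log_{4/e}(T^3 \dot\gamma_T^{3/2})$, write every summand of Theorem~\ref{thm: regret bound} in terms of $X = 1-\eta$ and $\dot\gamma_T$ (using $\beta_T \sim \dot\gamma_T^{1/2}$, $\eta^c \sim T^{-1}$, $c\log(1/\eta) \sim 1$, and the SE-kernel bound on $\varepsilon_m$), then plug in the prescribed $\eta$ for each case and check term by term. The only cosmetic difference is that the paper bounds the stationary term crudely as $\dot\gamma_T T X^{1/2}$, so that it and the non-stationary $c^{3/2}B_T$-term both land exactly at the final rate, whereas your slightly tighter split leaves the non-stationary term as the unique dominant one; the computation and conclusion are the same.
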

	
	\begin{remark}
		This regret bound achieves the same order as \cite{zhou2021no} where restarting and sliding window mechanisms are used. It is also a generalization of \cite{zhao2021non}, which studied non-stationary linear bandit and fixed the error of largest eigenvalue in previous papers \citep{cheung2019learning, russac2019weighted,zhao2020simple}.
	\end{remark}

	\section{Upper bounds on  Maximum Information Gain}
	In this section, we aim to resolve the challenge \textbf{(III)} mentioned in Section~\ref{sec:WGP}, i.e., finding an explicit upper bound on MIG. In our case, we have multiple weighted MIGs and hence standard results fail. To resolve this issue, we generalize the idea in~\cite{vakili2021information} to our weighted case by exploiting the tail properties in the feature maps given by Mercer's theorem.
	
	

	In particular, our bounds on MIGs are based on a finite dimensional projection of the kernel, we start with outlining the details of this projection.  For each element in $K_t$, we recall Equation \eqref{equ: kernel} by Mercer's Theorem, where $c_m \in \Re^+$ and $\phi_m \in \mathcal{H}_k$ for $m \geq 1$. $\{c_m\}_{m=1}^\infty$ is the eigenvalue sequence in decreasing order. $\{\phi_m\}_{m=1}^\infty$ are the eigenfeature of $k$. Similarly, for each element in double weighted kernel matrix $\bar K_t$, we have double weighted kernel function 
	$\bar k(x_i,x_j)= w_i w_j k(x_i,x_j) = \sum_{m=1}^\infty w_i w_j c_m \phi_m(x_i) \phi_m(x_j)$.
	
	\begin{assumption}
		(1) $\forall x, x' \in D, |k(x,x')|\leq \dot k$, for some $\dot k>0$  (2) $\forall m\in \Na, \forall x\in D, |\phi_m(x)|\leq \psi$, for some $\psi>0$. 
	\end{assumption}
	
	In particular, we consider a N-dimensional projection \citep{vakili2021information}, where the $N$-dimensional feature space is $\Psi_N=[\phi_1(x),\phi_2(x), \ldots, \phi_N(x)]^T$. We keep the first $N$-dimension feature in kernel $\bar k_P(x_i,x_j)=w_i w_j \sum_{m=1}^N c_m \phi_m(x_i) \phi_m(x_j)$. The remaining part is $\bar k_O(x,x')=\bar k(x,x')-\bar k_P(x,x')$.
	
	We define the following quantity based on the tail mass of the eigenvalues of $m$, $\delta_N=  \sum_{m=N+1}^\infty c_m \psi^2$.
	Then for all $x,x' \in D$, we have $ k_O(x,x')\leq \delta_N$. For some kernel $k$, if $c_m$ diminishes at a sufficiently fast rate, then $ \delta_N$ becomes arbitrarily small when $N$ is large enough, which will be discussed in Corollary \ref{cor: gamma bound eigendecay}.
	
	\begin{figure*}
		\centering
		\begin{subfigure}{.3\textwidth}
			\centering
			\includegraphics[width=6cm, height=4cm]{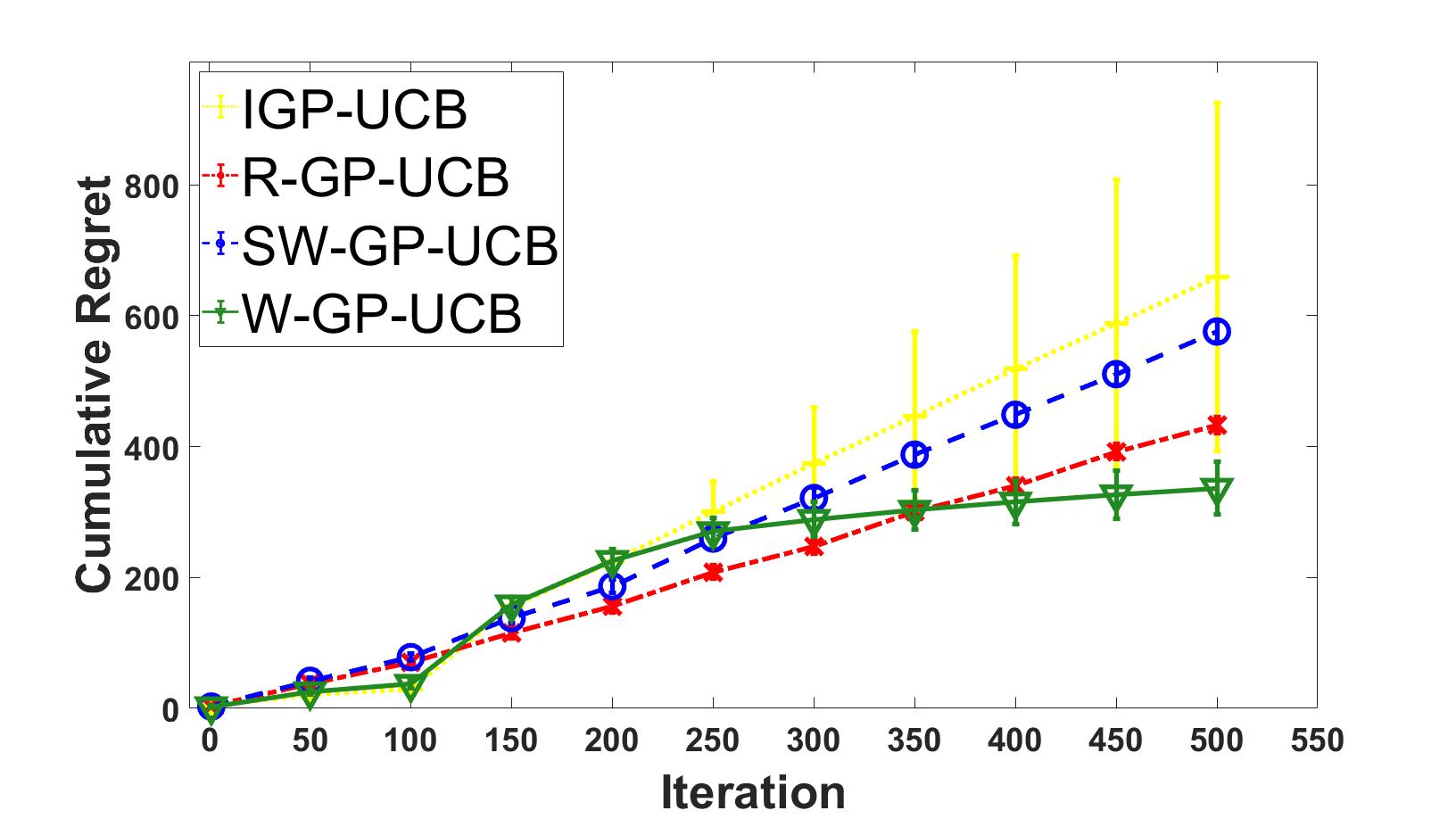}
			\caption{Abruptly-change, SE kernel}
		\end{subfigure}%
		\begin{subfigure}{.3\textwidth}
			\centering
			\includegraphics[width=6cm, height=4cm]{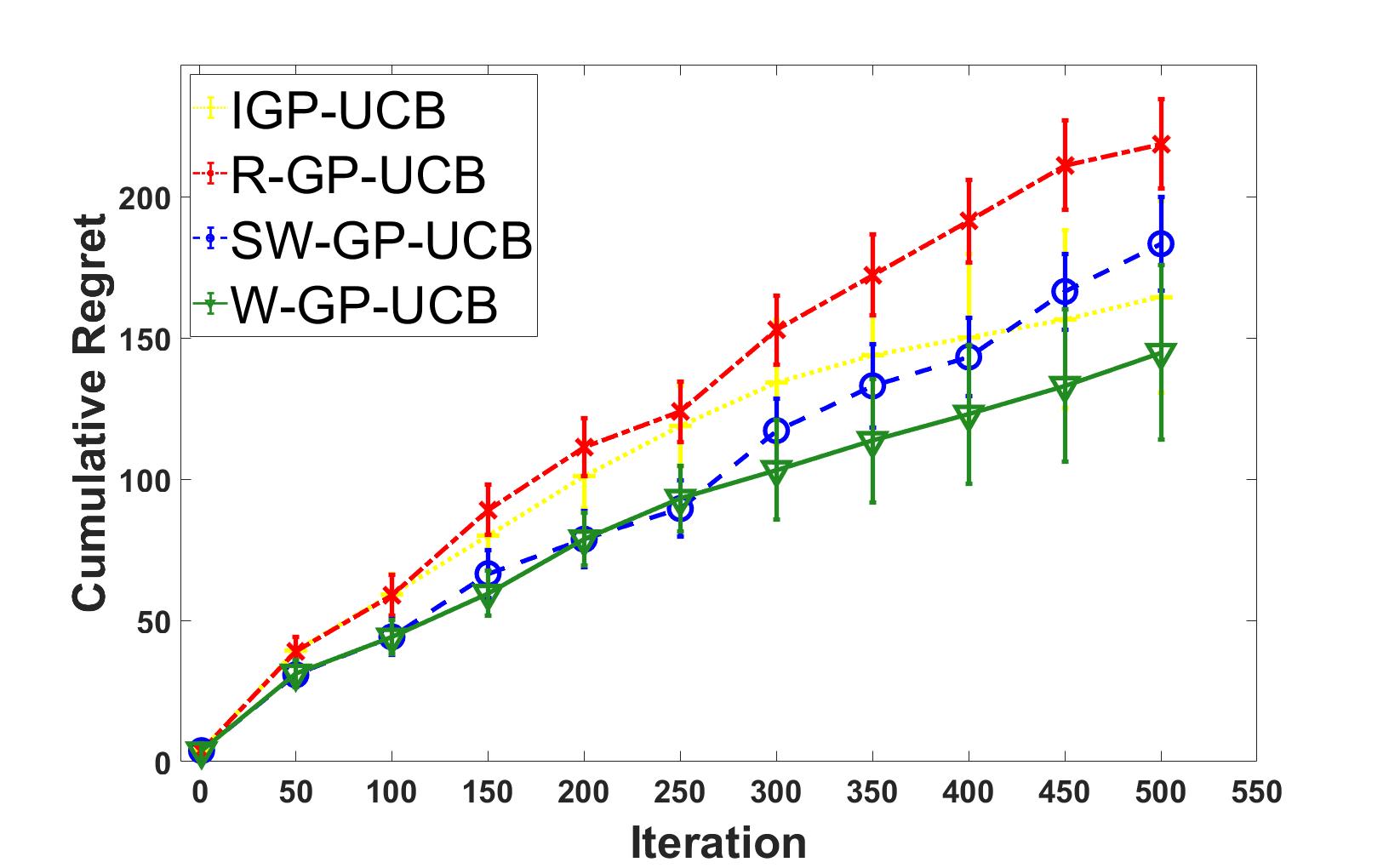}
			\caption{Slowly-change, SE kernel}
		\end{subfigure}%
		\begin{subfigure}{.3\textwidth}
			\centering
			\includegraphics[width=6cm, height=4cm]{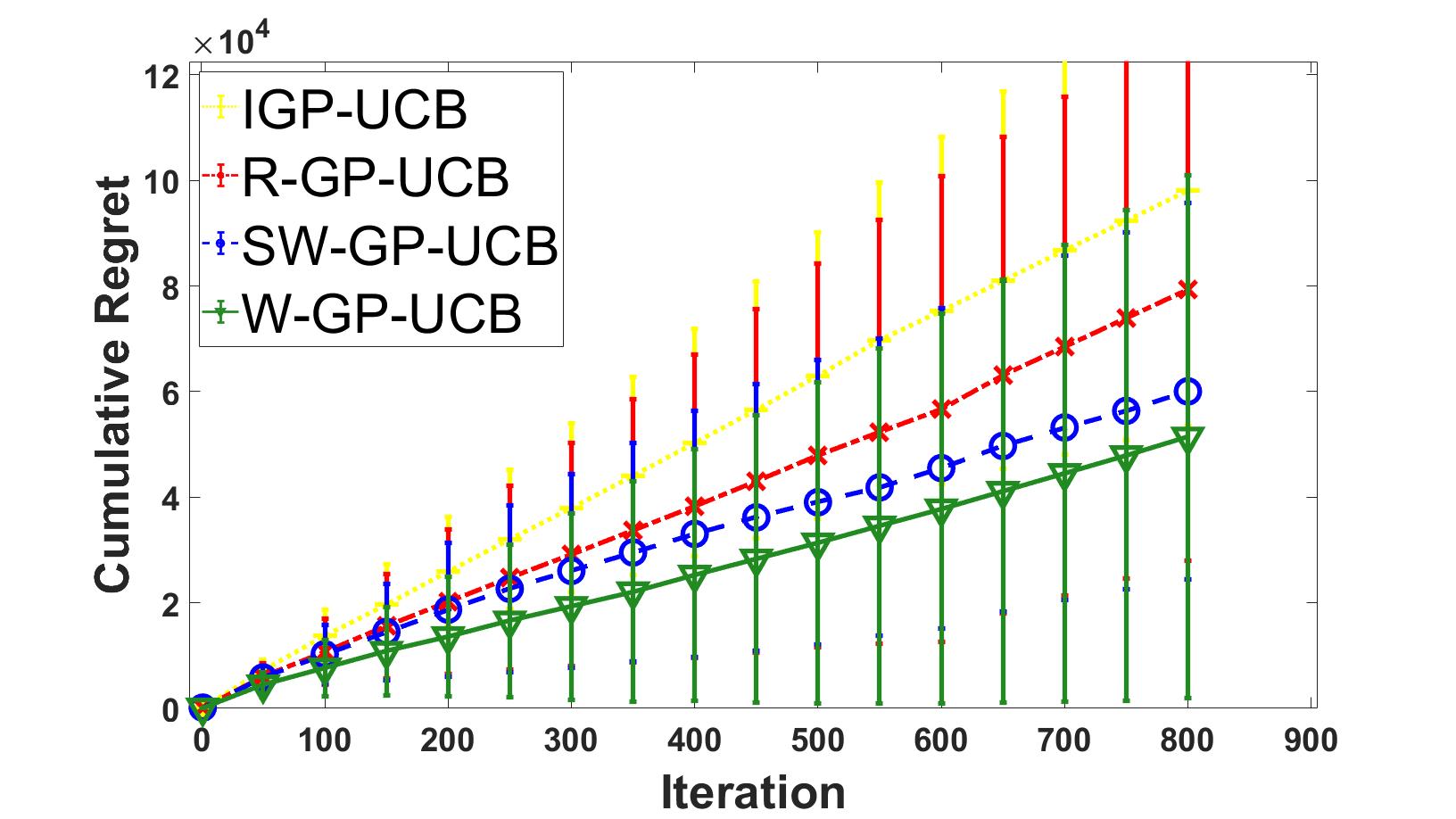}
			\caption{Stock market data}
		\end{subfigure}
		\caption{Average cumulative regret of four algorithms in three different scenarios }
		\label{fig: simulation}
	\end{figure*}

	\textbf{Universal Bound: \space} Based on this eigendecay, we provide a universal upper bound for both $\bar \gamma_T$ and $\breve \gamma_T$, which states that the order $\tilde O(\log(T))$ holds for combined weighted MIG $\dot \gamma_T$ with any increasing weights $\{w_s\}_{s=1}^t$. The full proof is stated in Appendix \ref{sec: universal bound}.

	\begin{theorem} \label{thm: gamma bound}
		If Assumption 1 holds, $\dot \gamma_T = \max\{\bar \gamma_T, \breve \gamma_T\}\leq \frac{N}{2} \log \Big(1+ \frac{\dot k  T}{\lambda N } \Big) + \frac{T}{2\lambda } \delta_N$ 
	\end{theorem}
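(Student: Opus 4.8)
The plan is to prove a single inequality of the form $\frac12\log\det(I+\rho^{-1}G)\le \frac{N}{2}\log(1+\frac{\dot k T}{\lambda N})+\frac{T}{2\lambda}\delta_N$ that specializes to both $\bar\gamma_T$ and $\breve\gamma_T$. Indeed, $\bar\gamma_T=\frac12\log\det(I+\alpha_T^{-1}\bar K_T)$ is of this form with $G=\bar K_T$ (entries $w_iw_j\,k(x_i,x_j)$) and $\rho=\alpha_T=\lambda w_T^2$, while $\breve\gamma_T=\frac12\log\det(I+\lambda_T^{-1}\breve\Phi_T\breve\Phi_T^T)$ is of this form with $G=\breve\Phi_T\breve\Phi_T^T$ (entries $\sqrt{w_iw_j}\,\breve k(x_i,x_j)$) and $\rho=\lambda_T=\lambda w_T$. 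The engine is the finite-dimensional Mercer projection from Theorem~\ref{thm: feature}: I split $G=G_P+G_O$, where $G_P$ is the head assembled from the first $N$ eigenfeatures $\phi_1,\dots,\phi_N$ (weighted as in $G$) and $G_O$ is the residual tail. Because the head kernel $\sum_{m=1}^N c_m\phi_m(x)\phi_m(x')$ factors through $\Psi_N$, the matrix $G_P$ has rank at most $N$; and by the tail-mass assumption the tail kernel obeys $k_O(x,x')\le\delta_N$.

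First I would separate head and tail using concavity of $\log\det$ on the positive-definite cone, i.e. $\log\det(X+B)\le\log\det(X)+\mathrm{tr}(X^{-1}B)$ for $X\succ0$ and $X+B\succ0$. Taking $X=I+\rho^{-1}G_P$ and $B=\rho^{-1}G_O$ gives $\log\det(I+\rho^{-1}G)\le\log\det(I+\rho^{-1}G_P)+\mathrm{tr}((I+\rho^{-1}G_P)^{-1}\rho^{-1}G_O)$. Since $G_P\succeq0$ forces $(I+\rho^{-1}G_P)^{-1}\preceq I$ and $G_O\succeq0$, the second term is at most $\rho^{-1}\mathrm{tr}(G_O)$. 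For the head, the rank-$N$ property means $\log\det(I+\rho^{-1}G_P)=\sum_{i=1}^N\log(1+\rho^{-1}\sigma_i)$ over the at most $N$ nonzero eigenvalues $\sigma_i$ of $G_P$, and Jensen's inequality (concavity of $\log$) bounds this by $N\log(1+\frac{\mathrm{tr}(G_P)}{\rho N})$.

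It then remains to control the two traces, and this is where the weight structure must be used. For $\bar\gamma_T$, the diagonal entries of $G_P$ and $G_O$ are $w_i^2\sum_{m\le N}c_m\phi_m(x_i)^2\le w_i^2\dot k$ and $w_i^2 k_O(x_i,x_i)\le w_i^2\delta_N$. Using that the exponential weights $w_t=\eta^{-t}$ are increasing, so $w_i\le w_T$, I get $\mathrm{tr}(G_P)\le \dot k\sum_i w_i^2\le \dot k\,T w_T^2$ and $\mathrm{tr}(G_O)\le\delta_N\,T w_T^2$; dividing by $\rho=\lambda w_T^2$ cancels the weights and yields exactly $\frac{\dot k T}{\lambda N}$ and $\frac{T\delta_N}{\lambda}$. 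The same cancellation works for $\breve\gamma_T$ because its entries carry a single weight $\sqrt{w_iw_j}$ matched to the regularizer $\rho=\lambda w_T$, so the diagonal factors $w_i/w_T\le1$ again collapse the weights. Combining the head and tail bounds and multiplying by $\frac12$ gives the claimed inequality for each of $\bar\gamma_T$ and $\breve\gamma_T$, hence for their maximum $\dot\gamma_T$.

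The main obstacle is precisely this weight cancellation: it is what makes the bound \emph{universal} (valid for arbitrary increasing weights), and it hinges on the matching between the degree of weighting in each Gram matrix and its regularizer ($w_iw_j$ versus $\lambda w_T^2$ for $\bar\gamma_T$, $\sqrt{w_iw_j}$ versus $\lambda w_T$ for $\breve\gamma_T$) together with monotonicity $w_i\le w_T$; without increasing weights the cancellation breaks. A secondary subtlety is that $\breve\gamma_T$ is built from the QFF kernel $\breve k$ rather than $k$, so its head/tail split is taken against the true-kernel projection and the residual carries an extra $\sup_{x,x'}|\breve k(x,x')-k(x,x')|\le\varepsilon_m$ on top of $\delta_N$; since $\varepsilon_m$ is driven to be negligible in the regime chosen for the regret bound, I would absorb it so that the true eigendecay $\delta_N$ governs the tail. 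The remaining verifications, namely that the comparison matrices stay positive definite and that $G_P,G_O$ are positive semidefinite Mercer matrices, are routine.
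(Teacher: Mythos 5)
Your treatment of $\bar\gamma_T$ is correct and is essentially the paper's own argument (which follows Vakili et al.): split $\bar K_T$ into the rank-$N$ Mercer head and the tail, bound the head by Jensen over at most $N$ nonzero eigenvalues, bound the tail contribution by a trace, and cancel the weights against the regularizer using $w_i \le w_T$. The only cosmetic difference is that you use the gradient inequality $\log\det(X+B)\le\log\det(X)+\mathrm{tr}(X^{-1}B)$ where the paper uses the exact factorization $\det(I+\rho^{-1}G)=\det(I+\rho^{-1}G_P)\det\bigl(I+\rho^{-1}(I+\rho^{-1}G_P)^{-1}G_O\bigr)$ followed by $\log(1+x)\le x$; both yield the same head and tail terms.

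The genuine gap is in the $\breve\gamma_T$ half. Your unified lemma decomposes $G$ against the Mercer expansion of the \emph{true} kernel $k$, but $\breve\gamma_T$ is built from the QFF kernel $\breve k$, so your split is $G=G_P+G_O+E$ with an error matrix $E$ whose entries are $\sqrt{w_iw_j}\,(\breve k(x_i,x_j)-k(x_i,x_j))$. Two things break here. First, $E$ is not positive semidefinite, so the step $\mathrm{tr}(X^{-1}B)\le\mathrm{tr}(B)$ (which needs $B\succeq 0$) cannot be applied to it; you would need an operator- or nuclear-norm bound, which you do not supply, and an entrywise bound $|E_{ij}|\le\sqrt{w_iw_j}\,\varepsilon_m$ only gives something like $\mathrm{tr}(X^{-1}E)\lesssim T^{3/2}\varepsilon_m/\lambda$. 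Second, even granting such control, your bound for $\breve\gamma_T$ carries an extra additive $\varepsilon_m$-dependent term that is simply not in the theorem statement; "absorbing" it is only legitimate in the regret-analysis regime where $\bar m$ (hence $\varepsilon_m$) is tuned with $T$, whereas the theorem is a standalone claim for any fixed QFF dimension. The paper avoids this entirely: it first asserts $\breve\gamma_T\le\tilde\gamma_T$, where $\tilde\gamma_T$ is the \emph{single}-weighted MIG of the true kernel, and then runs the identical projection argument on $\tilde\gamma_T$ with the single-weight cancellation $\sum_{s\le T} w_s/w_T\le T$. Your unified lemma applies verbatim to $G=WK_TW^T$ with $\rho=\lambda_T$, so the clean fix is to prove that reduction and apply your lemma there, rather than decomposing the QFF Gram matrix directly.
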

	
	The expression in Theorem \ref{thm: gamma bound} can be predigested as $\dot \gamma_T = O(N \log(T) + \delta_N T)$, which resolves challenge \textbf{(III)} mentioned in Section~\ref{sec:WGP}. To be more specific, the following  remark provides an explicit form of the upper bound for SE kernel, which has a exponential eigendecay \citep{belkin2018approximation, vakili2021information}.

	\begin{remark}
		For SE kernel, we have $c_m=O(exp(-m^{1/d}))$ and $\dot \gamma_T = O\big(\log^{d+1} (T) \big)$.
	\end{remark}

	\textbf{Weight-dependent bound.} Specifically, if the weights are exponentially increasing, we achieve a tighter upper bound for double weighted MIG $\bar \gamma_T$ and single weighted MIG with QFF $\breve \gamma_T$, respectively. This novel upper bound depends on the discount factor $\eta$ and holds under any time horizon $T$.

	\begin{theorem} \label{thm: bar gamma bound}
		If Assumption 1 holds and weight $\omega_t = \eta^{-t}$, the following upper bound on $\bar \gamma_T$ 
		holds for all $N \in \Na$.
		\begin{align*}
		\bar  \gamma_T  &\le \frac{N}{2} \log \Big(1+ \frac{\dot k  }{\lambda N (1-\eta^2)} \Big) + \frac{1}{2\lambda (1-\eta^2) } \delta_N.
		\end{align*}
		
		
	\end{theorem}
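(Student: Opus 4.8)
The plan is to fix an arbitrary finite set $A=\{x_1,\dots,x_t\}\subset D$ and bound $\tfrac12\log\det(I+\alpha_t^{-1}\bar K_t)$ by a quantity independent of $A$ (and of $t$), so that the maximum defining $\bar\gamma_T$ inherits the same bound. The core idea, adapting \cite{vakili2021information} to the double weights, is to split the double-weighted kernel through its Mercer expansion into a finite-rank projected part and an orthogonal tail: write $\bar K_t=\bar K_P+\bar K_O$, where $\bar K_P$ has entries $w_iw_j\sum_{m=1}^N c_m\phi_m(x_i)\phi_m(x_j)$ and $\bar K_O$ collects the terms with $m>N$. Both are Gram matrices of genuine feature maps and hence positive semidefinite, and $\bar K_P=\Phi_P\Phi_P^\top$ for the $t\times N$ matrix $\Phi_P$ whose $i$th row is $w_i[\sqrt{c_1}\phi_1(x_i),\dots,\sqrt{c_N}\phi_N(x_i)]$.

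Next I would separate the two parts inside the log-determinant. Using the concavity inequality $\log\det(A+B)\le\log\det A+\operatorname{tr}(A^{-1}B)$ with $A=I+\alpha_t^{-1}\bar K_P\succeq I$ and $B=\alpha_t^{-1}\bar K_O\succeq0$, together with $A^{-1}\preceq I$, yields
\[
\log\det(I+\alpha_t^{-1}\bar K_t)\le\log\det(I+\alpha_t^{-1}\bar K_P)+\alpha_t^{-1}\operatorname{tr}(\bar K_O).
\]
For the tail term, the uniform eigenfunction bound of Assumption 1 gives $\operatorname{tr}(\bar K_O)=\sum_{i=1}^t w_i^2\sum_{m>N}c_m\phi_m(x_i)^2\le\delta_N\sum_{i=1}^t w_i^2$. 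For the projected term, Sylvester's identity turns the $t\times t$ determinant into an $N\times N$ one, $\det(I_t+\alpha_t^{-1}\Phi_P\Phi_P^\top)=\det(I_N+\alpha_t^{-1}\Phi_P^\top\Phi_P)$, and the AM--GM bound $\det M\le(\operatorname{tr}M/N)^N$ gives $\log\det(I_N+\alpha_t^{-1}\Phi_P^\top\Phi_P)\le N\log\!\big(1+\tfrac1N\alpha_t^{-1}\operatorname{tr}(\bar K_P)\big)$, where $\operatorname{tr}(\bar K_P)\le\operatorname{tr}(\bar K_t)\le\dot k\sum_{i=1}^t w_i^2$ by $|k(x,x)|\le\dot k$.

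The step that produces the $T$-free bound is plugging in the exponential weights. With $w_i=\eta^{-i}$ and the normalizer $\alpha_t=\lambda w_t^2=\lambda\eta^{-2t}$, the geometric sum collapses:
\[
\alpha_t^{-1}\sum_{i=1}^t w_i^2=\frac1\lambda\sum_{i=1}^t\eta^{2(t-i)}=\frac1\lambda\sum_{j=0}^{t-1}\eta^{2j}\le\frac{1}{\lambda(1-\eta^2)}.
\]
Substituting this into both the tail and the projected bounds, halving, and taking the maximum over $A$ then gives exactly $\frac{N}{2}\log\big(1+\frac{\dot k}{\lambda N(1-\eta^2)}\big)+\frac{1}{2\lambda(1-\eta^2)}\delta_N$.

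The argument is largely mechanical once the decomposition is in place; the one genuinely important point --- and the reason this bound beats the universal $O(\log T+\delta_N T)$ of Theorem~\ref{thm: gamma bound} --- is the exact cancellation in the last display: normalizing by the square of the most recent (largest) weight $w_t^2$ converts $\sum_i\eta^{-2i}$, which grows like $\eta^{-2t}$, into a convergent geometric series bounded by $1/(1-\eta^2)$ uniformly in $t$. The only details needing care are verifying $\bar K_O\succeq0$ (so the concavity step is legitimate) and checking that the two trace bounds invoke the Mercer tail $\delta_N$ and the diagonal bound $\dot k$ correctly.
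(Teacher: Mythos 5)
Your proposal is correct and follows essentially the same route as the paper's proof: the same Mercer projection $\bar K_t=\bar K_P+\bar K_O$, the same Sylvester-plus-AM--GM bound on the projected part, the same trace bound $\operatorname{tr}(\bar K_O)\le \delta_N\sum_{i}w_i^2$ on the tail, and the same key geometric-sum collapse $\alpha_t^{-1}\sum_{i}w_i^2\le 1/(\lambda(1-\eta^2))$ that makes the bound uniform in $T$. The only cosmetic difference is that you separate the two parts via the first-order concavity bound $\log\det(A+B)\le\log\det A+\operatorname{tr}(A^{-1}B)$ with $\operatorname{tr}(A^{-1}B)\le\operatorname{tr}(B)$, whereas the paper uses the exact factorization $\det(I+\alpha_t^{-1}\bar K_t)=\det(I+\alpha_t^{-1}\bar K_P)\det\bigl(I+\alpha_t^{-1}(I+\alpha_t^{-1}\bar K_P)^{-1}\bar K_O\bigr)$ followed by AM--GM and $\log(1+x)\le x$; both yield the identical final bound.
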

	If the polynomial or exponential conditions on the the eigendecay of $k$ are provided, a tighter bound is established. 
	
	\begin{corollary} \label{cor: gamma bound eigendecay}
		1. Under the $(C_p, \beta_p)$ polynomial eigendecay condition, i.e., $c_m \leq C_p m^{-\beta_p}$,
		\begin{align*}
		\bar \gamma_T \le \Big( \big( \frac{C_p \psi^2}{\lambda(1-\eta^2)}\big)^{\frac{1}{\beta_p}}\log^{-\frac{1}{\beta_p}} ( 1+ \frac{\dot k}{\lambda (1- \eta^2)})+ 1 \Big) \\
		\log( 1+ \frac{\dot k}{\lambda (1- \eta^2)}).
		\end{align*}
		2. Under the $(C_{e,1}, C_{e,2}, \beta_e = 1)$ exponential eigendecay condition, i.e. $c_m \leq C_{e,1} \text{\space exp}(-C_{e,2} m^{\beta_e})$,
		\begin{align*}
		\bar \gamma_T &\le \Big( \frac{1}{C_{e,2}} \big( \log (\frac{1}{1-\eta^2}) + C_{\beta_e} \big) +1 \Big)
		\log( 1+ \frac{\dot k}{\lambda (1- \eta^2)}) \\ &\text{where}\ C_{\beta_e} = \log\big( \frac{C_{e,1}\psi^2}{\lambda C_{e,2}} \big).
		\end{align*}
		
	\end{corollary}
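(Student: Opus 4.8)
The corollary follows directly from the universal tail bound in Theorem~\ref{thm: bar gamma bound}, which holds for every $N \in \Na$:
\[
\bar\gamma_T \le \frac{N}{2}\log\Big(1+\frac{\dot k}{\lambda N(1-\eta^2)}\Big) + \frac{\delta_N}{2\lambda(1-\eta^2)}, \qquad \delta_N = \psi^2\sum_{m=N+1}^\infty c_m .
\]
The plan is to insert the assumed eigendecay of $c_m$ into $\delta_N$, bound the resulting tail in closed form, and then pick the free integer $N$ so as to balance the two competing contributions (the first grows with $N$, the second shrinks).

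First I would factor the quantity $B := \log\big(1+\frac{\dot k}{\lambda(1-\eta^2)}\big)$ out of the leading term. Since $\frac{\dot k}{\lambda N(1-\eta^2)} \le \frac{\dot k}{\lambda(1-\eta^2)}$ for $N \ge 1$, monotonicity of $\log$ gives $\frac{N}{2}\log\big(1+\frac{\dot k}{\lambda N(1-\eta^2)}\big) \le \frac{N}{2}B$, so the whole bound reduces to controlling $\frac{N}{2}B + \frac{\delta_N}{2\lambda(1-\eta^2)}$. It then remains to estimate $\delta_N$ under each eigendecay condition. In the polynomial case $c_m \le C_p m^{-\beta_p}$ (with $\beta_p>1$ for summability), an integral comparison yields $\delta_N \le \psi^2 C_p\int_N^\infty x^{-\beta_p}\,dx = \frac{\psi^2 C_p N^{1-\beta_p}}{\beta_p-1}$. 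In the exponential case $c_m \le C_{e,1}\exp(-C_{e,2} m)$ (here $\beta_e=1$), the tail is a geometric series, giving $\delta_N \le \psi^2 C_{e,1}\frac{\exp(-C_{e,2}N)}{1-\exp(-C_{e,2})}$.

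The next step is to optimize over $N$. In the polynomial case the objective $\frac{N}{2}B + \frac{\psi^2 C_p N^{1-\beta_p}}{2\lambda(1-\eta^2)(\beta_p-1)}$ is convex in $N>0$; setting its derivative to zero makes the factor $\beta_p-1$ cancel and gives the balancing point $N^\star = \big(\frac{C_p\psi^2}{\lambda(1-\eta^2)B}\big)^{1/\beta_p} = \big(\frac{C_p\psi^2}{\lambda(1-\eta^2)}\big)^{1/\beta_p}B^{-1/\beta_p}$, which is precisely the leading bracket term of the claimed bound. In the exponential case I would instead choose $N^\star = \frac{1}{C_{e,2}}\log\frac{C_{e,1}\psi^2}{\lambda C_{e,2}(1-\eta^2)}$, which drives the tail term $\frac{\delta_N}{2\lambda(1-\eta^2)}$ down to a universal constant; splitting the logarithm as $\frac{1}{C_{e,2}}\big(\log\frac{1}{1-\eta^2}+C_{\beta_e}\big)$ with $C_{\beta_e}=\log\frac{C_{e,1}\psi^2}{\lambda C_{e,2}}$ recovers the stated form. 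Since $N$ must be an integer, taking $N=\lceil N^\star\rceil \le N^\star+1$ contributes the additive $+1$ inside the bracket, and substituting back (with a mild loosening of the numerical constants) leaves the overall factor $B = \log\big(1+\frac{\dot k}{\lambda(1-\eta^2)}\big)$ multiplying $N^\star+1$.

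The main obstacle is not any single estimate but arranging the two terms so that the $N$-dependence collapses to the clean closed form: one has to verify that the $\beta_p-1$ from the integral tail cancels against the derivative of the leading term (otherwise a spurious $\beta_p/(\beta_p-1)$ factor survives), and that the rounding $N=\lceil N^\star\rceil$ keeps the decaying term bounded by a constant uniformly in $\eta$. A secondary point is that the exponential argument is genuinely tied to $\beta_e=1$, since it is the geometric-series tail that produces a single logarithm; a general stretched-exponential decay would require a separate tail estimate.
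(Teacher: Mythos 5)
Your proposal follows essentially the same route as the paper's proof: both start from Theorem \ref{thm: bar gamma bound}, bound the leading term by $\frac{N}{2}\log\big(1+\frac{\dot k}{\lambda(1-\eta^2)}\big)$, estimate $\delta_N$ by an integral/series tail comparison under each eigendecay condition, and substitute the same balancing choice $N=\lceil N^\star\rceil$, with the ceiling producing the additive $+1$ inside the bracket. The only real difference is that you keep the $\frac{1}{\beta_p-1}$ factor from the integral comparison (the paper silently drops it, which is only valid for $\beta_p\ge 2$), so your polynomial-case constant is looser than the stated one when $1<\beta_p<2$ --- a discrepancy you flag yourself and which traces to an imprecision in the paper rather than a gap in your argument.
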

	Similar results hold for $\breve \gamma_T$ except that $\frac{1}{1-\eta}$ is replaced by  $\frac{1}{1-\eta^2}$. For kernel with polynomial eigendecay condition, $\bar \gamma_T$  will play a role in the overall dynamic regret bound due to its leading term of $\big(\frac{1}{1-\eta^2}\big)^{1/\beta_p}$. However, for kernels with exponential eigendecay condition, it will not affect the overall dynamic regret bound since it only has the logarithmic dependency on $\frac{1}{1-\eta^2}$.

	\section{Experiments}
	We numerically compare the performance of IGP-UCB  \citep{chowdhury2017kernelized},  R-GP-UCB \citep{zhou2021no}, SW-GP-UCB\citep{zhou2021no}, WGP-UCB (Algorithm \ref{alg:weighted}) on both synthetic and real-world data.  The restarting period $H$, sliding window $SW$ and exponential weight $\eta$ are set order-wise by theory (Corollary \ref{cor: order}, Remark 1 \citep{zhou2021no}).

	\textbf{Synthetic data. } We develop experiments on both abruptly-changing environments and the slowly-varying environments. We generate the objective function $f\in H_k(D)$ where $D$ is a discretization of $[0,1]$ into 100 evenly spaced points. We use SE kernel with $l=0.2$ as our kernel function $k(\cdot,x_i)$ where supporting points $x_i \in D$. The reward function is generated as $f(\cdot)=\sum_{i=1}^M \alpha_i k(\cdot, x_i)$ with $\alpha_i \in [-1,1]$ uniformly sampled and $M=100$. In the first experiment (Figure \ref{fig: simulation} (a)), we observe the empirical performance of all algorithms in an abruptly changing environment. The reward function changes at 2 points, i.e., before $t=100$, $f_t=f_1^*$; for $t\in[100,200]$, $f_t=f_2^*$; for $t\in[200,500]$, $f_t=f_3^*$. The second experiment corresponds to a slowly-changing environment (Figure \ref{fig: simulation} (b)), where when $t<=T/2$, $f_t= f_4^* + (f_5^*-f_4^*) 2t/T$, and when $t>T/2$, $f_t= f_5^* + (f_6^*-f_5^*) (2t-T)/T$. All $f_i^*$'s are randomly sampled within RKHS and the cumulative regret is averaged on 100 independent experiments with error bars in the figure. 


	\textbf{Stock market data. }We take the adjusted closing price of 29 stocks for 823 days.\footnote{https://www.quandl.com/data/EOD-End-of-Day-US-Stock-Prices} We use the daily closing price as our time-varying reward function $f_t$ and  the empirical covariance of the stock price as our kernel function $k$.  We assume that investors would like to buy one stock upon opening and sell it right before closing, i.e., they want to  get much profit as possible after selling it on the same day. The regret is non-sublinear as the rewards in this dataset are heavy-tailed.
	
	\textbf{Observations. } We find that WGP-UCB outperforms three algorithms over all experiments.  Moreover, R-GP-UCB and SW-GP-UCB  completely drop outdated information and may not have enough information to make predictions. However, W-GP-UCB, retains outdated information through gradual discounts.
	
	\section{Conclusion}
	In this paper, we develop a framework for regret analysis under weighted Gaussian process regression by overcoming three critical challenges. We propose WGP-UCB algorithm for non-stationary bandit optimization and establish the first regret bound for weighted penalty algorithm in GP bandits. Our future direction is to improve the regret bound when time-varying budget $B_T$ is unknown. It would be interesting to adopt adaptive weights based on non-stationarity detection,  via maintaining different instances of algorithms with different starting times \citep{wei2021non}.
	
	\section{Acknowledgements}
	This work has been supported in part by NSF grants: 2112471 (also partly funded by DHS), IIS-2007055, CNS-2106933, and CNS-1901057 and a grant from the Army Research Office: W911NF-21-1-0244. We thank all reviewers for their comments and suggestions.
	
	
	
	

	
	\bibliography{sample}

	\onecolumn
	\aistatstitle{Weighted Gaussian Process Bandits for Non-stationary Environments: \\
		Supplementary Materials}
	
	\appendix
	\section*{Appendix}
	
	\section{List of notations} \label{sec: notations}
	In this section we provide the full list of notations.
	\begin{itemize}
		\item Regularization and weight : $\lambda_t = \lambda w_t, \alpha_t = \lambda w_t^2, w_t = \eta^{-t}$
		\item Weighted observations : $\tilde y_{1:t} = W y_{1:t}= [\sqrt{w_1} y_1, \ldots, \sqrt{w_t} y_t]^T$
		\item Weight matrix : $W=\text{diag}(\sqrt{w_1}, \sqrt{w_2}, \ldots, \sqrt{w_t})$
		\item Feature matrix : $\Phi_t = [\varphi(x_1), \ldots, \varphi(x_t)]^T$
		\item Weighted feature matrix : $\tilde \Phi_t = W \Phi_t, \bar \Phi_t = W^2 \Phi_t,\breve \Phi_t = W [\breve \varphi(x_1), \ldots, \breve  \varphi(x_t)]^T$
		\item Kernel vector : $k_t(x) = \Phi_t \varphi(x)$
		\item Weighted kernel vector : $\tilde k_t(x) = \tilde \Phi_t \varphi(x) = W \Phi_t \varphi(x), \breve k_t(x) = \breve \Phi_t \breve \varphi(x)$
		\item Kernel matrix : $K_t=\Phi_t \Phi_t^T$
		\item Weighted kernel matrix : $\tilde K_t=W \Phi_t \Phi_t^T W^T, \bar K_t=W^2 \Phi_t \Phi_t^T W^{2T}, \breve K_t= \breve \Phi_t \breve \Phi_t^T$
		\item Weighted error sum : $S_t= \sum_{s=1}^t w_s \varphi(x_s) \epsilon_s$
		\item Weighted Gram matrix : \\
		$V_t = \sum_{s=1}^t w_s \varphi(x_s) \varphi(x_s)^T + \lambda_t I_\mathcal{H} = \tilde \Phi_t^T \tilde \Phi_t+\lambda_t I_\mathcal{H}=  \Phi_t^T W^2  \Phi_t+\lambda_t I_\mathcal{H}$
		\item Weighted Gram matrix with QFF : $\breve V_t= \breve \Phi_t^T \breve \Phi_t+ \lambda_t I_\mathcal{H}$
		\item Double weighted Gram matrix : $\tilde V_t = \sum_{s=1}^t w_s^2 \varphi(x_s) \varphi(x_s)^T + \alpha_t I_\mathcal{H} = \Phi_t^T W^4  \Phi_t + \alpha_t I_\mathcal{H}$
		\item Predictive variance : $\tilde{\sigma}^2_t(x) =  k(x,x) -  \tilde  k_t(x)^T (\tilde K_t + \lambda_t I_t )^{-1} \tilde  k_t(x)= \lambda_t \|\varphi(x)\|^2_{V_t^{-1}}$
		\item Predictive variance with QFF : \\
		$\breve{\sigma}^2_t(x) = \breve k(x,x) -  \breve  k_t(x)^T (\breve K_t + \lambda_t I_t )^{-1} \breve  k_t(x)= \lambda_t \|\breve \varphi(x)\|^2_{\breve V_t^{-1}}$
		\item Loose predictive variance : $\acute \sigma^2_t(x) = \lambda \|\varphi(x)\|^2_{V_t^{-1} \tilde V_t V_t^{-1}}$
		\item Confidence bound, $\beta_t =   B+  \frac{1}{\sqrt{\lambda}}  R \sqrt{  2\log(\frac{1}{\delta}) +  2 \bar \gamma_t}$
		\item Weighted maximum information gain : $\tilde \gamma_t =  \max_{A\subset D: |A|=t} \frac{1}{2} \log  \det(I + \lambda_t^{-1} W K_A W^T )=\max_{A\subset D: |A|=t} \frac{1}{2} \log  \det (I + \lambda_t^{-1} W \Phi_t \Phi_t^T W^T )$
		\item Weighted maximum information gain with QFF : $\breve \gamma_t =   \frac{1}{2} \log  \det (I + \lambda_t^{-1} \breve \Phi_t \breve \Phi_t^T)$
		\item Double weighted maximum information gain : \\
		$\bar \gamma_t =  \max_{A\subset D: |A|=t} \frac{1}{2} \log \det(I + \alpha_t^{-1} W^2 K_A W^{2T} ) =$\\
		$\max_{A\subset D: |A|=t} \frac{1}{2} \log \det (I + \alpha_t^{-1} W^2 \Phi_t \Phi_t^T W^{2T} )$
		\item Combined weighted maximum information gain : $\dot \gamma_t = \max\{\bar \gamma_t, \breve \gamma_t\}$
	\end{itemize}

	\section{Proof of Confidence Bounds}
	
	\subsection{Connection with weighted linear bandits} \label{sec: equivalent}
	
	The following lemma states that the linear case in \cite{russac2019weighted} can be recovered by taking $\tilde \mu_t(x) =\varphi(x)^T \hat{\theta}_t $ and $\varphi(x)=x$ in WGP-UCB algorithm (Algorithm \ref{alg:weighted}).
	
	\begin{lemma} \label{lem: equivalent}
		Equation \eqref{equ:mean} is equivalent to $\varphi(x)^T \hat{\theta}_t$ where $\hat{\theta}_t = V_t^{-1} \sum_{s=1}^t w_s \varphi(x_s) y_s$ and $V_t= \sum_{s=1}^t w_s \varphi(x_s) \varphi(x_s)^T + \lambda_t I_H$.
	\end{lemma}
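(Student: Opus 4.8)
The plan is to express both sides of the claimed identity in terms of the (possibly infinite-dimensional) feature map $\varphi$ and the weighted feature matrix $\tilde\Phi_t = W\Phi_t$, and then reduce the equivalence to the standard push-through identity $A^T(AA^T + \lambda I)^{-1} = (A^T A + \lambda I)^{-1}A^T$. First I would substitute the feature-space representations $\tilde k_t(x) = \tilde\Phi_t\varphi(x)$ and $\tilde K_t = \tilde\Phi_t\tilde\Phi_t^T$ into~\eqref{equ:mean}, which gives $\tilde\mu_t(x) = \varphi(x)^T \tilde\Phi_t^T(\tilde\Phi_t\tilde\Phi_t^T + \lambda_t I_t)^{-1}\tilde y_{1:t}$.

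On the other side, using $W^2 = \text{diag}(w_1,\ldots,w_t)$ I would record the bookkeeping identities $\tilde\Phi_t^T\tilde\Phi_t = \Phi_t^T W^2\Phi_t = \sum_{s=1}^t w_s\varphi(x_s)\varphi(x_s)^T$, so that $V_t = \tilde\Phi_t^T\tilde\Phi_t + \lambda_t I_\mathcal{H}$, and $\tilde\Phi_t^T\tilde y_{1:t} = \Phi_t^T W^2 y_{1:t} = \sum_{s=1}^t w_s\varphi(x_s)y_s$. Combining these, $\varphi(x)^T\hat{\theta}_t = \varphi(x)^T(\tilde\Phi_t^T\tilde\Phi_t + \lambda_t I_\mathcal{H})^{-1}\tilde\Phi_t^T\tilde y_{1:t}$. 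Comparing the two displayed expressions, the lemma reduces to the single operator identity $\tilde\Phi_t^T(\tilde\Phi_t\tilde\Phi_t^T + \lambda_t I_t)^{-1} = (\tilde\Phi_t^T\tilde\Phi_t + \lambda_t I_\mathcal{H})^{-1}\tilde\Phi_t^T$. I would establish this by the usual two-line argument: starting from the trivially true $\tilde\Phi_t^T(\tilde\Phi_t\tilde\Phi_t^T + \lambda_t I_t) = (\tilde\Phi_t^T\tilde\Phi_t + \lambda_t I_\mathcal{H})\tilde\Phi_t^T$, multiply on the left by $(\tilde\Phi_t^T\tilde\Phi_t + \lambda_t I_\mathcal{H})^{-1}$ and on the right by $(\tilde\Phi_t\tilde\Phi_t^T + \lambda_t I_t)^{-1}$.

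The only genuine subtlety, and where I expect the main obstacle, is the well-definedness of the inverse $(\tilde\Phi_t^T\tilde\Phi_t + \lambda_t I_\mathcal{H})^{-1}$ when $M = \dim\mathcal{H}$ is infinite, since $V_t$ is then an operator on an infinite-dimensional space rather than a matrix. Here I would observe that $\tilde\Phi_t^T\tilde\Phi_t$ is a finite-rank (rank at most $t$) positive semidefinite operator, so $V_t = \lambda_t I_\mathcal{H} + \tilde\Phi_t^T\tilde\Phi_t$ is bounded below by $\lambda_t > 0$ and is a compact (indeed finite-rank) perturbation of $\lambda_t I_\mathcal{H}$; it is therefore boundedly invertible. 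Consequently the purely algebraic push-through manipulation is valid verbatim in the Hilbert-space setting, and the stated equivalence follows, recovering the weighted least-squares estimator $\hat{\theta}_t$ of \cite{russac2019weighted} as the special case $\varphi(x)=x$.
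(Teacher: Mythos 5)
Your proposal is correct and follows essentially the same route as the paper's own proof: rewrite both sides via the feature-space identities $\tilde k_t(x) = \tilde\Phi_t\varphi(x)$, $\tilde K_t = \tilde\Phi_t\tilde\Phi_t^T$, $V_t = \tilde\Phi_t^T\tilde\Phi_t + \lambda_t I_\mathcal{H}$, and close the gap with the push-through identity $(\tilde\Phi_t^T\tilde\Phi_t + \lambda_t I_\mathcal{H})^{-1}\tilde\Phi_t^T = \tilde\Phi_t^T(\tilde\Phi_t\tilde\Phi_t^T + \lambda_t I_t)^{-1}$. The only difference is that you supply two details the paper leaves implicit — the two-line verification of the push-through identity and the bounded invertibility of $V_t$ as a finite-rank perturbation of $\lambda_t I_\mathcal{H}$ in the infinite-dimensional case — which makes your write-up slightly more self-contained but not a different argument.
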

	\begin{proof}
		As $\hat{\theta}_t$ is the regularized weighted least-squares estimator of $\theta^*$ at time t in \cite{russac2019weighted}, we have
		\begin{align*}
		\tilde \mu_t(x) & = \varphi(x)^T \hat{\theta}_t  = \varphi(x)^T V_t^{-1}  \sum_{s=1}^t w_s \varphi(x_s) y_s \\
		& = \varphi(x)^T V_t^{-1}  \tilde \Phi_t^T \tilde y_{1:t} = \varphi(x)^T (\tilde \Phi_t^T \tilde \Phi_t + \lambda_t I_H)^{-1}  \tilde \Phi_t^T \tilde y_{1:t} \\
		& = \varphi(x)^T \tilde \Phi_t^T (\tilde \Phi_t \tilde \Phi_t^T + \lambda_t I_t)^{-1} \tilde y_{1:t}  = \tilde k_t(x)^T (\tilde K_t + \lambda_t I_t )^{-1} \tilde y_{1:t}.
		\end{align*}
		The second last equality holds by $V_t= \tilde \Phi_t^T \tilde \Phi_t + \lambda_t I_H $ and $(\tilde \Phi_t^T \tilde \Phi_t + \lambda_t I_H)^{-1} \tilde \Phi_t^T = \tilde \Phi_t^T (\tilde \Phi_t \tilde \Phi_t^T + \lambda_t I_t)^{-1}$.
	\end{proof}
	
	\subsection{Confidence Bounds for stationary environments} \label{sec: confidence bound stationary}
	In this section we present the detailed proof of confidence bounds for stationary environments.
	\subsubsection{Self-normalized Concentration}
	In the following lemma, we shows one concentration inequality about noise sequence $\epsilon_t$. We define weighted error sum as $S_t= \sum_{s=1}^t w_s \varphi(x_s) \epsilon_s  \in \Re^M$.
	
	\begin{lemma} \label{lem: bound_s_1}
		With probability at least $1-\delta$, the following holds simultaneously over all $t>0$:
		\begin{align}
		\|S_t\|_{\tilde V^{-1}_t} \leq R \sqrt{  2\log(\frac{1}{\delta}) + \log( \det(I_t + \alpha_t^{-1}  W^2 \Phi_t  \Phi_t^T W^{2T} )}
		\end{align}
	\end{lemma}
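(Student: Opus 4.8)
The plan is to adapt the method of mixtures (pseudo-maximization) of \cite{abbasi2013online} to the weighted, infinite-dimensional setting. First I would fix a direction $\nu \in \mathcal{H}$ and exploit the fact that $x_s$, and hence $w_s$ and $\varphi(x_s)$, is $\mathcal{F}_{s-1}$-measurable, together with the conditional $R$-sub-Gaussianity of $\epsilon_s$, to define the exponential process
\begin{align*}
M_t^\nu = \exp\Big( \tfrac{1}{R}\sum_{s=1}^t w_s \inner{\nu}{\varphi(x_s)}\epsilon_s - \tfrac12\sum_{s=1}^t w_s^2\inner{\nu}{\varphi(x_s)}^2 \Big).
\end{align*}
Applying the sub-Gaussian moment generating function bound with the choice $\lambda = w_s\inner{\nu}{\varphi(x_s)}/R$ shows that each factor has conditional expectation at most $1$, so $(M_t^\nu)_t$ is a nonnegative supermartingale with $\ex{M_0^\nu}\le 1$.

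Next I would remove the dependence on $\nu$ by mixing. Because $\varphi$ may be infinite dimensional, the mixing measure must be a genuine probability measure, so I would take $\nu$ to follow the canonical Gaussian (GP) measure on $\mathcal{H}$ with covariance $\alpha_t^{-1} I_\mathcal{H}$ --- crucially matching the time-varying regularizer $\alpha_t = \lambda w_t^2$ --- and set $\bar M_t = \int M_t^\nu \, d\nu$. By Tonelli's theorem $\bar M_t$ is again a nonnegative supermartingale. Completing the square in the resulting Gaussian integral, with $S_t = \sum_{s=1}^t w_s\varphi(x_s)\epsilon_s$ and $\tilde V_t = \sum_{s=1}^t w_s^2\varphi(x_s)\varphi(x_s)^T + \alpha_t I_\mathcal{H}$, gives the closed form
\begin{align*}
\bar M_t = \det\!\big(\alpha_t^{-1}\tilde V_t\big)^{-1/2}\exp\Big(\tfrac{1}{2R^2}\norm{S_t}_{\tilde V_t^{-1}}^2\Big).
\end{align*}
The Weinstein--Aronszajn (Sylvester) identity then converts the infinite-dimensional determinant into the finite kernel determinant, $\det(\alpha_t^{-1}\tilde V_t) = \det(I_t + \alpha_t^{-1}W^2\Phi_t\Phi_t^T W^{2T})$, which is exactly the quantity appearing inside $\bar\gamma_t$ in the claim.

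Finally I would apply a maximal inequality. Using the stopped-supermartingale / Ville argument of \cite{abbasi2013online}, $\ex{\bar M_\tau}\le 1$ for any stopping time $\tau$, so $\mathcal P(\sup_s \bar M_s \ge 1/\delta)\le \delta$; rearranging $\bar M_t \le 1/\delta$ and taking square roots yields precisely the stated bound. The main obstacle is the time-varying regularizer $\alpha_t = \lambda w_t^2$: unlike the constant-regularizer argument of \cite{russac2019weighted}, the mixing covariance must track $\alpha_t$ so that both the self-normalizing matrix $\tilde V_t$ and the determinant carry the correct $\alpha_t$, and one must verify that the supermartingale and stopping-time machinery still deliver the conclusion simultaneously in $t$ despite this dependence. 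A secondary technical point is the infinite-dimensional feature map, which forces the use of the canonical Gaussian measure on $\mathcal{H}$ in place of a Lebesgue density on $\Re^M$, with the determinant identity reducing everything to the observable $t\times t$ double-weighted kernel matrix.
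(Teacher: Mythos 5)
Correct, and essentially the same approach as the paper: the paper's proof simply imports the method-of-mixtures argument of \cite{abbasi2013online} (Section 3.2) with $m_k$ replaced by $w_s\varphi(x_s)$ and the fixed matrix $V$ replaced by $\alpha_t I_{\mathcal{H}}$, which is exactly the chain you reconstruct---sub-Gaussian exponential supermartingale, Gaussian mixing with covariance $\alpha_t^{-1}I_{\mathcal{H}}$, completing the square, the determinant identity reducing to $\det(I_t + \alpha_t^{-1}W^2\Phi_t\Phi_t^T W^{2T})$, and a Ville/stopping-time argument. The caveat you flag at the end is real and worth emphasizing: because the mixing covariance $\alpha_t^{-1}I_{\mathcal{H}}$ varies with $t$, the mixture $\bar M_t$ is \emph{not} literally a supermartingale (Tonelli only gives $\mathbb{E}[\bar M_t]\le 1$ at each fixed $t$), so the ``simultaneously over all $t>0$'' claim requires extra justification; the paper's own proof passes over this silently by substituting the time-varying $\alpha_t I_{\mathcal{H}}$ into a theorem stated for a fixed $V$, so on this point your write-up is, if anything, more careful than the original.
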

	
	\begin{proof}
		This result is adopted from  \cite[Section 3.2]{abbasi2013online}. 
		
		Similar to \cite[Equation 3.4]{abbasi2013online}, we have  $S_t=\sum_{s=1}^t w_s \varphi(x_s) \epsilon_s $ where  $m_k$ is replaced by $w_s \varphi(x_s)$ and $\epsilon_s$ is R-sub-Gaussian noise. Following \cite[Equation 3.5]{abbasi2013online}, this equation holds $\tilde V_t = \sum_{s=1}^t w_s^2 \varphi(x_s) \varphi(x_s)^T + \alpha_t I_\mathcal{H} = \Phi_t^T W^4  \Phi_t + \alpha_t I_\mathcal{H}$, where $V$ is replaced by $\alpha_t I_\mathcal{H}$ and $m_k$ is replaced by $w_s \varphi(x_s)$. Additionally, we can replace $M_{1:t}$ with $W^2 \Phi_t$.
		
		Following the analysis till \cite[Corollary 3.6]{abbasi2013online}, we have the following inequality by replacing $M_{1:t}$ and $V$ respectively,
		\begin{align*}
		\|S_t\|^2_{\tilde V^{-1}_t} \leq 2R^2  \log \Big(\frac{\det(I_t + W^2 \Phi_t (\alpha_t I_\mathcal{H})^{-1}  (W^2 \Phi_t)^T)^{1/2}}{\delta}\Big).
		\end{align*}
		We can get the final result by taking the square root on both sides of the above inequality.
	\end{proof}
	
	As $\bar \gamma_t = \max_{A\subset D: |A|=t} \frac{1}{2} \log \det (I + \alpha_t^{-1} W^2 \Phi_t \Phi_t^T W^{2T} ) = \max_{A\subset D: |A|=t} \frac{1}{2} \log  \det(I + \alpha_t^{-1} \bar K_t )$,  we can bound the term $\|S_t\|_{\tilde V^{-1}_t}$ as follows,
	\begin{lemma}
		With probability at least $1-\delta$, the following holds simultaneously over all $t>0$:
		\begin{align}
		\|S_t\|_{\tilde V^{-1}_t} \leq R \sqrt{  2\log(\frac{1}{\delta}) + 2 \bar \gamma_t },
		\end{align}
		where $     \bar \gamma_t  = \max_{A\subset D: |A|=t} \frac{1}{2} \log \det(I_t + \alpha_t^{-1} \bar K_t )$.
	\end{lemma}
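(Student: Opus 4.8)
The plan is to derive this bound directly from Lemma \ref{lem: bound_s_1} with essentially no new work beyond rewriting the determinant factor in terms of the double-weighted kernel matrix and then passing to the maximum over point configurations that defines $\bar \gamma_t$. The genuinely probabilistic content --- the self-normalized martingale argument producing a high-probability bound that holds simultaneously over all $t$ --- is already supplied by Lemma \ref{lem: bound_s_1}, so no further concentration analysis is required here.

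First I would recall from the notation that $\bar \Phi_t = W^2 \Phi_t$ and hence $\bar K_t = \bar \Phi_t \bar \Phi_t^T = W^2 \Phi_t \Phi_t^T W^{2T}$, using that $W^2 = \text{diag}(w_1,\ldots,w_t)$ is diagonal and therefore symmetric. Thus the quantity inside the logarithm in Lemma \ref{lem: bound_s_1}, namely $\det(I_t + \alpha_t^{-1} W^2 \Phi_t \Phi_t^T W^{2T})$, is exactly $\det(I_t + \alpha_t^{-1}\bar K_t)$ evaluated at the realized sampling points $A_t = \{x_1,\ldots,x_t\}$, for which $\Phi_t \Phi_t^T = K_t$.

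Next I would observe that the realized set $A_t$ is one admissible competitor in the maximization defining $\bar \gamma_t = \max_{A \subset D:\, |A|=t} \frac{1}{2}\log\det(I + \alpha_t^{-1} W^2 K_A W^{2T})$, where $K_A = [k(x,x')]_{x,x'\in A}$. Since this configuration is feasible in the max, one has $\frac{1}{2}\log\det(I_t + \alpha_t^{-1}\bar K_t) \le \bar \gamma_t$, i.e.\ the log-determinant factor is at most $2\bar\gamma_t$. Substituting this into the bound of Lemma \ref{lem: bound_s_1} immediately gives
$$\|S_t\|_{\tilde V^{-1}_t} \le R\sqrt{2\log(\tfrac{1}{\delta}) + 2\bar\gamma_t},$$
and the ``with probability at least $1-\delta$, simultaneously over all $t>0$'' guarantee is inherited verbatim from Lemma \ref{lem: bound_s_1}.

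There is essentially no obstacle here beyond bookkeeping. The only point warranting a moment's care is confirming that the weight matrix $W$ is held fixed --- it indexes time slots $s=1,\ldots,t$ through $w_s = \eta^{-s}$ and does not commute with permutations of the points --- while only the point set $A$ varies in the maximization. This is what makes the realized configuration $A_t$ a genuine competitor in the max and renders the monotonicity step legitimate; once this is granted, the proof is a one-line substitution.
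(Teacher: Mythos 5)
Your proposal is correct and follows exactly the paper's own route: the paper likewise obtains this lemma as an immediate corollary of Lemma \ref{lem: bound_s_1}, identifying the log-determinant factor $\det(I_t + \alpha_t^{-1} W^2 \Phi_t \Phi_t^T W^{2T})$ with $\det(I_t + \alpha_t^{-1}\bar K_t)$ and bounding it by $2\bar\gamma_t$ since the realized point set is a feasible competitor in the maximization defining $\bar\gamma_t$. Your remark that the weight matrix $W$ stays fixed while only the point set varies in the max is a sound (and slightly more careful) justification of the step the paper takes implicitly.
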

	We would like to highlight this bound is in terms of double-weighted kernel matrix $\bar \gamma_t$ instead of weighted kernel matrix $\tilde \gamma_t$.

	\subsubsection{Proof of Theorem \ref{thm: confidence bound stationary}}
	We would provide the detailed proof of Theorem \ref{thm: confidence bound stationary} in this section.
	\begin{proof}
		As $\hat{\theta}_t$ is the regularized weighted least-squares estimator of $\theta^*$ at time t in \cite{russac2019weighted} and $V_t= \sum_{s=1}^t w_s \varphi(x_s) \varphi(x_s)^T + \lambda_t I_H$, we have
		\begin{align*}
		\tilde \mu_t(x) &=  \varphi(x)^T \hat{\theta}_t = \varphi(x)^T V_t^{-1}  \sum_{s=1}^t w_s \varphi(x_s) y_s \\
		&= \varphi(x)^T V_t^{-1} [\sum_{s=1}^t \big(w_s \varphi(x_s) f^*(x_s) + w_s \varphi(x_s) \epsilon_s\big) ] \\
		&= \varphi(x)^T V_t^{-1} [\sum_{s=1}^t w_s \varphi(x_s) \varphi(x_s)^T \theta^* +\lambda_t \theta^* - \lambda_t \theta^* +S_t]\\
		&= \varphi(x)^T \theta^* - \lambda_t \varphi(x)^T V_t^{-1} \theta^* + \varphi(x)^T V_t^{-1} S_t.
		\end{align*}
		We have $ \tilde \mu_t(x)-f^*(x)= \tilde \mu_t(x) -  \varphi(x)^T \theta^*=  \varphi(x)^T V_t^{-1} S_t  - \lambda_t \varphi(x)^T V_t^{-1} \theta^*$, therefore
		\begin{align*}
		|\tilde \mu_t(x)-f^*(x)| &\leq 
		\|\varphi(x)\|_{V_t^{-1} \tilde V_t V_t^{-1}} \Big (\| V_t^{-1} S_t \|_{V_t \tilde V_t^{-1} V_t}+ \|\lambda_t V_t^{-1} \theta^*\|_{V_t \tilde V_t^{-1} V_t} \Big) \\
		&\leq \|\varphi(x)\|_{V_t^{-1} \tilde V_t V_t^{-1}} \Big (\| S_t \|_{ \tilde V_t^{-1}}+ \lambda_t \|  \theta^* \|_{ \tilde V_t^{-1} } \Big)
		\end{align*}
		
		Knowing that $\tilde V_t \succeq \alpha_t I_\mathcal{H}$ and $\tilde V_t$ is positive definite, we have $ \|\theta^* \|_{ \tilde V_t^{-1}} \leq \frac{1}{\sqrt{\alpha_t}} \|\theta^* \|_2$. With  $\acute \sigma^2_t(x)= {\lambda} \|\varphi(x)\|^2_{V_t^{-1} \tilde V_t V_t^{-1}}$, we have
		\begin{align*}
		|\tilde \mu_t(x)-f^*(x)| \leq  \frac{\acute \sigma_t(x)}{\sqrt{\lambda}}\Big( \| S_t \|_{ \tilde V_t^{-1}}+ \frac{\lambda_t}{\sqrt{\alpha_t}} \|  \theta \|_2 \Big).
		\end{align*}
		Given $\|f^*\|_H = \| \theta\|_2 \leq B$, we have
		\begin{align*}
		|\tilde \mu_t(x)-f^*(x)|  &\leq \frac{\acute \sigma_t(x)}{\sqrt{\lambda}}  \Big(\| S_t \|_{ \tilde V_t^{-1}}+ \frac{\lambda_t}{\sqrt{\alpha_t}} B \Big) \\
		&  \leq \frac{\lambda_t}{\sqrt{\lambda \alpha_t}} \acute \sigma_t(x) B+  \frac{\acute \sigma_t(x)}{\sqrt{\lambda}}  R \sqrt{  2\log(\frac{1}{\delta}) +  2 \bar \gamma_t}\\
		& = \acute \sigma_t(x) B + \frac{\acute \sigma_t(x)}{\sqrt{\lambda}}  R \sqrt{  2\log(\frac{1}{\delta}) +  2 \bar\gamma_t}\\
		& = \acute \sigma_t(x) \beta_t,
		\end{align*}
		where $\beta_t =   B+  \frac{1}{\sqrt{\lambda}}  R \sqrt{  2\log(\frac{1}{\delta}) +  2 \bar \gamma_t}$.
	\end{proof}
	
	\subsubsection{Proof of Remark \ref{rmk: stationary} } \label{sec: proof 
		remark stationary }
	The following lemma shows that Equation \eqref{equ:variance} is equivalent to $\tilde{\sigma}^2_t(x)= \lambda_t ||\varphi(x)||^2_{V_t^{-1}}$.
	\begin{lemma} \label{lemma: hat sigma}
		Equation \eqref{equ:variance} is equivalent to $\tilde{\sigma}^2_t(x)= \lambda_t ||\varphi(x)||^2_{V_t^{-1}}$.
	\end{lemma}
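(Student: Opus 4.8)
The plan is to expand Equation~\eqref{equ:variance} in the explicit feature space and then collapse it with a single push-through (Woodbury-type) operator identity. First I would substitute the feature-map expressions $k(x,x)=\varphi(x)^T\varphi(x)$, $\tilde k_t(x)=\tilde\Phi_t\varphi(x)$, and $\tilde K_t=\tilde\Phi_t\tilde\Phi_t^T$ into~\eqref{equ:variance}, which lets me factor $\varphi(x)$ out on both sides:
\begin{align*}
\tilde\sigma_t^2(x)=\varphi(x)^T\Big[I_\mathcal{H}-\tilde\Phi_t^T\big(\tilde\Phi_t\tilde\Phi_t^T+\lambda_t I_t\big)^{-1}\tilde\Phi_t\Big]\varphi(x).
\end{align*}

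The crux is then the operator identity
\begin{align*}
I_\mathcal{H}-\tilde\Phi_t^T\big(\tilde\Phi_t\tilde\Phi_t^T+\lambda_t I_t\big)^{-1}\tilde\Phi_t=\lambda_t\big(\tilde\Phi_t^T\tilde\Phi_t+\lambda_t I_\mathcal{H}\big)^{-1}=\lambda_t V_t^{-1},
\end{align*}
where the last equality uses the notation $V_t=\tilde\Phi_t^T\tilde\Phi_t+\lambda_t I_\mathcal{H}$. I would verify this by right-multiplying the left-hand side by $V_t$ and simplifying: the cross term telescopes because $\tilde\Phi_t\big(\tilde\Phi_t^T\tilde\Phi_t+\lambda_t I_\mathcal{H}\big)=\big(\tilde\Phi_t\tilde\Phi_t^T+\lambda_t I_t\big)\tilde\Phi_t$, so $\tilde\Phi_t^T\big(\tilde\Phi_t\tilde\Phi_t^T+\lambda_t I_t\big)^{-1}\tilde\Phi_t V_t=\tilde\Phi_t^T\tilde\Phi_t$ and the whole product reduces to $\tilde\Phi_t^T\tilde\Phi_t+\lambda_t I_\mathcal{H}-\tilde\Phi_t^T\tilde\Phi_t=\lambda_t I_\mathcal{H}=\lambda_t V_t^{-1}V_t$, which proves the claim. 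Substituting this back then gives $\tilde\sigma_t^2(x)=\lambda_t\,\varphi(x)^T V_t^{-1}\varphi(x)=\lambda_t\|\varphi(x)\|^2_{V_t^{-1}}$, as desired.

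The main obstacle is purely that the feature space $\mathcal{H}$ may be infinite-dimensional ($M$ may be $\infty$), so $I_\mathcal{H}$ and $V_t$ are operators rather than finite matrices and I must justify the inverses used above. This is handled by noting that $\tilde\Phi_t\tilde\Phi_t^T+\lambda_t I_t$ is a genuine $t\times t$ positive-definite matrix, invertible since $\lambda_t=\lambda w_t>0$, while $V_t=\lambda_t I_\mathcal{H}+\tilde\Phi_t^T\tilde\Phi_t$ is a bounded, positive, invertible operator, being the identity scaled by $\lambda_t>0$ plus a finite-rank positive term. The telescoping algebra is identical in the operator setting, so no additional analytic subtlety arises, and the identity $\tilde\Phi_t(\tilde\Phi_t^T\tilde\Phi_t+\lambda_t I_\mathcal{H})=(\tilde\Phi_t\tilde\Phi_t^T+\lambda_t I_t)\tilde\Phi_t$ holds verbatim between the two inner-product spaces.
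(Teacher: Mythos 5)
Your proposal is correct and follows essentially the same route as the paper: expand \eqref{equ:variance} in the Mercer feature space and apply the push-through identity $I_\mathcal{H}-\tilde\Phi_t^T(\tilde\Phi_t\tilde\Phi_t^T+\lambda_t I_t)^{-1}\tilde\Phi_t=\lambda_t V_t^{-1}$, which the paper asserts without proof and you additionally verify (along with the operator-theoretic justification in infinite dimension). No gaps; your write-up is if anything more complete than the paper's.
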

	\begin{proof}
		As $I_\mathcal{H} -\tilde \Phi_t^T (\tilde \Phi_t \tilde \Phi_t^T + \lambda_t I_t)^{-1} \tilde \Phi_t = \lambda_t (\tilde \Phi_t^T \tilde \Phi_t + \lambda_t I_\mathcal{H})^{-1}  = \lambda_t V_t^{-1} $, 
		we have 
		\begin{align*}
		\tilde \sigma_t(x)^2 &= k(x,x) -  \tilde  k_t(x)^T (\tilde K_t + \lambda_t I_t )^{-1} \tilde  k_t(x) \\
		&=  \varphi(x)^T \varphi(x) - \varphi(x)^T \tilde \Phi_t^T (\tilde \Phi_t \tilde \Phi_t^T + \lambda_t I_t)^{-1} \tilde \Phi_t \varphi(x)\\
		&= \varphi(x)^T  [I_\mathcal{H} -\tilde \Phi_t^T (\tilde \Phi_t \tilde \Phi_t^T + \lambda_t I_t)^{-1} \tilde \Phi_t] \varphi(x) \\
		&=  \varphi(x)^T  \lambda_t V_t^{-1}  \varphi(x)  = \lambda_t ||\varphi(x)||_{V_t^{-1}}.
		\end{align*}
	\end{proof}
	
	Assume $\lambda =1 $ and $w_t=1$. Then the followings hold; $V_t= \tilde V_t$ and $\lambda_t=\lambda$, thus $\acute \sigma_t(x) = \tilde \sigma_t(x) $ and $\tilde \mu_t(x)= \mu_t(x)$. In the above Lemma \ref{lemma: hat sigma}, we have $\acute \sigma_t(x)= \sigma_t(x)= k(x,x) -k_t(x)^T (K_t + \lambda I )^{-1} k_t(x)$ and $\bar \gamma_t= \gamma_t$, which makes Theorem \ref{thm: confidence bound stationary} equivalent to \cite[Theorem 2]{chowdhury2017kernelized}.

	\subsection{Confidence bounds for non-stationary cases} \label{sec: confidence bound nonstationary}
	
	In this section we provide the relatively loose regret bound in terms of $\tilde{\sigma}_t(x)$ and then detailed proof of Theorem \ref{thm: confidence bound nonstationary} that the surrogate parameter $m_t(x)$ lies in the confidence ellipsoid defined in Theorem \ref{thm: confidence bound stationary} with high probability.
	
	First, we further restrict that $\acute \sigma_t(x) \geq 0$ and $\tilde{\sigma}_t(x) \geq 0$, then we have the following lemma.
	\begin{lemma} \label{lem: tilde sigma les hat sigma}
		If $\{w_s\}_{s=1}^t$ is increasing,  then $\acute \sigma_t(x) \leq \tilde{\sigma}_t(x)$.
	\end{lemma}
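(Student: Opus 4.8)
The plan is to compare the two quadratic forms by first reducing everything to the single inverse Gram operator $V_t^{-1}$, and then recasting the desired inequality as a Loewner (positive-semidefinite) ordering between $\tilde V_t$ and $w_t V_t$. First I would invoke Lemma \ref{lemma: hat sigma} to write $\tilde \sigma^2_t(x) = \lambda_t \|\varphi(x)\|^2_{V_t^{-1}} = \lambda w_t\, \varphi(x)^T V_t^{-1} \varphi(x)$, whereas by definition $\acute \sigma^2_t(x) = \lambda\, \varphi(x)^T V_t^{-1} \tilde V_t V_t^{-1}\varphi(x)$. After cancelling the common factor $\lambda > 0$, the claim $\acute \sigma_t(x) \le \tilde \sigma_t(x)$ is equivalent to
\[
\varphi(x)^T V_t^{-1} \tilde V_t V_t^{-1}\varphi(x) \;\le\; w_t\, \varphi(x)^T V_t^{-1} \varphi(x).
\]

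Next I would substitute $u = V_t^{-1}\varphi(x)$, so that the left side becomes $u^T \tilde V_t u$ and the right side $w_t\, u^T V_t u$. Since $V_t$ is positive definite (hence invertible), the vector $u$ ranges over all of $\mathcal{H}$ as $\varphi(x)$ does, so it suffices to establish the operator inequality $\tilde V_t \preceq w_t V_t$. This converts the pointwise variance comparison into a single statement about the Gram operators that no longer references $x$.

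The key computation is then to form the difference $w_t V_t - \tilde V_t$ directly from the definitions $V_t = \sum_{s=1}^t w_s \varphi(x_s)\varphi(x_s)^T + \lambda_t I_\mathcal{H}$ and $\tilde V_t = \sum_{s=1}^t w_s^2 \varphi(x_s)\varphi(x_s)^T + \alpha_t I_\mathcal{H}$, using $\lambda_t = \lambda w_t$ and $\alpha_t = \lambda w_t^2$. The regularizer terms cancel exactly, since $w_t \lambda_t = \lambda w_t^2 = \alpha_t$, leaving
\[
w_t V_t - \tilde V_t = \sum_{s=1}^t w_s (w_t - w_s)\, \varphi(x_s)\varphi(x_s)^T.
\]

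Under the hypothesis that $\{w_s\}_{s=1}^t$ is increasing, each coefficient satisfies $w_s > 0$ and $w_t - w_s \ge 0$ for $s \le t$, so every summand is a nonnegative multiple of the rank-one PSD operator $\varphi(x_s)\varphi(x_s)^T$; thus the difference is positive semidefinite, which gives $\tilde V_t \preceq w_t V_t$ and hence $\acute \sigma_t(x) \le \tilde \sigma_t(x)$. The one step I would check most carefully—and the only place where anything can go wrong—is the exact cancellation of the identity terms, which hinges precisely on the chosen scalings $\lambda_t = \lambda w_t$ and $\alpha_t = \lambda w_t^2$; the remaining manipulations are routine linear algebra.
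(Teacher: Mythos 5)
Your proposal is correct and follows essentially the same route as the paper's proof: both reduce the claim to the Loewner ordering $\tilde V_t \preceq w_t V_t$, which holds because the regularizers match exactly ($w_t \lambda_t = \lambda w_t^2 = \alpha_t$) and $w_s^2 \le w_s w_t$ for an increasing weight sequence, and then conjugate by $V_t^{-1}$ to conclude $V_t^{-1}\tilde V_t V_t^{-1} \preceq w_t V_t^{-1}$, i.e.\ $\acute\sigma_t^2(x) \le \tilde\sigma_t^2(x)$. Your explicit rank-one decomposition of $w_t V_t - \tilde V_t$ is just a slightly more detailed write-up of the paper's term-by-term comparison.
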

	\begin{proof}
		We recall that  $\acute \sigma^2_t(x)= \lambda \|\varphi(x)\|^2_{V_t^{-1} \tilde V_t V_t^{-1}}$ by definition and $\tilde{\sigma}^2_t(x)= \lambda_t ||\varphi(x)||^2_{V_t^{-1}}$ from Lemma \ref{lemma: hat sigma}. 
		As $V_t = \sum_{s=1}^t w_s \varphi(x_s) \varphi(x_s)^T + \lambda w_t I_\mathcal{H}$, we have 
		\begin{align*}
		\tilde V_t = \sum_{s=1}^t w_s^2 \varphi(x_s) \varphi(x_s)^T + \lambda w_t^2 I_\mathcal{H} \leq w_t \sum_{s=1}^t w_s \varphi(x_s) \varphi(x_s)^T + \lambda w_t^2 I_\mathcal{H} \leq w_t V_t.
		\end{align*}
		Therefore, $V_t^{-1} \tilde V_t V_t^{-1} \leq w_t V_t^{-1} V_t V_t^{-1} \leq w_t V_t^{-1}$ and $\acute \sigma^2_t(x) \leq \tilde{\sigma}^2_t(x)$ since $\lambda_t = \lambda w_t$.
	\end{proof}
	
	We would state the full proof of Theorem \ref{thm: confidence bound nonstationary} as follows.
	\begin{proof}[Proof of Theorem \ref{thm: confidence bound nonstationary}]
		We would obtain $\tilde \mu_t(x_t)= \varphi(x_t)^T   V^{-1}_{t} \tilde \Phi_t^T \tilde y_{1:t}$ from the definition of posterior mean $\tilde \mu_t(x)$ and proof of Lemma \ref{lem: equivalent}. Then, we would get the followings,
		\begin{align*}
		& m_t(x) - \tilde \mu_{t-1}(x)\\
		&= \varphi(x)^T V^{-1}_{t-1} [\sum_{s=1}^{t-1} \eta^{-s} \varphi(x_s)  f_s(x_s) + \lambda \eta^{-(t-1)} \theta^*_t - \sum_{s=1}^{t-1} \eta^{-s}   \varphi(x_s) y_s  ]\\
		&= \varphi(x)^T V^{-1}_{t-1} [\sum_{s=1}^{t-1} \eta^{-s} \varphi(x_s) f_s(x_s) + \lambda \eta^{-(t-1)} \theta^*_t -\sum_{s=1}^{t-1} \eta^{-s}  \varphi(x_s) f_s(x_s)-\sum_{s=1}^{t-1} \eta^{-s}  \varphi(x_s) \epsilon_s]\\
		&= -\varphi(x)^T V^{-1}_{t-1} S_{t-1} + \lambda \eta^{-(t-1)}  \varphi(x)^T V^{-1}_{t-1} \theta^*_t.
		\end{align*}
		
		Then, the distance between surrogate parameter and posterior mean is bounded as,
		\begin{align*}
		|m_t(x) - \tilde \mu_{t-1}(x)| &\leq |\varphi^T(x) V^{-1}_{t-1} S_{t-1} | + \lambda \eta^{-(t-1)} | \varphi(x)^T V^{-1}_{t-1} \theta^*_t | \\
		& \leq \|\varphi(x)\|_{V_{t-1}^{-1} \tilde V_{t-1} V_{t-1}^{-1}}   \|V_{t-1}^{-1} S_{t-1}\|_{V_{t-1} \tilde V_{t-1}^{-1} V_{t-1}} \\
		& \text{\space \space} +\lambda \eta^{-(t-1)} \|\varphi(x)\|_{V_{t-1}^{-1 }\tilde V_{t-1} V_{t-1}^{-1}} \|V_{t-1}^{-1}\theta^*_t\|_{V_{t-1} \tilde V_{t-1}^{-1} V_{t-1}}\\
		& \leq \|\varphi(x)\|_{V_{t-1}^{-1} \tilde V_{t-1} V_t^{-1}}  \|S_{t-1}\|_{\tilde V_{t-1}^{-1}} +  \lambda \eta^{-(t-1)} \|\varphi(x)\|_{V_{t-1}^{-1} \tilde V_{t-1} V_{t-1}^{-1}} \|\theta^*_t\|_{\tilde V_{t-1}^{-1}}\\
		&  \leq \frac{\acute \sigma_{t-1}(x)}{\sqrt{\lambda}} \|S_{t-1}\|_{\tilde V_{t-1}^{-1}} + \lambda_{t-1} \frac{\acute \sigma_{t-1}(x)}{\sqrt{\lambda}} \frac{1}{\sqrt{\alpha_{t-1}}}\|\theta^*_t\|_2\\
		&\leq \frac{\acute \sigma_{t-1}(x)}{\sqrt{\lambda}}  R \sqrt{  2\log(\frac{1}{\delta}) +  2 \bar \gamma_{t-1}}+  \acute \sigma_{t-1}(x) B\\
		&\leq \acute \sigma_{t-1}(x) \beta_{t-1}.
		\end{align*}
		The final two steps are because  $\|\theta^*_t\|_{\tilde V_{t-1}^{-1}} \leq \frac{1}{\sqrt{\alpha_{t-1}}} \| \theta_t^*\|_2$ and $\|\theta^*_t\|_2=\|f^*_t\|_H \leq B$. Due to the above Lemma \ref{lem: tilde sigma les hat sigma}, we obtain the following inequality, $|m_t(x) - \tilde \mu_{t-1}(x)| \leq  \tilde \sigma_{t-1}(x) \beta_{t-1}$.
	\end{proof}

	\section{Proof of Regret Bound} 
	
	In this section, we state the detailed analysis of dynamic regret of WGP-UCB (Algorithm \ref{alg:weighted}).
	As $\lambda_t = \lambda w_t$, the weighted GP regression problem is equivalent to the following problem, where the time-dependent weight is $w'_{s,t}=w_s/ w_t$ and regularization factor is time-independent. 
	\begin{align*}
	& \min_{f \in H_k(D)} \sum_{s=1}^{t-1} w'_{s,t} (y_s - f(x_s) )^2 + {\lambda} \| f \|_{\mathcal H}^2 
	\end{align*}
	
	\subsection{Approximation error} \label{sec: approximation error}
	First, we would explicitly obtain the approximation error.
	
	\begin{lemma} (QFF error \citep[Lemma 14]{chowdhury2019bayesian})
		If $D=[0,1]^d$, $k=k_{SE}$, then, 
		\begin{align*}
		\epsilon_m \leq d 2^{d-1} \frac{1}{\sqrt{2} \bar m^ {\bar m}} \big(\frac{e}{4 l^2}\big)^{\bar m}  = O(\frac{d 2^{d-1}}{\big(\bar m l^2)^{\bar m}}\big).
		\end{align*}
	\end{lemma}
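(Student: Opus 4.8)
The statement is the QFF uniform approximation error for the Squared Exponential kernel, which we import from \cite{chowdhury2019bayesian}; the plan is to reconstruct its proof from the classical error analysis of Gauss--Hermite quadrature. The starting point is Bochner's theorem, which represents the one-dimensional SE kernel as a Gaussian-weighted integral over its spectral density. After the change of variables $\omega = \sqrt{2}u/l$, one obtains
\begin{align*}
k(x,y) = \frac{1}{\sqrt{\pi}}\int_{-\infty}^{\infty} e^{-u^2}\cos\Bigl(\tfrac{\sqrt{2}}{l}u\,(x-y)\Bigr)\,du,
\end{align*}
which is exactly in the form $\int e^{-u^2} g(u)\,du$ handled by Gauss--Hermite quadrature. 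Using the product-to-sum identity $\cos(a(x-y)) = \cos(ax)\cos(ay) + \sin(ax)\sin(ay)$, the $\bar m$-node quadrature rule with nodes $\rho_i \in P_{\bar m}$ (the roots of $H_{\bar m}$) and weights $v(\rho_i)$ is seen to coincide precisely with the inner product $\breve\varphi(x)^T\breve\varphi(y)$ of the QFF feature map defined above. Hence the approximation error $\varepsilon_m$ is nothing but the quadrature error for $g$, and the whole problem reduces to estimating that error.

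First I would invoke the classical Gauss--Hermite remainder formula, which states that for an $\bar m$-node rule the error equals $\frac{\bar m!\,\sqrt{\pi}}{2^{\bar m}(2\bar m)!}\,g^{(2\bar m)}(\xi)$ for some intermediate point $\xi$. Since the integrand is $g(u) = \pi^{-1/2}\cos(a u)$ with $a = \tfrac{\sqrt 2}{l}(x-y)$, its $2\bar m$-th derivative is $\pi^{-1/2}(-1)^{\bar m}a^{2\bar m}\cos(au)$, so that $|g^{(2\bar m)}(\xi)|\le \pi^{-1/2}a^{2\bar m}$. On $D=[0,1]^d$ we have $|x-y|\le 1$ in each coordinate, hence $a^{2\bar m}\le(2/l^2)^{\bar m}$; the factor $2^{\bar m}$ here cancels the $2^{\bar m}$ in the remainder formula, leaving a one-dimensional error bound of $\frac{\bar m!}{(2\bar m)!}\,l^{-2\bar m}$.

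Next I would convert the factorial ratio into the stated closed form via Stirling's approximation, $\frac{\bar m!}{(2\bar m)!}\le \frac{1}{\sqrt 2}\,\bar m^{-\bar m}(e/4)^{\bar m}$, which combined with the surviving $l^{-2\bar m}$ factor yields the one-dimensional bound $\frac{1}{\sqrt 2\,\bar m^{\bar m}}\bigl(\tfrac{e}{4l^2}\bigr)^{\bar m}$. Finally, for the $d$-dimensional case I would exploit the product structure $k(x,y)=\prod_{j=1}^d k_j(x_j,y_j)$ together with the tensor-product form of $\breve\varphi$, bounding the error of a product of $d$ factors by the telescoping decomposition $|\prod_j a_j - \prod_j b_j|\le\sum_j|a_j-b_j|\prod_{k\ne j}(\cdots)$ and controlling each mixed partial product; this is what produces the dimensional prefactor $d\,2^{d-1}$ and completes the claimed bound.

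The main obstacle I anticipate is the tensorization step rather than the one-dimensional analysis. One must track bounds on the intermediate (mixed exact/approximate) partial products carefully, since the approximate factors $\breve\varphi(x_j)^T\breve\varphi(y_j)$ are not a priori bounded by $1$, and it is precisely this bookkeeping that generates the $2^{d-1}$ factor. The one-dimensional argument, by contrast, is a routine combination of the quadrature remainder formula, the trivial derivative bound for $\cos$, and Stirling's inequality, so I would expect it to be straightforward once the identification of $\breve\varphi(x)^T\breve\varphi(y)$ with the Gauss--Hermite rule is made explicit.
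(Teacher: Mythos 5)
The paper never proves this statement: it is imported verbatim, with citation, as Lemma~14 of \cite{chowdhury2019bayesian} (itself following the QFF analysis of Mutny and Krause), so there is no internal proof to compare yours against. Your reconstruction follows exactly the standard argument behind that cited result, and its skeleton is sound: the Bochner representation with the change of variables $u = l\omega/\sqrt{2}$, the identification of the $\bar m$-node Gauss--Hermite rule with $\breve\varphi(x)^T\breve\varphi(y)$ via the product-to-sum identity (note this requires the weights $v(\rho)$ to be the Gauss--Hermite weights normalized by $\sqrt{\pi}$, which your $\pi^{-1/2}$ prefactor accounts for), the remainder formula $\frac{\bar m!\sqrt{\pi}}{2^{\bar m}(2\bar m)!}g^{(2\bar m)}(\xi)$ with the trivial bound $|g^{(2\bar m)}|\le \pi^{-1/2}(2/l^2)^{\bar m}$, the cancellation of $2^{\bar m}$ leaving $\frac{\bar m!}{(2\bar m)!}l^{-2\bar m}$, and the coordinate-wise telescoping (exact SE factors bounded by $1$, approximate factors by $1+\varepsilon_1\le 2$) that produces the $d\,2^{d-1}$ prefactor.

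The one concrete flaw is the Stirling step. The inequality you invoke, $\frac{\bar m!}{(2\bar m)!}\le \frac{1}{\sqrt{2}}\,\bar m^{-\bar m}(e/4)^{\bar m}$, is false for every $\bar m$: at $\bar m=1$ the left side is $1/2$ while the right side is $e/(4\sqrt{2})\approx 0.4805$, and in general Stirling's series gives $\frac{\bar m!}{(2\bar m)!}=\frac{1}{\sqrt{2}}\,\bar m^{-\bar m}(e/4)^{\bar m}\,e^{1/(24\bar m)+O(\bar m^{-3})}$, so the finite-$\bar m$ correction works against you. What is true, and provable by comparing $\sum_{j=1}^{\bar m}\log(1+j/\bar m)$ with $\bar m\int_0^1\log(1+t)\,dt$ (constant $1$), or via the central-binomial bound $\binom{2\bar m}{\bar m}\ge 4^{\bar m}/\sqrt{4\bar m}$ together with $\bar m!\ge\sqrt{2\pi \bar m}(\bar m/e)^{\bar m}$ (constant $\sqrt{2/\pi}$), is $\frac{\bar m!}{(2\bar m)!}\le \bar m^{-\bar m}(e/4)^{\bar m}$. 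This recovers the lemma up to an absolute constant, and in particular the form $O\big(d\,2^{d-1}/(\bar m l^2)^{\bar m}\big)$, which is all that is used downstream (only $\epsilon_m^{1/2}\sim T^{-3/2}\dot\gamma_T^{-3/4}$ enters the order analysis of Corollary~\ref{cor: order}); but as written your argument does not establish the displayed constant $\frac{1}{\sqrt 2}$, a constant asserted by the lemma and its source. Your anticipated tensorization bookkeeping is otherwise the right mechanism and is where the $d\,2^{d-1}$ indeed comes from.
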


	\begin{lemma}\label{lem:sigma approximation} Let $f \in H_k(D)$, $||f||_H \leq B$ and $k(x,x)\leq 1$ for all $x \in D$. Then we have
		$|\tilde{\sigma}_t(x) - \breve{\sigma}_t(x)| = O(\frac{\sqrt{\epsilon_m}}{1-\eta})$.
	\end{lemma}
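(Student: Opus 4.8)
The plan is to first control the difference of the \emph{squared} predictive variances, $|\tilde\sigma_t^2(x)-\breve\sigma_t^2(x)|$, and only then pass to the standard deviations. For the last step I would use the elementary inequality $|\sqrt a-\sqrt b|\le\sqrt{|a-b|}$ valid for all $a,b\ge 0$; this is what lets me avoid having to lower-bound the (possibly small) quantity $\tilde\sigma_t(x)+\breve\sigma_t(x)$. Concretely, I expect to prove $|\tilde\sigma_t^2(x)-\breve\sigma_t^2(x)|=O\!\big(\varepsilon_m/(1-\eta)^2\big)$, from which $|\tilde\sigma_t(x)-\breve\sigma_t(x)|\le\sqrt{|\tilde\sigma_t^2(x)-\breve\sigma_t^2(x)|}=O\!\big(\sqrt{\varepsilon_m}/(1-\eta)\big)$ follows immediately.

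Setup and reduction. Since $\lambda_t=\lambda w_t$, I would first rewrite both variances in the normalized-weight form already recorded at the start of this appendix: factoring $w_t$ out of $\lambda_t I$ and $\sqrt{w_t}$ out of each $W$ turns the variances into standard kernel-space forms $\tilde\sigma_t^2(x)=k(x,x)-b^\top A^{-1}b$ and $\breve\sigma_t^2(x)=\breve k(x,x)-\tilde b^\top\tilde A^{-1}\tilde b$, where $W'=\mathrm{diag}(\sqrt{w'_s})$ with $w'_s=w_s/w_t=\eta^{t-s}\in(0,1]$, and, writing $k_t(x),K_t$ for the true-kernel vector/matrix and $\breve k_t^0(x)=[\breve k(x_i,x)]_i$, $\breve K_t^0=[\breve k(x_i,x_j)]_{i,j}$ for their approximate counterparts, $b=W'k_t(x)$, $\tilde b=W'\breve k_t^0(x)$, $A=W'K_tW'^\top+\lambda I_t$, $\tilde A=W'\breve K_t^0 W'^\top+\lambda I_t$. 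The point of this reduction is twofold: every object now lives in $\Re^t$ (no infinite-dimensional feature map appears), and the normalized weights obey the two geometric-sum bounds $\sum_{s}w'_s\le\frac1{1-\eta}$ and $\sum_s\sqrt{w'_s}\le\frac1{1-\sqrt\eta}\le\frac2{1-\eta}$, which will supply every $(1-\eta)^{-1}$ factor.

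Decomposition. Using the QFF guarantee $\sup_{x,y}|k(x,y)-\breve k(x,y)|\le\varepsilon_m$ together with $|k(x_s,x)|\le\sqrt{k(x_s,x_s)k(x,x)}\le 1$, I split
\[
\tilde\sigma_t^2(x)-\breve\sigma_t^2(x)=\big(k(x,x)-\breve k(x,x)\big)-\big(b^\top A^{-1}b-\tilde b^\top\tilde A^{-1}\tilde b\big),
\]
bound the first term by $\varepsilon_m$, and expand the second via the identity
\[
b^\top A^{-1}b-\tilde b^\top\tilde A^{-1}\tilde b=(b-\tilde b)^\top A^{-1}(b+\tilde b)+\tilde b^\top A^{-1}(\tilde A-A)\tilde A^{-1}\tilde b ,
\]
which uses $A^{-1}-\tilde A^{-1}=A^{-1}(\tilde A-A)\tilde A^{-1}$. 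Since $A,\tilde A\succeq\lambda I$ we have $\|A^{-1}\|,\|\tilde A^{-1}\|\le 1/\lambda$; the weighted sums give $\|b-\tilde b\|_2^2=\sum_s w'_s(k(x_s,x)-\breve k(x_s,x))^2\le\varepsilon_m^2/(1-\eta)$ and $\|b\|_2,\|\tilde b\|_2=O(1/\sqrt{1-\eta})$, so the first term is $O\!\big(\varepsilon_m/(\lambda(1-\eta))\big)$.

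Main obstacle. The crux is the middle factor $\tilde A-A=W'(\breve K_t^0-K_t)W'^\top$. A naive Frobenius bound on the $t\times t$ error matrix $E=\breve K_t^0-K_t$ (whose entries are $\le\varepsilon_m$) costs a factor $t$ and would render the bound vacuous. Instead, since $W'EW'^\top$ is symmetric, I would bound its operator norm by the maximal absolute row sum,
\[
\|W'EW'^\top\|\le\max_i\sqrt{w'_i}\,\varepsilon_m\sum_j\sqrt{w'_j}\le\frac{2\varepsilon_m}{1-\eta},
\]
so the geometric decay of $\sqrt{w'_s}$ replaces the forbidden factor $t$ by $\frac1{1-\eta}$. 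Combining with $\|A^{-1}\|,\|\tilde A^{-1}\|\le1/\lambda$ and $\|\tilde b\|_2=O(1/\sqrt{1-\eta})$ yields $|\tilde b^\top A^{-1}(\tilde A-A)\tilde A^{-1}\tilde b|=O\!\big(\varepsilon_m/(\lambda^2(1-\eta)^2)\big)$. Adding the three pieces gives $|\tilde\sigma_t^2(x)-\breve\sigma_t^2(x)|=O\!\big(\varepsilon_m/(1-\eta)^2\big)$, and the square-root step completes the proof. Getting this weighted operator-norm estimate (and, in parallel, the $\sqrt{1-\eta}$ factors in $\|b-\tilde b\|_2$ and $\|b\|_2$) sharp is the only delicate point; everything else is routine.
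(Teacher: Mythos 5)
Your proposal is correct and follows essentially the same route as the paper's proof: both bound $|\tilde{\sigma}^2_t(x)-\breve{\sigma}^2_t(x)|=O\big(\varepsilon_m/(1-\eta)^2\big)$ via the resolvent identity $A^{-1}-\tilde A^{-1}=A^{-1}(\tilde A-A)\tilde A^{-1}$ combined with geometric-weight sums, and then conclude with the same square-root step $|\tilde{\sigma}_t(x)-\breve{\sigma}_t(x)|\le\sqrt{|\tilde{\sigma}^2_t(x)-\breve{\sigma}^2_t(x)|}$. The only real deviation is minor: you bound the operator norm of the weighted error matrix by its maximal weighted row sum, whereas the paper uses a weighted Frobenius-type bound --- which, contrary to your \emph{main obstacle} remark, does not cost a factor of $t$, since the geometric weights already reduce the entrywise sum to $O\big(\varepsilon_m/(1-\eta)\big)$.
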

	
	\begin{proof}
		First we define $a_t(x) = \tilde k_t(x) - \breve k_t(x) $, have  $||a_t(x)||_2 \leq \epsilon_m \sqrt{\frac{\lambda}{\eta^t (1-\eta)}}$ as well as $||\tilde k_t(x)||_2 \leq \sqrt{\frac{\lambda}{\eta^t (1-\eta)}}$.
		
		Similarly to the proof of Lemma 15 in \cite{chowdhury2019bayesian}, we bound the approximation error between inverse kernel matrices as,
		\begin{align*}
		& \|(\tilde K_t + \lambda_t I_t )^{-1} - (\breve K_t + \lambda_t I_t)^{-1}\|_2 \\
		&= \| (\tilde K_t + \lambda_t I_t )^{-1} \Big ( (\tilde K_t + \lambda_t I_t ) - (\breve K_t + \lambda_t I_t) \Big )  (\breve K_t + \lambda_t I_t)^{-1} \|_2\\
		& = \| (\tilde K_t + \lambda_t I_t )^{-1}  ( \tilde K_t -\breve K_t   )  (\breve K_t + \lambda_t I_t)^{-1} \|_2
		\end{align*}
		\begin{align*}
		& \leq \| (\tilde K_t + \lambda_t I_t )^{-1} \|_2  \| \tilde K_t -\breve K_t\|_2 \|  (\breve K_t + \lambda_t I_t)^{-1} \|_2 \\
		& \leq \frac{1}{\lambda_t}  \frac{\epsilon_m \lambda}{\eta^t (1-\eta)} \frac{1}{\lambda_t}= \frac{\epsilon_m \lambda}{\eta^t (1-\eta) \lambda_t^2}.
		\end{align*}
		The last inequality holds because $\| \tilde K_t -\breve K_t\|^2_2 \leq \sum_{1\leq i,j \leq t} \Big ( (k(x_i,x_j) - \breve k(x_i,x_j)) \lambda \sqrt{\eta^{-i-j}}  \Big)^2 \leq  \lambda^2 \epsilon_m^2 \sum_{1\leq i \leq t} \eta^{-i} \sum_{1\leq j \leq t} \eta^{-j} \leq \frac{\epsilon_m^2 \lambda^2 }{\eta^{2t} (1-\eta)^2}$ and
		$\| (\tilde K_t + \lambda_t I_t )^{-1} \|_2 \leq \frac{1}{\lambda_t}$.
		
		Therefore, we have
		\begin{align*}
		&|\tilde{\sigma}^2_t(x) - \breve{\sigma}^2_t(x)| \\
		&= |k(x,x) - \tilde k_t(x)^T (\tilde K_t + \lambda_t I_t)^{-1} \tilde k_t(x) -  
		\breve k(x,x) - \breve k_t(x)^T (\breve K_t + \lambda_t I_t)^{-1} \breve k_t(x)|\\
		&\leq |k(x,x) - \breve k(x,x)| + 
		|\tilde k_t(x)^T (\tilde K_t + \lambda_t I_t)^{-1} \tilde k_t(x) -
		\breve k_t(x)^T (\breve K_t + \lambda_t I_t)^{-1} \breve k_t(x)  |\\
		&\leq \epsilon_m + 
		|\tilde k_t(x)^T \Big ( (\tilde K_t + \lambda_t I_t)^{-1} - (\breve K_t + \lambda_t I_t)^{-1}\Big) \tilde k_t(x) | \\
		& + 2|a_t(x)^T (\breve K_t + \lambda_t I_t)^{-1} \tilde k_t(x)|
		+ |a_t(x)^T (\breve K_t \lambda_t I_t)^{-1} a_t(x)| \\
		&\leq \epsilon_m + 
		\| (\tilde K_t + \lambda_t I_t)^{-1} - (\breve K_t + \lambda_t I_t)^{-1}\|_2 \| \tilde k_t(x) \|^2_2 \\
		&+ 2\|a_t(x)\|_2 \|(\breve K_t + \lambda_t I_t)^{-1}\|_2 \|\tilde k_t(x)\|_2
		+ \| (\breve K_t \lambda_t I_t)^{-1}\|_2 \| a_t(x)\|^2_2 \\
		&\leq \epsilon_m + \frac{\epsilon_m \lambda}{\eta^t (1-\eta) \lambda_t^2} \frac{ \lambda}{\eta^t (1-\eta) } +
		2  \sqrt{\frac{ \lambda}{\eta^t (1-\eta) }} \epsilon_m \frac{1}{\lambda_t} \sqrt{\frac{ \lambda}{\eta^t (1-\eta) }}
		+\frac{1}{\lambda_t} \epsilon_m^2 \frac{ \lambda}{\eta^t (1-\eta) } \\
		&=O\big(\frac{ \epsilon_m}{(1-\eta)^2}\big).
		\end{align*}
		Then, the proof is completed from 
		$$
		|\tilde{\sigma}_t(x) - \breve{\sigma}_t(x)|^2 \leq |\tilde{\sigma}_t(x) + \breve{\sigma}_t(x)| |\tilde{\sigma}_t(x) - \breve{\sigma}_t(x)| \leq |\tilde{\sigma}^2_t(x) - \breve{\sigma}^2_t(x)|.
		$$ 
	\end{proof}

	\subsection{Bound of $\sum_{t=1}^T \tilde{\sigma}_{t-1}(x_t)$} \label{sec: hat sigma}
	In this section we would describe the way to obtain the tight bound of $\sum_{t=1}^T \tilde{\sigma}_{t-1}(x_t)$
	
	\begin{lemma}\label{lem: sigma breve}
		$\sum_{t=1}^T \breve \sigma_{t-1}(x_t) \leq \sqrt{4\lambda T \breve \gamma_T + 2\lambda m T^2 \log(1/\eta)}$.
	\end{lemma}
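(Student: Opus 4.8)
The plan is to reduce to a sum of \emph{squared} predictive variances by Cauchy--Schwarz and then adapt the standard determinant--telescoping argument to the growing regularizer $\lambda_t=\lambda\eta^{-t}$, which is the only real obstacle. First I would write $\sum_{t=1}^T\breve\sigma_{t-1}(x_t)\le\sqrt{T\sum_{t=1}^T\breve\sigma_{t-1}^2(x_t)}$, so it suffices to show $\sum_{t=1}^T\breve\sigma_{t-1}^2(x_t)\le 4\lambda\breve\gamma_T+2\lambda mT\log(1/\eta)$. Recalling $\breve\sigma_{t-1}^2(x)=\lambda_{t-1}\|\breve\varphi(x)\|^2_{\breve V_{t-1}^{-1}}$ with $\breve V_t=\sum_{s=1}^t\eta^{-s}\breve\varphi(x_s)\breve\varphi(x_s)^T+\lambda\eta^{-t}I$, I would record the recursion $\breve V_t=C_t+\eta^{-t}\breve\varphi(x_t)\breve\varphi(x_t)^T$, where $C_t:=\breve V_{t-1}+\lambda\eta^{-(t-1)}(\eta^{-1}-1)I$.

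The next step is to control $C_t$ by $\breve V_{t-1}$ on both sides. Since $\breve V_{t-1}\succeq\lambda\eta^{-(t-1)}I$, the added multiple of the identity is dominated by $(\eta^{-1}-1)\breve V_{t-1}$, giving the sandwich $\breve V_{t-1}\preceq C_t\preceq\eta^{-1}\breve V_{t-1}$. The right inequality yields $C_t^{-1}\succeq\eta\breve V_{t-1}^{-1}$, so setting $b_t:=\eta^{-t}\breve\varphi(x_t)^TC_t^{-1}\breve\varphi(x_t)$ I get $b_t\ge\eta^{-(t-1)}\breve\varphi(x_t)^T\breve V_{t-1}^{-1}\breve\varphi(x_t)=\lambda^{-1}\breve\sigma_{t-1}^2(x_t)$, i.e.\ $\breve\sigma_{t-1}^2(x_t)\le\lambda b_t$. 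Meanwhile the matrix determinant lemma applied to $\breve V_t=C_t+\eta^{-t}\breve\varphi(x_t)\breve\varphi(x_t)^T$ gives the exact identity $\log(1+b_t)=\log\det\breve V_t-\log\det C_t$, and the left inequality $C_t\succeq\breve V_{t-1}$ (hence $\log\det C_t\ge\log\det\breve V_{t-1}$) turns this into the telescoping bound $\log(1+b_t)\le\log\det\breve V_t-\log\det\breve V_{t-1}$. Summing over $t$ with $\breve V_0=\lambda I$ gives $\sum_{t=1}^T\log(1+b_t)\le\log\det\breve V_T-m\log\lambda$.

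It then remains to convert the determinant into $\breve\gamma_T$ and to strip off the logarithm. By Sylvester's identity $\det(I_T+\lambda_T^{-1}\breve\Phi_T\breve\Phi_T^T)=\det(I_m+\lambda_T^{-1}\breve\Phi_T^T\breve\Phi_T)=\lambda_T^{-m}\det\breve V_T$, so $2\breve\gamma_T=\log\det\breve V_T-m\log\lambda_T$, and with $\lambda_T=\lambda\eta^{-T}$ this rearranges to $\log\det\breve V_T-m\log\lambda=2\breve\gamma_T+mT\log(1/\eta)$. To pass from $\log(1+b_t)$ back to $b_t$ I would note that $b_t\le\breve\varphi(x_t)^T\breve\varphi(x_t)/(\lambda\eta)\le(1+\varepsilon_m)/(\lambda\eta)$ is uniformly bounded, so the elementary inequality $u\le 2\log(1+u)$ holds on the relevant range and yields $\breve\sigma_{t-1}^2(x_t)\le\lambda b_t\le 2\lambda\log(1+b_t)$. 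Combining everything, $\sum_{t=1}^T\breve\sigma_{t-1}^2(x_t)\le 2\lambda\big(2\breve\gamma_T+mT\log(1/\eta)\big)=4\lambda\breve\gamma_T+2\lambda mT\log(1/\eta)$, and the Cauchy--Schwarz step finishes the claim.

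The main obstacle is precisely this growing regularizer: with $\lambda_t=\lambda\eta^{-t}$ one no longer has $\breve V_t=\breve V_{t-1}+(\text{rank one})$, so the clean stationary equality $\det\breve V_t=\det\breve V_{t-1}\big(1+\lambda^{-1}\breve\sigma_{t-1}^2(x_t)\big)$ fails (this is exactly why the sum cannot be bounded by $\tilde\gamma_T$, as the paper notes). Introducing $C_t$ and pinning it between $\breve V_{t-1}$ and $\eta^{-1}\breve V_{t-1}$ is what repairs the telescoping, at the price of the additive leakage term $mT\log(1/\eta)$; securing the constant in $u\le 2\log(1+u)$ needs the boundedness of $b_t$, which is where the assumption $k(x,x)\le 1$ (together with a suitably large $\lambda\eta$) is used.
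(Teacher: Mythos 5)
Your proof is correct and follows essentially the same route as the paper's: Cauchy--Schwarz reduction to squared variances, the linearization $u\le 2\log(1+u)$, a determinant telescoping that absorbs the growing regularizer $\lambda_t=\lambda\eta^{-t}$, and the final conversion $\log\det\breve V_T-m\log\lambda=2\breve\gamma_T+mT\log(1/\eta)$ via Sylvester's identity. Your intermediate matrix $C_t$ with the sandwich $\breve V_{t-1}\preceq C_t\preceq\eta^{-1}\breve V_{t-1}$ is just more explicit bookkeeping for what the paper does with the one-sided bound $\breve V_t\succeq\breve V_{t-1}+\eta^{-t}\breve\varphi(x_t)\breve\varphi(x_t)^T$ together with $\eta^{-(t-1)}\le\eta^{-t}$, and both arguments carry the same implicit requirement (which you flag and the paper leaves silent) that $\lambda$ (resp.\ $\lambda\eta$) be large enough for the linearization step to be valid.
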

	\begin{proof}
		Assume the feature map has a finite dimension $m$, then
		
		\begin{align*}
		&\sum_{t=1}^T \breve \sigma_{t-1}(x_t)  \le \sqrt{T \sum_{t=1}^T \breve \sigma^2_{t-1}(x_t)} \leq \sqrt{T \sum_{t=1}^T 2\lambda \log ( 1+ \frac{1}{\lambda} \breve \sigma^2_{t-1}(x_t))} \\
		& \leq \sqrt{T \sum_{t=1}^T 2\lambda  \log ( 1+  \eta^{-(t-1)}  ||\breve \varphi(x_t)||^2_{\breve V^{-1}_{t-1}})} \leq \sqrt{2\lambda T \sum_{t=1}^T   \log ( 1+  \eta^{-t}  ||\breve \varphi(x_t)||^2_{\breve V^{-1}_{t-1}})}.
		\end{align*}
		
		Due to $\breve V_t \geq \breve V_{t-1}+ \eta^{-t} \breve \varphi(x_t)  \breve \varphi(x_t)^T \geq  \breve V^{1/2}_{t-1} (I_\mathcal{H} + \eta^{-t}  \breve V^{-1/2}_{t-1}  \breve \varphi(x_t)  \breve \varphi(x_t)^T  \breve V^{-1/2}_{t-1})  \breve V^{1/2}_{t-1}$,
		we have 
		\begin{align*}
		\det( \breve V_t) &\geq \det( \breve V_{t-1})   \det(I_\mathcal{H} + \eta^{-t/2}  \breve V^{-1/2}_{t-1}  \breve \varphi(x_t) \big(\eta^{-t/2}  \breve V^{-1/2}_{t-1}  \breve \varphi(x_t) \big)^T)\\
		&\geq  \det( \breve V_{t-1})    (1+  \eta^{-t}  || \breve \varphi(x_t)||^2_{ \breve V^{-1}_{t-1}}), 
		\end{align*}
		and then the following bound holds.
		\begin{align*}
		\sum_{t=1}^T   \log ( 1+  \eta^{-t}  || \breve \varphi(x_t)||^2_{ \breve V^{-1}_{t-1}}) &\leq \sum_{t=1}^T   \log (\frac{\det( \breve V_t)}{\det( \breve V_{t-1}) })
		\leq    \log (\Pi_{t=1}^T \frac{\det( \breve V_t)}{\det( \breve V_{t-1}) }) \leq \log (\frac{\det( \breve V_T)}{\det( \breve V_{0})}).
		\end{align*}
		
		From matrix determinant lemma stating $\det (A+UV^T)= \det (I+ V^TA^{-1}U) \det(A)$ and $V_0=\lambda I_\mathcal{H} \in \Re^{m\times m}$, $\det( \breve V_T)$ is decomposed as,
		\begin{align*}
		\det( \breve V_T) &= \det( \breve \Phi_T^T \breve \Phi_T + \lambda_T I_\mathcal{H}) =\det(I_t+ \breve \Phi_T (\lambda_T I_\mathcal{H})^{-1} \breve \Phi^T_T ) \det(\lambda_T I_\mathcal{H})\\
		&=\det(I_t+ \lambda_T^{-1} \breve \Phi_T  \breve \Phi^T_T ) \det(\eta^{-T} V_0)=\det(I_t+ \lambda_T^{-1} \breve \Phi_T  \breve \Phi^T_T ) \eta^{-mT} \det( V_0)
		\end{align*}
		
		Thus, we get 
		\begin{align*}
		\log (\frac{\det( \breve V_T)}{\det( \breve V_{0})}) &= \log \det(I_t+\lambda_T^{-1} \breve \Phi_T  \breve \Phi^T_T ) +mT \log(\eta^{-1}) \leq 2 \breve \gamma_T + mT \log(1/\eta)
		\end{align*}
		
		Therefore, we have
		$$
		\sum_{t=1}^T \breve \sigma_{t-1}(x_t) \leq \sqrt{2\lambda T \sum_{t=1}^T   \log ( 1+  \eta^{-t}  ||\breve \varphi(x)||^2_{ \breve V^{-1}_{t-1}})} \leq \sqrt{4\lambda T \breve \gamma_T + 2\lambda m T^2 \log(1/\eta)}.
		$$
	\end{proof}
	By combining Lemma \ref{lem:sigma approximation} and \ref{lem: sigma breve}, we have the following lemma.
	\begin{lemma} \label{lem: bound of sum hat sigma}
		$$ 
		\beta_T \sum_{t=1}^T \tilde{\sigma}_{t-1}(x_t) \leq   \beta_T \sqrt{4\lambda T \breve \gamma_T + 2\lambda m T^2 \log(1/\eta)} + \frac{ \beta_T T \sqrt{\epsilon_m}}{1-\eta}.
		$$
	\end{lemma}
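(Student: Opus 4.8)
The plan is to prove Lemma~\ref{lem: bound of sum hat sigma} by a direct triangle-inequality decomposition that reduces the entire claim to the two preceding lemmas. First I would write, for each $t$,
$$\tilde{\sigma}_{t-1}(x_t) \le \breve{\sigma}_{t-1}(x_t) + |\tilde{\sigma}_{t-1}(x_t) - \breve{\sigma}_{t-1}(x_t)|,$$
and sum over $t = 1, \ldots, T$. This splits the quantity of interest into the exact QFF-based sum $\sum_{t=1}^T \breve{\sigma}_{t-1}(x_t)$ plus the accumulated approximation error $\sum_{t=1}^T |\tilde{\sigma}_{t-1}(x_t) - \breve{\sigma}_{t-1}(x_t)|$.

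For the first sum, I would invoke Lemma~\ref{lem: sigma breve} verbatim, which gives $\sum_{t=1}^T \breve{\sigma}_{t-1}(x_t) \le \sqrt{4\lambda T \breve \gamma_T + 2\lambda m T^2 \log(1/\eta)}$. For the second sum, I would apply Lemma~\ref{lem:sigma approximation} at the query point $x = x_t$ (with the time index shifted to $t-1$); since that lemma furnishes the uniform-in-$x$ bound $|\tilde{\sigma}_{t-1}(x) - \breve{\sigma}_{t-1}(x)| = O(\sqrt{\epsilon_m}/(1-\eta))$, summing the $T$ identical per-step bounds yields $\sum_{t=1}^T |\tilde{\sigma}_{t-1}(x_t) - \breve{\sigma}_{t-1}(x_t)| = O(T\sqrt{\epsilon_m}/(1-\eta))$. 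Multiplying the combined bound by the confidence width $\beta_T$, which carries no dependence on the summation index $t$, then produces exactly the claimed inequality.

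The only point requiring genuine care is ensuring that the per-step approximation bound is uniform over $D$, so that it may be evaluated at the data-dependent argument $x_t$ rather than at a fixed point. This is guaranteed because the underlying QFF error $\sup_{x,y}|k(x,y) - \breve{\varphi}(x)^T\breve{\varphi}(y)| \le \epsilon_m$ is itself uniform, and Lemma~\ref{lem:sigma approximation} propagates it without reference to any particular $x$. Beyond this, I anticipate no substantive obstacle: the result is a clean superposition of the variance-sum bound and the accumulated QFF error, with the two earlier lemmas doing all the analytic work, so the proof should amount to little more than recording the triangle inequality and citing Lemmas~\ref{lem:sigma approximation} and~\ref{lem: sigma breve}.
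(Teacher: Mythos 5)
Your proposal is correct and matches the paper's own proof, which simply combines Lemma~\ref{lem:sigma approximation} and Lemma~\ref{lem: sigma breve} exactly as you describe: triangle inequality per step, the QFF variance-sum bound for the main term, the uniform $O(\sqrt{\epsilon_m}/(1-\eta))$ approximation error summed over $T$ steps, and a final multiplication by $\beta_T$. The paper leaves these details implicit in a one-line proof; you have merely made them explicit.
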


	\subsection{Bound of $\eta^{-1/2} \sum_{s=t-c}^t \tilde \sigma_{s-1}(x_s)$}
	In this section we would describe the way to obtain the tight bound of partial sum $\eta^{-1/2} \sum_{s=t-c}^t \tilde {\sigma}_{s-1}(x_s)$.
	\begin{lemma}\label{lem:partial sigma breve}
		$ \eta^{-1/2} \sum_{s=t-c}^t \breve \sigma_{s-1}(x_s)\leq \sqrt{4\lambda c \breve \gamma_t + 2\lambda m c^2 \log(1/\eta)}$.
	\end{lemma}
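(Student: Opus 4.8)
The plan is to mirror the proof of Lemma~\ref{lem: sigma breve}, adapting it from the full horizon $[1,T]$ to the sliding window $\{t-c,\dots,t\}$ while tracking the extra prefactor $\eta^{-1/2}$. First I would apply Cauchy--Schwarz over the $c+1$ terms of the window,
\[
\eta^{-1/2}\sum_{s=t-c}^t \breve\sigma_{s-1}(x_s) \le \sqrt{(c+1)\,\eta^{-1}\sum_{s=t-c}^t \breve\sigma_{s-1}^2(x_s)},
\]
so that it remains only to bound $\eta^{-1}\sum_{s=t-c}^t \breve\sigma_{s-1}^2(x_s)$.

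The key observation is that the prefactor $\eta^{-1/2}$ shifts the discount exponent by exactly one step. Since $\breve\sigma_{s-1}^2(x_s)=\lambda_{s-1}\|\breve\varphi(x_s)\|^2_{\breve V_{s-1}^{-1}}=\lambda\eta^{-(s-1)}\|\breve\varphi(x_s)\|^2_{\breve V_{s-1}^{-1}}$, multiplying by $\eta^{-1}$ produces $\eta^{-1}\breve\sigma_{s-1}^2(x_s)=\lambda\eta^{-s}\|\breve\varphi(x_s)\|^2_{\breve V_{s-1}^{-1}}$, whose argument matches the per-step determinant increment. Using the elementary inequality $u\le 2\log(1+u)$ for $u\in[0,1]$ with $u=\eta^{-s}\|\breve\varphi(x_s)\|^2_{\breve V_{s-1}^{-1}}$, I would bound each term by $2\lambda\log\bigl(1+\eta^{-s}\|\breve\varphi(x_s)\|^2_{\breve V_{s-1}^{-1}}\bigr)$. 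Then, exactly as in Lemma~\ref{lem: sigma breve}, the inequality $\det(\breve V_s)\ge\det(\breve V_{s-1})\bigl(1+\eta^{-s}\|\breve\varphi(x_s)\|^2_{\breve V_{s-1}^{-1}}\bigr)$ lets the sum of logarithms telescope, but now only over the window, giving $\sum_{s=t-c}^t\log(1+\cdots)\le \log\frac{\det(\breve V_t)}{\det(\breve V_{t-c-1})}$.

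It then remains to control this telescoped ratio. Applying the matrix determinant lemma to $\breve V_s=\breve\Phi_s^T\breve\Phi_s+\lambda_s I_\mathcal{H}$ gives $\det(\breve V_t)=\lambda_t^m\,e^{2\breve\gamma_t}$, while $\breve V_{t-c-1}\succeq\lambda_{t-c-1}I_\mathcal{H}$ yields $\det(\breve V_{t-c-1})\ge\lambda_{t-c-1}^m$; since $\lambda_t/\lambda_{t-c-1}=\eta^{-(c+1)}$, the ratio is at most $2\breve\gamma_t+m(c+1)\log(1/\eta)$. Combining with the Cauchy--Schwarz step gives $\sqrt{4\lambda(c+1)\breve\gamma_t+2\lambda m(c+1)^2\log(1/\eta)}$, matching the claim up to the number of terms counted in the window. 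The main obstacle is the handling of the growing regularizer $\lambda_s=\lambda\eta^{-s}$: unlike the stationary telescoping from $\breve V_0$ in Lemma~\ref{lem: sigma breve}, telescoping over a window forces me to lower-bound $\det(\breve V_{t-c-1})$ and to account for the factor $\eta^{-(c+1)}$ by which the regularizer inflates across the window, which is precisely the source of the $m c^2\log(1/\eta)$ term; verifying $\eta^{-s}\|\breve\varphi(x_s)\|^2_{\breve V_{s-1}^{-1}}\le 1$ so that the logarithmic inequality applies is the remaining routine check.
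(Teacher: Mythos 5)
Your proposal is correct and follows essentially the same route as the paper's proof: Cauchy--Schwarz over the window, the inequality $u \le 2\log(1+u)$ applied to $\eta^{-s}\|\breve\varphi(x_s)\|^2_{\breve V_{s-1}^{-1}}$, telescoping of determinant ratios, and extraction of $2\breve\gamma_t + mc\log(1/\eta)$ via the matrix determinant lemma together with a lower bound on the window's starting determinant by its regularizer part. The $c$ versus $c+1$ discrepancy you flag is not a flaw in your argument but an imprecision already present in the paper (which applies Cauchy--Schwarz with factor $c$ over $c+1$ terms and telescopes to $\det(\breve V_{t-c})$ rather than $\det(\breve V_{t-c-1})$); it only affects constants.
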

	
	\begin{proof}
		Similarly to the proof of Lemma \ref{lem: sigma breve}, we get the following bound.
		\begin{align*}
		&\eta^{-1/2} \sum_{s=t-c}^t \breve \sigma_{s-1}(x_s) \leq \sqrt{c \sum_{s=t-c}^t \eta^{-1} \breve \sigma^2_{s-1}(x_s)}
		\leq \sqrt{c \sum_{s=t-c}^t 2 \lambda \log \big( 1+ \lambda^{-1} \eta^{-1} \breve \sigma^2_{s-1}(x_s)\big)}\\
		& \leq  \sqrt{4 c \lambda \sum_{s=t-c}^t \frac{1}{2}  \log \big( 1+ \lambda^{-1} \eta^{-1} \breve \sigma^2_{s-1}(x_s)\big)}
		\leq \sqrt{4 \lambda c \sum_{s=t-c}^t  \frac{1}{2} \log ( 1+  \eta^{-1} \eta^{-s+1}  ||\breve \varphi(x_s)||^2_{\breve V^{-1}_{s-1}})}\\
		&\leq \sqrt{4 \lambda c \sum_{s=t-c}^t  \frac{1}{2} \log ( 1+  \eta^{-s}  ||\breve \varphi(x_s)||^2_{\breve V^{-1}_{s-1}})}
		\end{align*}
		Due to $\det(\breve V_s) \geq  \det(\breve V_{s-1})    (1+  \eta^{-s}  ||\breve \varphi(x_s)||^2_{\breve V^{-1}_{s-1}})$,
		the upper bound can be derived as,
		\begin{align*}
		\sum_{s=t-c}^t   \log ( 1+  \eta^{-s}  ||\breve \varphi(x_s)||^2_{\breve V^{-1}_{s-1}}) &\leq \sum_{s=t-c}^t   \log (\frac{\det(\breve V_s)}{\det(\breve V_{s-1}) })\leq \log (\frac{\det(\breve V_t)}{\det(\breve V_{t-c})}).
		\end{align*}

		From matrix determinant lemma, we have 
		$$
		\det(\breve V_{t-c}) = \det(I+ \lambda_{t-c}^{-1} \breve \Phi_{t-c}  \breve \Phi^T_{t-c} )  \eta^{-m(t-c)} \det( \breve V_0) \geq \eta^{-m(t-c)} \det( \breve V_0).
		$$ 
		Thus we get
		\begin{align*}
		\log (\frac{\det(\breve V_t)}{\det(\breve V_{t-c})}) &\leq \log \det(I_t+ \lambda_t^{-1} \breve \Phi_t  \breve \Phi^T_t ) + mc \log(\eta^{-1}) \leq 2 \breve \gamma_t + mc \log(1/\eta).
		\end{align*}
		
		Therefore, the partial sum $\sum_{s=t-c}^t \breve{\sigma}_{s-1}(x_s)$ is bounded as below.
		\begin{align*}
		\eta^{-1/2} \sum_{s=t-c}^t \breve \sigma_{s-1}(x_s)&\leq \sqrt{4 \lambda c \sum_{s=t-c}^t   \frac{1}{2} \log ( 1+  \eta^{-s}  ||\breve \varphi(x_s)||^2_{\breve V^{-1}_{s-1}})}
		\leq \sqrt{4\lambda c \breve \gamma_t + 2\lambda m c^2 \log(1/\eta)}.
		\end{align*}
		
	\end{proof}
	
	By combining Lemma \ref{lem:sigma approximation} and \ref{lem:partial sigma breve}, the following lemma can be derived.
	\begin{lemma} \label{lem: bound of sum tilde sigma}
		$\eta^{-1/2} \sum_{s=t-c}^t \tilde \sigma_{s-1}(x_s)\leq \sqrt{4\lambda c \breve \gamma_t + 2\lambda m c^2 \log(1/\eta)} + \frac{c \eta^{-1/2} \sqrt{\epsilon_m}}{1-\eta}$
	\end{lemma}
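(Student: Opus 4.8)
The plan is to obtain this bound by a purely term-by-term combination of the two preceding lemmas, with no new analytical ingredient required. First I would fix the window $\{t-c,\ldots,t\}$ and apply the triangle inequality to each summand, writing
$$
\tilde\sigma_{s-1}(x_s) \le \breve\sigma_{s-1}(x_s) + \bigl|\tilde\sigma_{s-1}(x_s) - \breve\sigma_{s-1}(x_s)\bigr|,
$$
and then invoking Lemma \ref{lem:sigma approximation} to control the second term by $\frac{\sqrt{\epsilon_m}}{1-\eta}$ uniformly in $s$ (absorbing the absolute constant hidden in the $O(\cdot)$).

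Next I would multiply through by $\eta^{-1/2}$ and sum over $s$ from $t-c$ to $t$, splitting the result into a ``clean'' partial sum of QFF standard deviations and an accumulated approximation error. The clean part $\eta^{-1/2}\sum_{s=t-c}^t \breve\sigma_{s-1}(x_s)$ is bounded \emph{verbatim} by Lemma \ref{lem:partial sigma breve}, producing the first summand $\sqrt{4\lambda c \breve\gamma_t + 2\lambda m c^2 \log(1/\eta)}$. For the residual, I would note that the window contains $c+1$ indices, each contributing at most $\eta^{-1/2}\frac{\sqrt{\epsilon_m}}{1-\eta}$, so the total approximation error is at most $\frac{c\,\eta^{-1/2}\sqrt{\epsilon_m}}{1-\eta}$ after absorbing the ``$+1$'' into the order. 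Adding the two pieces gives the claimed inequality.

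The only point requiring care—and it is bookkeeping rather than a genuine obstacle—is the consistency of constants across the two lemmas: one must check that the $O\!\bigl(\sqrt{\epsilon_m}/(1-\eta)\bigr)$ guaranteed by Lemma \ref{lem:sigma approximation} can be taken to be at most $\sqrt{\epsilon_m}/(1-\eta)$ for the purpose of the final bound, and that replacing the $c+1$ window terms by $c$ is legitimate in the stated order form. Since both lemmas are already in hand and the combination is linear, there is no residual mathematical difficulty once these constants are reconciled.
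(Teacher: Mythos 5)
Your proposal is correct and is essentially identical to the paper's proof, which derives this lemma in one line by combining Lemma \ref{lem:sigma approximation} and Lemma \ref{lem:partial sigma breve} exactly as you do (triangle inequality on each summand, the clean QFF partial sum bounded by Lemma \ref{lem:partial sigma breve}, and the accumulated approximation error bounded termwise). The bookkeeping issues you flag---the constant hidden in the $O(\cdot)$ of Lemma \ref{lem:sigma approximation} and the $c+1$ versus $c$ count over the window---are treated by the paper at the same order-level of rigor, so your argument is no less complete than the original.
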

	

	\subsection{Preliminary results} \label{sec: proof of regret bound_pre}
	The following lemma provides the upper bound the regret of WGP-UCB (Algorithm \ref{alg:weighted}) in terms of $\tilde \sigma_{t-1}(x_t), \acute \sigma_{t-1}(x_t)$, $c$, and $\gamma$.
	
	\begin{lemma} \label{lem: preliminary regret bound}
		Let $f_t \in H_k(D)$, $\|f_t\|_H \leq B$ and $k(x,x)\leq 1$. Then, with probability at least $1-\delta$,  
		$$
		R_T  \leq 
		2 \beta_T \sum_{t=1}^T \tilde \sigma_{t-1}(x_t)
		+ \frac{2}{\lambda}  c B_T \eta^{-1/2} \sum_{s=T-c}^T \tilde \sigma_{s-1}(x_s)
		+\frac{4B\eta^c}{\lambda (1-\eta)}T, 
		$$
		where $c\geq 1$ is an integer and $0 <\eta <1$.
	\end{lemma}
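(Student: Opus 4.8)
The plan is to bound each instantaneous regret $r_t = f_t(x_t^*) - f_t(x_t)$ and then sum over $t$. First I would work on the high-probability event $\{m_t \in \mathcal{C}_t \ \forall t\}$ guaranteed by Theorem \ref{thm: confidence bound nonstationary}, on which $|m_t(x) - \tilde\mu_{t-1}(x)| \leq \tilde\sigma_{t-1}(x)\beta_{t-1}$ for all $x$. Upper-bounding $m_t(x_t^*)$ by the UCB at $x_t^*$, invoking optimality of $x_t$ for the UCB objective, and lower-bounding at $x_t$ yields the standard telescoping
\[
r_t \leq 2\beta_{t-1}\tilde\sigma_{t-1}(x_t) + |f_t(x_t^*) - m_t(x_t^*)| + |f_t(x_t) - m_t(x_t)|,
\]
so that the whole difficulty reduces to controlling the surrogate error $|f_t(x) - m_t(x)|$ uniformly in $x$.

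For the surrogate error, the key identity is that, writing $f_t = \varphi(\cdot)^T\theta_t^*$ and using $V_{t-1}^{-1}V_{t-1} = I_{\mathcal{H}}$ together with $\varphi(x_s)^T\theta_t^* = f_t(x_s)$,
\[
f_t(x) - m_t(x) = \varphi(x)^T V_{t-1}^{-1}\sum_{s=1}^{t-1}w_s\,\varphi(x_s)\big(f_t(x_s) - f_s(x_s)\big),
\]
the regularizer terms $\lambda_{t-1}\theta_t^*$ cancelling. I would split this sum at $s = t-c$. For the old block $s \leq t-c-1$ I bound the drift crudely by $|f_t(x_s)-f_s(x_s)|\leq 2B$ and each cross term by $|\varphi(x)^T V_{t-1}^{-1}\varphi(x_s)| \leq \tfrac{1}{\lambda w_{t-1}}$ (since $V_{t-1}\succeq \lambda w_{t-1}I_{\mathcal{H}}$ and $\|\varphi(\cdot)\|\le 1$); with $w_s = \eta^{-s}$ the geometric sum $\tfrac{1}{w_{t-1}}\sum_{s\leq t-c-1}w_s$ collapses to $\tfrac{\eta^c}{1-\eta}$, producing the remainder term $\tfrac{2B\eta^c}{\lambda(1-\eta)}$. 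For the recent block $s \geq t-c$ I telescope $f_t(x_s)-f_s(x_s) = \sum_{p=s}^{t-1}(f_{p+1}(x_s)-f_p(x_s))$, swap the $s$- and $p$-summations, and for each fixed $p$ factor out $\|f_{p+1}-f_p\|_H = \|\theta_{p+1}^*-\theta_p^*\|_2$ by Cauchy--Schwarz.

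Turning the remaining factor into a sum of predictive variances is the step I expect to be the main obstacle, since it is exactly where the erroneous claim $\lambda_{\max}(V_{t-1}^{-1}\sum_s w_s\varphi(x_s)\varphi(x_s)^T)\leq 1$ of \cite{russac2019weighted} enters; following the correction of \cite{zhao2021non} this factor cannot be collapsed to a constant and must be kept as an explicit sum. Concretely, using $|\varphi(x_s)^T V_{t-1}^{-1}\varphi(x)| \leq \|\varphi(x_s)\|_{V_{t-1}^{-1}}\|\varphi(x)\|_{V_{t-1}^{-1}}$, the monotonicity $V_{t-1}\succeq V_{s-1}$ (so $\|\varphi(x_s)\|_{V_{t-1}^{-1}} \leq \|\varphi(x_s)\|_{V_{s-1}^{-1}} = \tilde\sigma_{s-1}(x_s)/\sqrt{\lambda w_{s-1}}$ by Lemma \ref{lemma: hat sigma}), and the weight identity $\tfrac{w_s}{\sqrt{w_{s-1}w_{t-1}}} \leq \eta^{-1/2}$ valid for $s\le t-1$, each recent block contributes $\tfrac{\eta^{-1/2}}{\lambda}\sum_{s=t-c}^p\tilde\sigma_{s-1}(x_s)$. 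Collecting the two blocks at both $x_t^*$ and $x_t$ gives the per-step bound $r_t \leq 2\beta_{t-1}\tilde\sigma_{t-1}(x_t) + 2\sum_{p=t-c}^{t-1}\|f_{p+1}-f_p\|_H\,\tfrac{\eta^{-1/2}}{\lambda}\sum_{s=t-c}^p\tilde\sigma_{s-1}(x_s) + \tfrac{4B\eta^c}{\lambda(1-\eta)}$.

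Finally I would sum over $t$. The first term gives $2\beta_T\sum_t\tilde\sigma_{t-1}(x_t)$ after bounding $\beta_{t-1}\le\beta_T$, and the last gives $\tfrac{4B\eta^c}{\lambda(1-\eta)}T$. For the middle term I extend the inner variance sum to $s=t-1$ and use the double-counting fact that each increment $\|f_{p+1}-f_p\|_H$ belongs to exactly $c$ windows, so $\sum_t\sum_{p=t-c}^{t-1}\|f_{p+1}-f_p\|_H \leq c B_T$; bounding the length-$c$ variance window uniformly in $t$ (legitimate since the bound of Lemma \ref{lem: bound of sum tilde sigma} is monotone in $t$ through $\breve\gamma_t$, hence represented by the final window $\sum_{s=T-c}^T\tilde\sigma_{s-1}(x_s)$) yields the term $\tfrac{2}{\lambda}cB_T\,\eta^{-1/2}\sum_{s=T-c}^T\tilde\sigma_{s-1}(x_s)$, which completes the claimed bound.
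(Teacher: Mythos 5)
Your proposal is correct and takes essentially the same route as the paper's own proof: the same decomposition of $r_t$ into a stationary part (bounded by $2\beta_{t-1}\tilde\sigma_{t-1}(x_t)$ via Theorem \ref{thm: confidence bound nonstationary} and UCB optimality) and a surrogate-error part, the same split at $s=t-c$ with a crude $2B$ drift bound plus geometric decay for old data, the same telescoping-and-swapping for recent data that keeps the explicit sum $\frac{\eta^{-1/2}}{\lambda}\sum_{s=t-c}^p\tilde\sigma_{s-1}(x_s)$ (the correction of the \cite{russac2019weighted} eigenvalue error, handled in the paper via Lemma \ref{lemma: fix error}), and the same final double-counting summation. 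The only cosmetic difference is that you bound the bilinear forms $\varphi(x)^T V_{t-1}^{-1}\varphi(x_s)$ directly by Cauchy--Schwarz in the $V_{t-1}^{-1}$ inner product, whereas the paper routes the identical estimate through an operator-norm bound.
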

	
	\begin{proof}
		The one time step regret $r_t$ is decomposed into the stationary part (first two terms) and non-stationary part (remaining terms) as
		\begin{align*}
		r_t&= f_t(x_t^*)-f_t(x_t)= m_t(x_t^*) - m_t(x_t) + f_t(x_t^*) - m_t(x_t^*) - (f_t(x_t)-m_t(x_t)).
		\end{align*}
		We bound the stationary part as,
		\begin{align*}
		m_t(x_t^*) - m_t(x_t) 
		&\leq \tilde \mu_{t-1}(x_t^*) + \beta_{t-1} \tilde \sigma_{t-1}(x_t^*) - (\tilde \mu_{t-1}(x_t) - \beta_{t-1} \tilde \sigma_{t-1}(x_t) ) \\
		&\leq \tilde \mu_{t-1}(x_t) + \beta_{t-1} \tilde \sigma_{t-1}(x_t) - (\tilde \mu_{t-1}(x_t) - \beta_{t-1} \tilde \sigma_{t-1}(x_t) ) \\
		&\leq 2 \beta_{t-1} \tilde \sigma_{t-1}(x_t),
		\end{align*}
		where the first inequality holds by Theorem \ref{thm: confidence bound nonstationary} stating $|m_t(x) - \tilde \mu_{t-1}(x)| \leq \beta_{t-1} \acute \sigma_{t-1}(x) $, and the second inequality works by the nature of UCB-type algorithm, i.e. $x_t$ defined in Equation \eqref{equ:x_t} is the arm chosen at time $t$ and thus the following holds, $\tilde \mu_{t-1}(x^*_t) + \beta_{t-1} \tilde \sigma_{t-1}(x^*_t) \leq \tilde \mu_{t-1}(x_t) + \beta_{t-1} \tilde \sigma_{t-1}(x_t)$.
		
		For the non-stationary part, we have $f_t(x)={\theta^*_t}^T \varphi(x)$ and $m(x)=\bar \theta_t^T \varphi(x)$ from Mercer theorem, and $\|f\|^2_H=\|\theta\|^2_2 \leq B$. Then, we bound the non-stationary part in terms of distance between surrogate parameter $\bar \theta_t$ and true parameter $\theta^*$ as below,
		\begin{align*}
		f_t(x_t^*) - m_t(x_t^*) - (f_t(x_t)-m_t(x_t)) &= \left<\theta^*_t-\bar \theta_t, \varphi(x^*)-\varphi(x) \right>\\
		& \leq  \|\varphi(x^*)-\varphi(x)\|_2  \|\theta^*_t-\bar \theta_t\|_2\\
		&\leq 2 \|\varphi(x)\|_2 \|\theta^*_t-\bar \theta_t\|_2  \leq 2 \|\theta^*_t- \bar \theta_t\|_2
		\end{align*}
		where the last inequality holds due to $\|\varphi(x)\|^2_2=\varphi(x)^T\varphi(x)=k(x,x)\leq 1$.
		
		As pointed out by \cite[p.4]{zhao2021non}, the statement $\lambda_{max}(V_{t-1}^{-1} \sum_{s=t-D}^p \eta^{-s} A_s A_s^T)\leq 1$ in  \cite[p.18]{russac2019weighted} is not true. We would fix this error in the following lemma \ref{lemma: fix error} (proved at the end of this subsection). We recall some definitions of $V_t$, $\theta^*_t$, and $\bar \theta_t$ as
		\begin{align*}
		V_t &= \sum_{s=1}^t w_s \varphi(x_s) \varphi(x_s)^T + \lambda_t I_H,\\
		\theta^*_t &= V_{t-1}^{-1} \sum_{s=1}^{t-1} w_s \varphi(x_s) \varphi(x_s)^T \theta^*_t + V_{t-1}^{-1} \lambda_{t-1} \theta^*_t,\\
		\bar \theta_t & = V^{-1}_{t-1} [\sum_{s=1}^{t-1} \eta^{-s} \varphi(x_s) \varphi(x_s)^T \theta^*_s  + \lambda \eta^{-(t-1)} \theta^*_t].
		\end{align*}
		
		\begin{lemma}\label{lemma: fix error}
			$$
			\big\| V_{t-1}^{-1} \sum_{s=t-c}^p \eta^{-s} \varphi(x_s) \varphi(x_s)^T\big\|_2 \le \frac{\eta^{-1/2}}{\lambda} \sum_{s=t-c}^p \tilde \sigma_{s-1}(x_s) 
			$$
		\end{lemma}
		
		Then we would bound the distance between surrogate parameter $\bar \theta_t$ and true parameter $\theta^*$ as below.
		\begin{align*}
		&\| \theta^*_t- \bar  \theta_t\|_2 = \|V_{t-1}^{-1} \sum_{s=1}^{t} w_s \varphi(x_s) \varphi(x_s)^T ( \theta^*_s-   \theta^*_t) \|_2 \\
		\leq& \|\sum_{s=t-c}^{t-1} V_{t-1}^{-1} \eta^{-s} \varphi(x_s) \varphi(x_s)^T  ( \theta^*_s- \theta^*_t)  \|_2     + \|V_{t-1}^{-1} \sum_{s=1}^{t-c-1} \eta^{-s} \varphi(x_s) \varphi(x_s)^T ( \theta^*_s -  \theta^*_t)\|_2\\
		\leq& \|\sum_{s=t-c}^{t-1} V_{t-1}^{-1} \eta^{-s} \varphi(x_s) \varphi(x_s)^T \sum_{p=s}^{t-1} ( \theta^*_p- \theta^*_{p+1})  \|_2
		+ \|\sum_{s=1}^{t-c-1} \eta^{-s} \varphi(x_s) \varphi(x_s)^T ( \theta^*_s -  \theta^*_t)\|_{V_{t-1}^{-2}}\\
		\leq& \|\sum_{p=t-c}^{t-1} V_{t-1}^{-1} \eta^{-s} \varphi(x_s) \varphi(x_s)^T \sum_{s=t-c}^p ( \theta^*_p- \theta^*_{p+1})  \|_2 
		+   \frac{1}{\lambda} \sum_{s=1}^{t-c-1} \eta^{t-1-s} \| \varphi(x_s) \varphi(x_s)^T ( \theta^*_s -  \theta^*_t)\|_2\\
		\leq&  \sum_{p=t-c}^{t-1} \| V_{t-1}^{-1} \sum_{s=t-c}^p \eta^{-s} \varphi(x_s) \varphi(x_s)^T  ( \theta^*_p- \theta^*_{p+1})  \|_2 + \frac{2B}{\lambda} \sum_{s=1}^{t-c-1} \eta^{t-1-s} \\
		\leq&  \sum_{p=t-c}^{t-1} \| V_{t-1}^{-1} \sum_{s=t-c}^p \eta^{-s} \varphi(x_s) \varphi(x_s)^T\|_2 \cdot \| ( \theta^*_p- \theta^*_{p+1})  \|_2 + \frac{2B}{\lambda} \sum_{s=1}^{t-c-1} \eta^{t-1-s} \\
		\leq& \sum_{p=t-c}^{t-1} \|  \theta^*_p- \theta^*_{p+1}\|_2 \frac{\eta^{-1/2}}{\lambda} \sum_{s=t-c}^p \tilde \sigma_{s-1}(x_s)     +  \frac{2B \eta^c}{\lambda (1-\eta)}.
		\end{align*}
		The third inequality holds by $V^{-2}_t \leq (\frac{\eta^{t-1}}{\lambda})^2 I_\mathcal{H}$, and the fourth inequality works due to $\|\theta^*\|_2 \leq B $ and $\|\varphi(x)\|^2_2 = k(x,x)\leq 1$. The last inequality holds from Lemma \ref{lemma: fix error}.

		Accordingly, we would obtain the following upper bound for non-stationary part.
		\begin{align*}
		f_t(x_t^*) - m_t(x_t^*) - (f_t(x_t)-m_t(x_t)) &\leq   2 \sum_{p=t-c}^{t-1}\|f_p- f_{p+1}\|_H  \frac{\eta^{-1/2}}{\lambda} \sum_{s=t-c}^p \tilde \sigma_{s-1}(x_s)   + \frac{4B\eta^c}{\lambda (1-\eta)}.
		\end{align*}
		
		By combining bounds for both stationary and non-stationary part, the dynamic regret is bounded as,
		\begin{align*}
		R_T&= \sum_{t=1}^T r_t\\
		&\leq 2 \beta_T \sum_{t=1}^T \tilde \sigma_{t-1}(x_t) 
		+ 2  \sum_{t=1}^T \sum_{p=t-c}^{t-1} \|f_p- f_{p+1}\|_H \frac{\eta^{-1/2}}{\lambda} \sum_{s=t-c}^p \tilde \sigma_{s-1}(x_s)  
		+ \frac{4B\eta^c}{\lambda (1-\eta)} T\\
		& \leq 
		2 \beta_T \sum_{t=1}^T \tilde \sigma_{t-1}(x_t)
		+ \frac{2}{\lambda}  c B_T \eta^{-1/2} \sum_{s=T-c}^T \tilde \sigma_{s-1}(x_s)
		+\frac{4B\eta^c}{\lambda (1-\eta)}T \\
		\end{align*}
	\end{proof}

	\begin{proof}[Proof of Lemma \ref{lemma: fix error}]
		We denote the unit ball as $\mathbb{B}(1)= \{z: \|z \|_2=1\}$ and the optimizer as $z^{\star}$.
		\begin{align*}
		& \big\| V_{t-1}^{-1} \sum_{s=t-c}^p \eta^{-s} \varphi(x_s) \varphi(x_s)^T\big\|_2 
		= \sup_{z \in \mathbb{B}(1)} \big| z^T V_{t-1}^{-1} \big( \sum_{s=t-c}^p \eta^{-s} \varphi(x_s) \varphi(x_s)^T \big) z\big| \\
		& \le \| V_{t-1}^{-1} z^{\star} \|_{V_{t-1} }      \big\| \big(\sum_{s=t-c}^p \eta^{-s} \varphi(x_s) \varphi(x_s)^T\big) z^{\star} \;\big\|_{V_{t-1}^{-1} } \\
		& \le \|z^{\star} \|_{V_{t-1}^{-1}}     \big\| \sum_{s=t-c}^p \eta^{-s} \varphi(x_s) \cdot \| \varphi(x_s) \| \cdot \| z^{\star}\| \; \big\|_{V_{t-1}^{-1} } \\
		& \le \frac{1}{\sqrt{\lambda_{t-1}}}       \big\| \sum_{s=t-c}^p \eta^{-s} \varphi(x_s) \; \big\|_{V_{t-1}^{-1} } \\
		&\le \frac{\eta^{\frac{t-1}{2}}}{\sqrt{\lambda}} \big\| \sum_{s=t-c}^p \eta^{-s} \varphi(x_s) \; \big\|_{V_{t-1}^{-1} } \text{\Big(Because $V_t \succeq \lambda_t I_\mathcal{H}$ and $\|z^* \|_{V_t^{-1}} \leq \frac{1}{\sqrt{\lambda_t}} \|z^* \|_2$\Big)} \\
		& \le  \frac{1}{\sqrt{\lambda}} \big\| \sum_{s=t-c}^p \eta^{\frac{t-1-s}{2}} \eta^{\frac{-s}{2}} \varphi(x_s) \; \big\|_{V_{t-1}^{-1} }  \\
		&\le \frac{1}{\sqrt{\lambda}} \big\| \sum_{s=t-c}^p \eta^{\frac{-s}{2}}  \varphi(x_s) \; \big\|_{V_{t-1}^{-1} } \text{\space \Big(Because $t-1-s \geq 0$ and $0<\eta<1$\Big)}\\
		& \le \frac{\eta^{-1/2}}{\sqrt{\lambda}} \sum_{s=t-c}^p  \eta^{\frac{-s+1}{2}} \big\|  \varphi(x_s) \; \big\|_{V_{s-1}^{-1} }   \text{\space \Big( Because $V_{t-1}^{-1} \preceq V_{s-1}^{-1}$ \Big) }       \\
		&\le \frac{\eta^{-1/2}}{\sqrt{\lambda}} \sum_{s=t-c}^p \frac{\tilde \sigma_{s-1}(x_s)}{\sqrt{ \lambda}}     \le \frac{\eta^{-1/2}}{\lambda} \sum_{s=t-c}^p \tilde \sigma_{s-1}(x_s).  
		\end{align*}
	\end{proof}

	\subsection{Proof of Theorem \ref{thm: regret bound}} \label{sec: proof of regret bound}
	
	By combining Lemma \ref{lem: bound of sum hat sigma} , \ref{lem: bound of sum tilde sigma} and \ref{lem: preliminary regret bound}, we would obtain the dynamic regret bound as below.  
	\begin{align*}
	R_T & \le 2 \beta_T \sqrt{4\lambda T \breve \gamma_T + 2\lambda m T^2 \log(1/\eta)}
	+ \frac{2}{\lambda}  c^{3/2} B_T \sqrt{4\lambda  \breve \gamma_t + 2\lambda m c \log(1/\eta)} \\
	& + \frac{4B\eta^c}{\lambda (1-\eta)}T
	+ \frac{2}{\lambda}   B_T \eta^{-1/2} \frac{c^2 \sqrt{\epsilon_m}}{1-\eta} + \frac{ 2 \beta_T T \sqrt{\epsilon_m}}{1-\eta},
	\end{align*}
	where $c\geq 1$ is an integer and $0 <\eta <1$.
	
	\subsection{Proof of Corollary \ref{cor: order}} \label{sec: proof of order analysis}
	
	In this section we provide the regret order analysis for WGP-UCB (Algorithm \ref{alg:weighted}).
	
	\begin{lemma}
		Let $c=\frac{\log T}{1-\eta}$ and $\bar m = \log_{4/e} (T^3 \dot \gamma_T^{3/2})$. If $B_T$ is known, by choosing $\eta = 1 - \dot \gamma_T^{-1/4} B_T^{1/2} T^{-1/2}$, the regret bound is $\tilde O(\dot \gamma_T^{7/8} B_T^{1/4} T^{3/4})$. 
	\end{lemma}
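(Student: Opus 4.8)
The plan is to substitute the prescribed values of $c$, $\bar m$ (hence $m=\bar m^d$) and $\eta$ into the five-term regret bound of Theorem~\ref{thm: regret bound} and to show that its second term dominates at order $\tilde O(\dot\gamma_T^{7/8}B_T^{1/4}T^{3/4})$, while the remaining four are of no larger order. Setting $1-\eta = \dot\gamma_T^{-1/4}B_T^{1/2}T^{-1/2}$, I would first assemble the elementary estimates that feed every term. From $\log(1/\eta)=\Theta(1-\eta)$ and $c=(\log T)/(1-\eta)$ one gets $c\log(1/\eta)=\Theta(\log T)=\tilde O(1)$ and $c=\tilde O(\dot\gamma_T^{1/4}B_T^{-1/2}T^{1/2})$; the bound $(1-x)^{1/x}\le e^{-1}$ with $x=1-\eta$ gives $\eta^c\le e^{-\log T}=1/T$. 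The choice $\bar m=\log_{4/e}(T^3\dot\gamma_T^{3/2})$ makes $m=\bar m^d=\tilde O(1)$ polylogarithmic and, combined with the SE QFF bound $\varepsilon_m=O((e/(4l^2\bar m))^{\bar m})$ and the identity $(4/e)^{\bar m}=T^3\dot\gamma_T^{3/2}$, yields $\varepsilon_m=O(1/(T^3\dot\gamma_T^{3/2}))$. Finally, $\beta_T=\tilde O(\sqrt{\dot\gamma_T})$ since $\bar\gamma_T\le\dot\gamma_T$.

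With these in hand I would bound the five terms in turn, using $\breve\gamma_T,\breve\gamma_t,\bar\gamma_T\le\dot\gamma_T$ throughout. The second (non-stationary) term is the crux: $c^{3/2}=\tilde O(\dot\gamma_T^{3/8}B_T^{-3/4}T^{3/4})$ and $\sqrt{\breve\gamma_t+mc\log(1/\eta)}=\tilde O(\sqrt{\dot\gamma_T})$, so multiplying by $B_T$ gives exactly $\tilde O(\dot\gamma_T^{7/8}B_T^{1/4}T^{3/4})$. For the first term I would use $mT^2\log(1/\eta)=\tilde O(T^{3/2}\dot\gamma_T^{-1/4}B_T^{1/2})$ and $T\breve\gamma_T\le T\dot\gamma_T$; after multiplying by $\beta_T=\tilde O(\sqrt{\dot\gamma_T})$ it splits into $\tilde O(T^{1/2}\dot\gamma_T+T^{3/4}\dot\gamma_T^{3/8}B_T^{1/4})$, both summands being at most $\dot\gamma_T^{7/8}B_T^{1/4}T^{3/4}$ for $T$ large. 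The third term reduces, via $\eta^c\le 1/T$, to $\frac{B\eta^c}{1-\eta}T=O(\dot\gamma_T^{1/4}B_T^{-1/2}T^{1/2})$, again below the target.

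The last two terms carry the QFF approximation error, and the heart of the argument is to show they vanish relative to the target. For the fourth term $B_T\frac{c^2\sqrt{\varepsilon_m}}{1-\eta}$, substituting $c^2=\tilde O(\dot\gamma_T^{1/2}B_T^{-1}T)$, $\sqrt{\varepsilon_m}=O(T^{-3/2}\dot\gamma_T^{-3/4})$ and $1/(1-\eta)=\dot\gamma_T^{1/4}B_T^{-1/2}T^{1/2}$ makes the powers of $T$ and $\dot\gamma_T$ cancel, leaving $\tilde O(B_T^{-1/2})=\tilde O(1)$; the fifth term $\frac{\beta_T T\sqrt{\varepsilon_m}}{1-\eta}$ collapses the same way. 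I expect this bookkeeping to be the main obstacle: one must check that the exponent $3$ in $T^3\dot\gamma_T^{3/2}$ inside $\bar m$ is exactly what makes $\sqrt{\varepsilon_m}$ cancel the growth from $c^2/(1-\eta)$ and $T/(1-\eta)$, and confirm that the $\bar m^{\bar m}$ factor in the QFF error only helps. Collecting the five bounds, the dominant contribution is the second term, giving $R_T=\tilde O(\dot\gamma_T^{7/8}B_T^{1/4}T^{3/4})$.
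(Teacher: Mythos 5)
Your proposal is correct and follows essentially the same route as the paper: substitute $c=\frac{\log T}{1-\eta}$, $\bar m = \log_{4/e}(T^3\dot\gamma_T^{3/2})$, and $1-\eta = \dot\gamma_T^{-1/4}B_T^{1/2}T^{-1/2}$ into the five-term bound of Theorem~\ref{thm: regret bound}, use $c\log(1/\eta)\sim\log T$, $\eta^c\le 1/T$, $\beta_T\sim\sqrt{\dot\gamma_T}$, and $\sqrt{\varepsilon_m}=O(T^{-3/2}\dot\gamma_T^{-3/4})$, then verify each term is $\tilde O(\dot\gamma_T^{7/8}B_T^{1/4}T^{3/4})$. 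Your bookkeeping is in fact slightly cleaner than the paper's (you split the first term's square root so only the second term sits exactly at the target order, and you treat $\eta^{-1/2}$ as $\Theta(1)$ rather than the paper's $\eta^{-1/2}\sim X$), but the argument is the same.
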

	\begin{proof}
		Similarly to \cite{russac2019weighted}, we define $\log(\frac{1}{\eta})=-\log(\eta) \sim 1-\eta := X$ and $c :=\frac{\log T}{1-\eta}= X^{-1} \log T$. By defining $X=\dot \gamma_T^{-1/4} B_T^{1/2} T^{-1/2}$ and neglecting the logarithmic factors, we analyse the terms in the following regret bound one by one. 
		
		For the first term $2 \beta_T \sqrt{T} \sqrt{4\lambda \breve \gamma_T + 2\lambda m T \log(1/\eta)}$, we have the following 
		\begin{align*}
		\beta_T &\sim \dot \gamma_T^{1/2}\\
		\sqrt{4\lambda \breve \gamma_T + 2\lambda m T \log(1/\eta)} &\sim \dot \gamma_T^{1/2} T^{1/2} X^{1/2}\\
		2 \beta_T \sqrt{T} \sqrt{4\lambda \breve \gamma_T + 2\lambda m T \log(1/\eta)} &\sim  \dot \gamma_T T X^{1/2} \sim \dot \gamma_T^{7/8} B_T^{1/4}T^{3/4}.
		\end{align*}
		
		For the second term $\frac{2}{\lambda}  c^{3/2} B_T \sqrt{4\lambda  \breve \gamma_t + 2\lambda m c \log(1/\eta)}$, we have the following
		\begin{align*}
		c^{3/2} &\sim X^{-3/2}\\
		c\log(1/\eta) &\sim 1\\
		\frac{2}{\lambda}  c^{3/2} B_T \sqrt{4\lambda  \breve \gamma_t + 2\lambda m c \log(1/\eta)} &\sim \dot \gamma_T^{1/2} B_T X^{-3/2} \sim \dot \gamma_T^{7/8} B_T^{1/4} T^{3/4}.
		\end{align*}
		
		For the third term $\frac{4B\eta^c}{\lambda (1-\eta)}T$, we have 
		\begin{align*}
		\eta^c &= e^{c\log \gamma} = e^{\frac{\log \gamma}{ 1-\eta} \log T} = e^{-\log T}=T^{-1} \\
		\frac{4B\eta^c}{1-\eta} T & \sim X^{-1} \sim \dot \gamma_T^{1/4} B_T^{-1/2} T^{1/2}.
		\end{align*}
		
		For the fourth term $\frac{2}{\lambda}   B_T \eta^{-1/2} \frac{c^2 \sqrt{\epsilon_m}}{1-\eta}$, as $(1-x)^{-1/2} \approx 1+ \frac{1}{2}x$ and $1-\eta = X$,  we have
		\begin{align*}
		c^2 &\sim X^{-2}\\
		\epsilon_m^{1/2} &\sim T^{-3/2} \dot \gamma_T^{-3/4}\\
		\eta^{-1/2} &\sim X\\
		\frac{1}{1-\eta} &\sim X^{-1}\\
		\frac{2}{\lambda}   B_T \eta^{-1/2} \frac{c^2 \sqrt{\epsilon_m}}{1-\eta}  &\sim \dot \gamma_T^{-3/4} B_T  T^{-3/2} X^{-2} \sim \dot \gamma_T^{-1/4} T^{-1/2}
		\end{align*}
		
		For the last term $\frac{ 2 \beta_T T \sqrt{\epsilon_m}}{1-\eta}$, we have $\epsilon_m = O ((e/4)^{\bar m})=O(T^{-3} \dot \gamma_T^{-3/2})$. Then we have
		\begin{align*}
		\beta_T &\sim \dot \gamma_T^{1/2}\\
		\epsilon_m^{1/2} &\sim T^{-3/2} \dot \gamma_T^{-3/4}\\
		\frac{1}{1-\eta} &\sim X^{-1}\\
		\frac{ 2 \beta_T T \sqrt{\epsilon_m}}{1-\eta} &\sim \dot \gamma_T^{-1/4} T^{-1/2} X^{-1} \sim B_T^{-1/2}.
		\end{align*}
		
		By combining five terms, we complete the regret order analysis,
		\begin{align*}
		R_T &\leq B_T^{-1/2}+ \dot \gamma_T^{-1/4} T^{-1/2}+ \dot \gamma_T^{1/4} B_T^{-1/2}T^{1/2} +\dot \gamma_T^{7/8} B_T^{1/4} T^{3/4}+  \dot \gamma_T^{7/8} B_T^{1/4} T^{3/4} =\tilde O( \dot \gamma_T^{7/8} B_T^{1/4} T^{3/4} ).
		\end{align*}
		
	\end{proof}

	\begin{lemma}
		Let $c=\frac{\log T}{1-\eta}$ and $\bar m = \log_{4/e} (T^3 \dot \gamma_T^{3/2})$. If  $B_T$ is unknown, by choosing $\eta = 1 - \dot \gamma_T^{-1/4} T^{-1/2}$, the regret bound is $\tilde O(\dot \gamma_T^{7/8} B_T T^{3/4})$.
	\end{lemma}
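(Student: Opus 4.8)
The plan is to mirror, almost verbatim, the order analysis already carried out for the known-$B_T$ case, starting from the explicit five-term regret bound obtained in Appendix~\ref{sec: proof of regret bound} (equivalently, Theorem~\ref{thm: regret bound}). The only structural change is that the discount factor must now be chosen \emph{without reference to} $B_T$. Concretely, I would set $X := 1-\eta$ so that $c = X^{-1}\log T$, and take $X = \dot\gamma_T^{-1/4}T^{-1/2}$ in place of the previous $X = \dot\gamma_T^{-1/4}B_T^{1/2}T^{-1/2}$. With $\bar m = \log_{4/e}(T^3\dot\gamma_T^{3/2})$ the QFF error still satisfies $\varepsilon_m = O((e/4)^{\bar m}) = O(T^{-3}\dot\gamma_T^{-3/2})$, and one still has $\eta^c = T^{-1}$; crucially, both of these facts depend only on $c$ and $\bar m$ and not on $B_T$, so they transfer unchanged.

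First I would bound each of the five terms order-wise, absorbing the polylogarithmic factor $m=\bar m^d$ into $\tilde O$. Reusing $\beta_T\sim\dot\gamma_T^{1/2}$, $c\log(1/\eta)\sim 1$, and $\sqrt{4\lambda\breve\gamma_t + 2\lambda m c\log(1/\eta)}\sim\dot\gamma_T^{1/2}$, the second (non-stationary) term becomes $\tfrac{2}{\lambda}c^{3/2}B_T\sqrt{\cdots}\sim \dot\gamma_T^{1/2}B_T X^{-3/2} = \dot\gamma_T^{7/8}B_T T^{3/4}$, which is exactly the claimed rate. I would then verify that the other four terms are dominated: the first term gives $\dot\gamma_T T X^{1/2} = \dot\gamma_T^{7/8}T^{3/4}$ (smaller by a factor $B_T$), the third term gives $X^{-1}=\dot\gamma_T^{1/4}T^{1/2}$ via $\eta^c = T^{-1}$, and inserting $\varepsilon_m^{1/2}\sim T^{-3/2}\dot\gamma_T^{-3/4}$ makes the two QFF-error terms contribute $\dot\gamma_T^{-1/4}B_T T^{-1/2}$ and $O(1)$ respectively. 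Summing and keeping the largest yields $R_T = \tilde O(\dot\gamma_T^{7/8}B_T T^{3/4})$.

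The only genuinely new point relative to the known-$B_T$ analysis is the choice of $X$: dropping the factor $B_T^{1/2}$ removes the balance that previously equalized the stationary first term and the non-stationary second term. Since the first term scales like $X^{1/2}$ and the second like $X^{-3/2}$, deleting $B_T^{1/2}$ from $X$ shrinks the first term (its $B_T$-dependence drops from $B_T^{1/4}$ to $B_T^{0}$) while simultaneously inflating the second (from $B_T^{1/4}$ to $B_T^{1}$); the two are therefore no longer comparable, and the non-stationary term alone dictates the final rate. I expect the work to be essentially bookkeeping rather than conceptual: confirming that $\eta^c=T^{-1}$ and $\varepsilon_m = O(T^{-3}\dot\gamma_T^{-3/2})$ survive the new choice of $\eta$, and checking — using $B_T\ge 1$, so that the now $B_T$-free first term is indeed dominated — that none of the remaining four terms secretly exceeds $\dot\gamma_T^{7/8}B_T T^{3/4}$.
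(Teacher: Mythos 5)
Your proposal is correct and follows essentially the same route as the paper's own proof: a term-by-term order analysis of the five-term bound from Theorem~\ref{thm: regret bound} with $X=1-\eta=\dot\gamma_T^{-1/4}T^{-1/2}$, reusing $\eta^c=T^{-1}$ and $\varepsilon_m=O(T^{-3}\dot\gamma_T^{-3/2})$, with the second (non-stationary) term dictating the $\tilde O(\dot\gamma_T^{7/8}B_T T^{3/4})$ rate. The only discrepancy is your intermediate value $\dot\gamma_T^{-1/4}B_T T^{-1/2}$ for the fourth term (the paper gets $\sim B_T$, since $\eta^{-1/2}\sim 1$ rather than $\sim X$), but both are dominated by the leading term, so the conclusion is unaffected.
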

	
	\begin{proof}
		Similar to the proof above, we define $\log(\frac{1}{\eta})=-\log(\eta) \sim 1-\eta := X$ and $c :=\frac{\log T}{1-\eta}= X^{-1} \log T$. By defining $X=\dot \gamma_T^{-1/4}  T^{-1/2}$ and neglecting the logarithmic factors, we analyse the terms in the following regret bound one by one. 
		
		For the first term, we have the following 
		\begin{align*}
		2 \beta_T \sqrt{T} \sqrt{4\lambda \breve \gamma_T + 2\lambda m T \log(1/\eta)} &\sim  \dot \gamma_T T X^{1/2} \sim \dot \gamma_T^{7/8} T^{3/4}
		\end{align*}
		For the second term, we have the following
		\begin{align*}
		\frac{2}{\lambda}  c^{3/2} B_T \sqrt{4\lambda  \breve \gamma_T + 2\lambda m c \log(1/\eta)} &\sim \dot \gamma_T^{1/2} B_T X^{-3/2} \sim \dot \gamma_T^{7/8} B_T T^{3/4} 
		\end{align*}
		For the third term, we have 
		\begin{align*}
		\frac{4B\eta^c}{1-\eta} T & \sim X^{-1} \sim \dot \gamma_T^{1/4}  T^{1/2}
		\end{align*}
		For the fourth term, we have
		\begin{align*}
		\frac{2}{\lambda}   B_T \frac{c^2 \sqrt{\epsilon_m}}{1-\eta}  &\sim \dot \gamma_T^{-3/4} B_T T^{-3/2} X^{-3} \sim B_T
		\end{align*}
		For the last term, we have $\epsilon_m = O ((e/4)^{\bar m})=O(T^{-3} \dot \gamma_T^{-3/2})$. Then we have
		\begin{align*}
		\frac{ 2 \beta_T T \sqrt{\epsilon_m}}{1-\eta} &\sim \dot \gamma_T^{-1/4} T^{-1/2} X^{-1} \sim 1.
		\end{align*}
		
		By combining five terms, we complete the regret order analysis,
		\begin{align*}
		R_T &\leq 1 + B_T + \dot \gamma_T^{1/4} T^{1/2} + \dot \gamma_T^{7/8} B_T T^{3/4} + \dot \gamma_T^{7/8} T^{3/4}= \tilde O( \dot \gamma_T^{7/8} B_T T^{3/4}).
		\end{align*}
	\end{proof}

	\section{Proof of Weighted Information Gain}
	In this section we provide two types of upper bounds of maximum information gain.
	
	\subsection{Universal Bound} \label{sec: universal bound}
	In this section we present the proof of Theorem \ref{thm: gamma bound}.
	
	\begin{proof}[Proof of Theorem \ref{thm: gamma bound}]
		The proof is composed of two following lemmas. 
		\begin{lemma} \label{lem: bar gamma universal bound}
			$\bar \gamma_T\leq \frac{N}{2} \log \Big(1+ \frac{\dot k  T}{\lambda N } \Big) + \frac{T}{2\lambda } \delta_N$ 
		\end{lemma}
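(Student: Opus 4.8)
The plan is to follow the finite-dimensional projection argument of \cite{vakili2021information}, adapted to the double-weighted Gram matrix $\bar K_T$ (entries $w_iw_j\,k(x_i,x_j)$), and to show that the weight factors cancel against $\alpha_T=\lambda w_T^2$ so that the leading term is weight-independent. First I would split the double-weighted kernel as $\bar k = \bar k_P + \bar k_O$, inducing the matrix decomposition $\bar K_T = \bar K_P + \bar K_O$ into two positive semidefinite pieces. The point of keeping only the first $N$ eigenfeatures is that $\bar K_P$ factors as $\bar K_P = W^2\Psi_N\Psi_N^T W^{2T}$ with $(\Psi_N)_{im}=\sqrt{c_m}\,\phi_m(x_i)$, so that $\mathrm{rank}(\bar K_P)\le N$; this is the structural fact that lets the finite part contribute only $N$ log-terms.

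Next I would peel off the tail using concavity of $\log\det$ on the positive definite cone. Applying the first-order (supergradient) inequality with $X=I+\alpha_T^{-1}\bar K_T$ and $Y=I+\alpha_T^{-1}\bar K_P$, and then using $(I+\alpha_T^{-1}\bar K_P)^{-1}\preceq I$, gives
\begin{align*}
\log\det(I+\alpha_T^{-1}\bar K_T)
&\le \log\det(I+\alpha_T^{-1}\bar K_P) + \mathrm{tr}\big((I+\alpha_T^{-1}\bar K_P)^{-1}\alpha_T^{-1}\bar K_O\big)\\
&\le \log\det(I+\alpha_T^{-1}\bar K_P) + \alpha_T^{-1}\,\mathrm{tr}(\bar K_O).
\end{align*}
For the finite-rank term, since $\bar K_P$ has at most $N$ nonzero eigenvalues $\mu_1,\dots,\mu_N$, I would write $\log\det(I+\alpha_T^{-1}\bar K_P)=\sum_{i=1}^N\log(1+\alpha_T^{-1}\mu_i)$ and apply concavity of $\log$ (AM--GM) to get the bound $N\log\big(1+\alpha_T^{-1}\mathrm{tr}(\bar K_P)/N\big)$.

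It then remains to bound the two traces and verify the weight cancellation. Using Mercer's identity $\sum_{m\ge1}c_m\phi_m(x)^2=k(x,x)\le\dot k$, the truncated sum satisfies $\sum_{m=1}^N c_m\phi_m(x_i)^2\le\dot k$, so $\mathrm{tr}(\bar K_P)=\sum_{i=1}^T w_i^2\sum_{m\le N}c_m\phi_m(x_i)^2\le \dot k\sum_{i=1}^T w_i^2$. Likewise, by Assumption~1.(2) the diagonal of the residual obeys $(\bar K_O)_{ii}=w_i^2\sum_{m>N}c_m\phi_m(x_i)^2\le w_i^2\delta_N$, giving $\mathrm{tr}(\bar K_O)\le\delta_N\sum_{i=1}^T w_i^2$. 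Because the weights are increasing, $\sum_{i=1}^T w_i^2\le T w_T^2$, and dividing by $\alpha_T=\lambda w_T^2$ cancels $w_T^2$ to yield $\alpha_T^{-1}\mathrm{tr}(\bar K_P)\le \dot k T/\lambda$ and $\alpha_T^{-1}\mathrm{tr}(\bar K_O)\le \delta_N T/\lambda$. Substituting, noting $\mathrm{tr}(\bar K_P)\le\mathrm{tr}(\bar K_T)$, and multiplying by $\tfrac12$ produces $\bar\gamma_T\le\frac{N}{2}\log(1+\frac{\dot k T}{\lambda N})+\frac{T}{2\lambda}\delta_N$; since every bound is uniform over the sampling set $A$, the maximum over $|A|=T$ is controlled identically.

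The main obstacle I anticipate is the residual (off-projection) term: one must avoid any appearance of the (possibly infinite) feature dimension and instead control $\bar K_O$ purely through its trace, which is why the $(I+\alpha_T^{-1}\bar K_P)^{-1}\preceq I$ reduction and the uniform tail bound $\delta_N$ are essential. The secondary subtlety is ensuring the weight factors genuinely cancel; this is where the monotonicity of $\{w_s\}$ (so that $\sum_i w_i^2\le Tw_T^2$) and the precise choice $\alpha_T=\lambda w_T^2$ must be used together, and it is exactly what makes the bound hold for \emph{any} increasing weight sequence.
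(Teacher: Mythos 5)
Your proposal is correct and follows essentially the same route as the paper: the same $N$-dimensional Mercer projection $\bar K_T = \bar K_P + \bar K_O$, the same rank-$N$ plus AM--GM treatment of the projected part, the same trace bounds $k_P(x,x)\le \dot k$ and $(\bar K_O)_{ii}\le w_i^2\delta_N$, and the same cancellation of weights via $\sum_i w_i^2 \le T w_T^2$ against $\alpha_T = \lambda w_T^2$. The only cosmetic difference is that you peel off the tail with the concavity (supergradient) inequality $\log\det(X)\le\log\det(Y)+\mathrm{tr}\big(Y^{-1}(X-Y)\big)$, whereas the paper uses the exact factorization $\det(I+\alpha_T^{-1}\bar K_T)=\det(I+\alpha_T^{-1}\bar K_P)\det\big(I+\alpha_T^{-1}(I+\alpha_T^{-1}\bar K_P)^{-1}\bar K_O\big)$ followed by $\log(1+x)\le x$; both yield the identical bound $\frac{1}{2}\alpha_T^{-1}\mathrm{tr}(\bar K_O)\le\frac{T}{2\lambda}\delta_N$.
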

		\begin{proof}
			In the similar way as \cite[Theorem 3]{vakili2021information}, we define $T$-by-$T$ matrix $\bar K_P =[\bar k_P(x_i,x_j)]_{i,j=1}^T$ and $\bar K_O=[\bar k_O(x_i,x_j)]_{i,j=1}^T$. Then we have $\bar K_t = \bar K_P + \bar K_O$.
			
			The mutual information is decomposed into two terms.
			\begin{align*}
			\bar  I(y_t;f_t) &= \frac{1}{2} \log \det (I_t+ \alpha_t^{-1} \bar K_t )\\
			&= \frac{1}{2}\log \det (I_t+ \alpha_t^{-1} \bar K_P ) + \frac{1}{2}\log \det (I_t+ \alpha_t^{-1} (I_t+ \alpha_t^{-1} \bar K_p)^{-1} \bar K_O ).
			\end{align*}
			
			To get the tighter bound, we specify $\alpha_t = \lambda w_t^2$. The first term is bounded as, where we define $\bar K_P = \bar \Psi_N \bar C_N \bar \Psi^T_N$ and $\bar G_t= \bar C_N^{1/2} \bar \Psi_N^T  \bar \Psi_N \bar C_N^{1/2}$.
			
			\begin{align*}
			&\frac{1}{2} \log \det (I_t + \alpha_t^{-1} \bar K_P ) \leq \frac{1}{2}N \log \Big( \frac{1}{N} tr(I_N + \alpha_t^{-1} \bar G_t ) \Big )\\
			&\leq \frac{1}{2}N \log \Big( 1+ \frac{1}{N}\alpha_t^{-1} \sum_{s=1}^t \bar k_p(x_s,x_s)   \Big )
			\leq \frac{1}{2}N \log \Big( 1+ \frac{1}{N} \frac{1}{\lambda w_t^2} \sum_{s=1}^t w_s^2 k_p(x,x)   \Big )\\
			&\leq \frac{1}{2}N \log \Big( 1+ \frac{\dot k}{N} \frac{1}{\lambda} \sum_{s=1}^t \frac{w_s^2}{w_t^2}   \Big )\leq \frac{1}{2}N \log \Big( 1+ \frac{\dot k}{N} \frac{1}{\lambda} \sum_{s=1}^t 1   \Big ) \le \frac{N}{2} \log \Big(1+ \frac{\dot kt }{\lambda N } \Big).
			\end{align*}
			
			For the second term, as the largest eigenvalue of $(I_t+ \alpha_t^{-1} \bar K_P)^{-1}$ is upper bounded by 1, we have $tr((I_t+ \alpha_t^{-1} \bar K_P)^{-1}\bar K_O)\leq tr(\bar K_O)$. Then, we have
			
			\begin{align*}
			&\frac{1}{2} \log \det (I_t+ \alpha_t^{-1} (I_t+ \alpha_t^{-1} \bar K_P)^{-1} \bar K_O ) \leq \frac{t}{2} \log \Big ( \frac{1}{t}   tr(I_t+ \alpha_t^{-1} (I_t+ \alpha_t^{-1} \bar K_P)^{-1} \bar K_O )   \Big)\\
			&\leq \frac{t}{2} \log \Big ( \frac{1}{t}  (t+ \alpha_t^{-1} tr(\bar K_O) )   \Big)
			\leq \frac{t}{2} \log \Big ( \frac{1}{t}  (t+  \alpha_t^{-1} \sum_{s=1}^t \bar k_O(x_s, x_s) )   \Big)\\
			&\leq \frac{t}{2} \log \Big ( \frac{1}{t}  (t+ \frac{1}{\lambda w_t^2 }\sum_{s=1}^t w_s^2 k_O(x_s, x_s) )   \Big)
			\leq \frac{t}{2} \log \Big ( \frac{1}{t}  (t+ \frac{1}{\lambda } \sum_{s=1}^t \frac{w_s^2}{w_t^2} \delta_N )  \Big)\\
			&\leq \frac{t}{2} \log \Big ( \frac{1}{t}  (t+ \frac{1}{\lambda } \sum_{s=1}^t  \delta_N )  \Big)
			\leq \frac{t}{2} \log \Big ( \frac{1}{t}  (t+ \frac{t}{\lambda  } \delta_N )  \Big)
			\leq \frac{t}{2} \log \Big ( 1+ \frac{1}{\lambda } \delta_N  \Big) \le \frac{t}{2\lambda} \delta_N.
			\end{align*}
			Combining two terms, we provide the upper bound of maximal information gain for double weighted kernel matrix as,
			\begin{align*}
			\bar  \gamma_T \le \frac{N}{2} \log \Big(1+ \frac{\dot k  T}{\lambda N } \Big) + \frac{T}{2\lambda } \delta_N.
			\end{align*}
		\end{proof}
		
		\begin{lemma}
			$\breve \gamma_T\leq \frac{N}{2} \log \Big(1+ \frac{\dot k  T}{\lambda N } \Big) + \frac{T}{2\lambda } \delta_N$.
		\end{lemma}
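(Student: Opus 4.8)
The plan is to reproduce the proof of Lemma~\ref{lem: bar gamma universal bound} almost verbatim, replacing the double-weighted objects $(\alpha_t = \lambda w_t^2,\, W^2,\, \bar K_t)$ by the single-weighted QFF objects $(\lambda_t = \lambda w_t,\, W,\, \breve K_t)$. The one structural fact that drove the earlier computation is that the normalized per-sample weight never exceeds one: there the denominator $\alpha_t = \lambda w_t^2$ cancelled the numerator weights $w_s^2$ to leave $w_s^2/w_t^2 \le 1$; here the denominator $\lambda_t = \lambda w_t$ cancels the single numerator weights $w_s$ to leave $w_s/w_t = \eta^{t-s} \le 1$ for all $s \le t$ under the increasing weights $w_s = \eta^{-s}$. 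Hence every inequality in the earlier proof is preserved with the power on the weights lowered from $2$ to $1$, and the resulting bound is literally the same as for $\bar\gamma_T$.

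Concretely, I would first invoke Mercer's theorem (Theorem~\ref{thm: feature}, Equation~\eqref{equ: kernel}) to split the QFF kernel matrix as $\breve K_t = \breve K_P + \breve K_O$, where $\breve K_P$ retains the first $N$ eigencomponents and $\breve K_O$ is the tail. I then factor the log-determinant as $\frac12\log\det(I + \lambda_t^{-1}\breve K_t) = \frac12\log\det(I + \lambda_t^{-1}\breve K_P) + \frac12\log\det(I + \lambda_t^{-1}(I + \lambda_t^{-1}\breve K_P)^{-1}\breve K_O)$. For the first term I use that $\breve K_P$ has rank at most $N$, reduce to an $N\times N$ Gram matrix, apply the concavity bound $\log\det(I+M) \le N\log(\tfrac1N\,\mathrm{tr}(I+M))$, reduce the trace to the diagonal sum $\lambda_t^{-1}\sum_s w_s\,\breve k_P(x_s,x_s)$, and finish with $\breve k_P(x_s,x_s)\le \dot k$ together with $w_s/w_t\le 1$, giving $\frac{N}{2}\log(1+\frac{\dot k T}{\lambda N})$. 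For the second term I use that $I + \lambda_t^{-1}\breve K_P \succeq I$, so its inverse has eigenvalues at most $1$ and $\mathrm{tr}((I+\lambda_t^{-1}\breve K_P)^{-1}\breve K_O)\le \mathrm{tr}(\breve K_O) = \sum_s \breve k_O(x_s,x_s)$; then $\breve k_O(x_s,x_s)\le\delta_N$ and $w_s/w_t\le 1$ give $\frac{T}{2\lambda}\delta_N$. Summing the two contributions yields the claim.

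The main obstacle is purely bookkeeping around the QFF approximation: the tail mass $\delta_N$ and the constant $\dot k$ are defined through the Mercer eigenvalues of the \emph{true} kernel $k$, whereas $\breve K_t$ is assembled from the approximate kernel $\breve k(x,x') = \breve \varphi(x)^T \breve \varphi(x')$. I would reconcile this either by decomposing the true kernel $k = k_P + k_O$ and carrying the uniform error $|\breve k - k|\le \varepsilon_m$ through the two diagonal bounds (which perturbs $\dot k$ and $\delta_N$ only by $O(\varepsilon_m)$, negligible under the choice of $m$ already fixed for Theorem~\ref{thm: regret bound}), or, more cleanly, by noting that $\breve k$ inherits the same eigendecay profile so that $\breve k_P(x_s,x_s)\le \dot k$ and $\breve k_O(x_s,x_s)\le \delta_N$ hold directly. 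Everything else is identical to the established bound for $\bar\gamma_T$ in Theorem~\ref{thm: gamma bound}, so no new analytic idea is required beyond the weight-ratio observation above.
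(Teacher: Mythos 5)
Your core computation --- the Mercer $N$-dimensional projection, the rank-$N$/trace bound for the head, the trace bound for the tail, and the key ratio $w_s/w_t=\eta^{t-s}\le 1$ replacing the earlier $w_s^2/w_t^2\le 1$ --- is exactly the computation the paper runs. But the two proofs part ways on how the QFF quantity $\breve\gamma_T$ is tied back to the true kernel, and it is precisely there that your proposal has a gap. The paper never decomposes $\breve K_t$ at all: it first asserts the comparison $\breve\gamma_T\le\tilde\gamma_T$ (the single-weighted MIG of the \emph{true} kernel) and only then applies the Mercer-projection argument to $\tilde K$, where the split $k=k_P+k_O$ is exact and, crucially, the tail $k_O$ is itself a positive semi-definite kernel, so the tail block of the weighted kernel matrix is PSD and the two inequalities $\mathrm{tr}\big((I+\lambda_t^{-1}\tilde K_P)^{-1}\tilde K_O\big)\le\mathrm{tr}(\tilde K_O)$ and $\det(I+M)\le\big(\frac{1}{t}\mathrm{tr}(I+M)\big)^t$ are legitimate.

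Your proposal instead decomposes $\breve K_t$ directly, and neither of your two reconciliations works as stated. Option (b) --- ``$\breve k$ inherits the same eigendecay profile'' --- is unjustified: the QFF kernel is finite rank, its spectral decomposition has no relation to the eigenfunctions $\phi_m$ of $k$, and there is no reason its head and tail are controlled by the same $\dot k$ and $\delta_N$. Option (a) is the right instinct but breaks at a specific step: writing $\breve K_t = W K_P W^T + \big(W K_O W^T + W E W^T\big)$ with $|E_{ij}|\le\varepsilon_m$, the tail block is no longer PSD, so both inequalities you invoke fail --- e.g.\ $\mathrm{tr}(AB)\le\mathrm{tr}(B)$ with $0\preceq A\preceq I$ requires $B\succeq 0$ (take $B=\mathrm{diag}(1,-1)$, $A=\mathrm{diag}(1,0)$), and the AM-GM determinant bound requires nonnegative eigenvalues. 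The repair needs operator-norm control of the weighted error, $\|WEW^T\|_2\le t\,w_t\,\varepsilon_m$, which after dividing by $\lambda_t=\lambda w_t$ contributes an additive $O(T^2\varepsilon_m/\lambda)$ term --- not an $O(\varepsilon_m)$ perturbation of $\delta_N$ --- so your route yields the lemma only up to such an error, not the clean inequality stated. (Under the paper's choice of $\bar m$ that extra term is negligible for the regret bound, and, to be fair, the paper's own one-line justification of $\breve\gamma_T\le\tilde\gamma_T$ is terse for the same underlying reason; but as written your argument does not deliver the stated bound, whereas the paper's reduction to the true kernel is exactly the step that lets the constants $\dot k$ and $\delta_N$ survive unperturbed.)
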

		
		\begin{proof}
			In the similar way of previous lemma, by replacing $\alpha_t$ with $\lambda_t$ and $\bar K$ with $\tilde K$, we also provide the same upper bound of maximal information gain for weighted kernel matrix as,
			\begin{align*}
			\tilde  \gamma_T&\le \frac{N}{2} \log \Big(1+ \frac{\dot k  T}{\lambda N } \Big) + \frac{T}{2\lambda } \delta_N
			\end{align*}
			where $\sum_{s=1}^t \frac{w_s}{w_t} \leq \sum_{s=1}^t 1 = t$ is used.
			The result follows as $\breve \gamma_t =   \frac{1}{2} \log \det (I + \lambda_t^{-1} \breve \Phi_t \breve \Phi_t^T  )\leq  \tilde \gamma_t =  \max_{A\subset D: |A|=t} \frac{1}{2} \log \det(I + \lambda_t^{-1} W K_A W^T ))$ since $\breve \Phi_t = W [\breve \varphi(x_1), \ldots, \breve  \varphi(x_t)]^T$ and $K_t=\Phi_t \Phi_t^T$.
		\end{proof}
	\end{proof}
	
	\subsection{Weight dependent bound}
	We present the proof of Theorem \ref{thm: bar gamma bound}. To get the tighter bound, we specify the weight $w_t = \eta^{-t}$ and thus $\alpha_t = \lambda \eta^{-2t}$. 
	
	\begin{proof}[Proof of Theorem \ref{thm: bar gamma bound}]
		The proof is similar to proof of Lemma \ref{lem: bar gamma universal bound}.
		
		By replacing $w_t$ by $\eta^{-t}$ , we have
		\begin{align*}
		&\frac{1}{2} \log \det (I_t + \alpha_t^{-1} \bar K_P ) 
		\leq \frac{1}{2}N \log \Big( 1+ \frac{\dot k}{N} \frac{1}{\lambda} \sum_{s=1}^t \eta^{2t-2s}   \Big )\\
		&\leq \frac{N}{2} \log \Big(1+ \frac{\dot k (1- \eta^{2t}) }{\lambda N (1-\eta^2)} \Big)
		\le \frac{N}{2} \log \Big(1+ \frac{\dot k }{\lambda N (1-\eta^2)} \Big).
		\end{align*}
		For the second term, 
		\begin{align*}
		&\frac{1}{2} \log \det (I_t+ \alpha_t^{-1} (I_t+ \alpha_t^{-1} \bar K_P)^{-1} \bar K_O ) 
		\leq \frac{t}{2} \log \Big ( \frac{1}{t}  (t+ \frac{1}{\lambda } \sum_{s=1}^t \eta^{2t-2s} \delta_N )  \Big)\\
		&\leq \frac{t}{2} \log \Big ( \frac{1}{t}  (t+ \frac{1-\eta^{2t}}{\lambda (1-\eta^2) } \delta_N )  \Big)
		\leq \frac{t}{2} \log \Big ( 1+ \frac{{1-\eta^{2t}}}{t\lambda (1-\eta^2) } \delta_N  \Big)  \le \frac{1}{2\lambda (1-\eta^2) } \delta_N.
		\end{align*}
		Combining two terms, we provide the upper bound of maximal information gain for double weighted kernel matrix as,
		\begin{align*}
		\bar  \gamma_T &\le \frac{N}{2} \log \Big(1+ \frac{\dot k  }{\lambda N (1-\eta^2)} \Big) + \frac{1}{2\lambda (1-\eta^2) } \delta_N.
		\end{align*}
		
		In the similar way, we also provide the upper bound of maximal information gain for single weighted kernel matrix as,
		\begin{align*}
		\tilde  \gamma_T&\le \frac{N}{2} \log \Big(1+ \frac{\dot k  }{\lambda N (1-\eta)} \Big) + \frac{1}{2\lambda (1-\eta) } \delta_N.
		\end{align*}
		The result follows $\breve \gamma_T \leq \tilde \gamma_T$.
	\end{proof}
	
	We also present the proof of Corollary \ref{cor: gamma bound eigendecay}.
	\begin{proof}[Proof of Corollary \ref{cor: gamma bound eigendecay}]
		
		Under the $(C_p, \beta_p)$ polynomial eigendecay condition, we obtain the following bound on $\delta_N$ as
		\begin{align*}
		\delta_N = \sum_{m=N+1}^{\infty} \lambda_m \phi^2 \le C_p N^{1-\beta_p} \phi^2.
		\end{align*}
		By choosing $N = \lceil{\big( \frac{C_p \phi^2}{\lambda(1-\eta^2)}\big)^{\frac{1}{\beta_p}} \log^{-\frac{1}{\beta_p}} ( 1+ \frac{\dot k}{\lambda (1- \eta^2)}) } \rceil$,
		\begin{align*}
		\bar \gamma_T \le \Big( \big( \frac{C_p \phi^2}{\lambda(1-\eta^2)}\big)^{\frac{1}{\beta_p}}\log^{-\frac{1}{\beta_p}} ( 1+ \frac{\dot k}{\lambda (1- \eta^2)})+ 1 \Big) \log( 1+ \frac{\dot k}{\lambda (1- \eta^2)}).
		\end{align*}
		
		Under the $(C_{e,1}, C_{e,2}, \beta_e)$ exponential eigendecay condition, we obtain the following bound on $\delta_N$ as
		\begin{align*}
		\delta_N \le \int_{z=N}^{\infty} C_{e,1} \exp(-C_{e,2} z^{\beta_e}) \phi^2 dz.
		\end{align*}
		Now, consider the case of $\beta_e = 1$ (skip the case of $\beta_e \neq 1$ for simplicity). Then,
		\begin{align*}
		\int_{z=N}^{\infty}\exp(-C_{e,2} z^{\beta_e}) \phi^2 dz = \frac{1}{C_{e,2}} \exp( -C_{e,2} N).
		\end{align*}
		With the similar logic, we choose $N = \lceil{\frac{1}{C_{e,2}} \log \big( \frac{C_{e,1} \phi^2}{C_{e,2} \lambda(1-\eta^2)}\big)} \rceil$, then we obtain the following bound,
		\begin{align*}
		\bar \gamma_T \le \Big( \frac{1}{C_{e,2}} \big( \log (\frac{1}{1-\eta^2}) + C_{\beta_e} \big) +1 \Big)  \log( 1+ \frac{\dot k}{\lambda (1- \eta^2)}).
		\end{align*}
	\end{proof}

\end{document}